\newcommand{\A}{\mathcal{A}} 
\newcommand{\R}{\mathbb{R}} 
\newcommand{\I}{\mathcal{I}} 
\newcommand{\E}[1]{\mathbb{E} \left[ #1 \right]} 
\renewcommand{\P}[1]{\mathbb{P} \left( #1 \right)} 
\newcommand{\ind}[1]{\mathbb{I}_{#1}} 
\newcommand{\F}{\mathcal{F}} 
\newcommand{\D}{\mathcal{D}} 
\newcommand{\calE}{\mathcal{E}} 
\newcommand{\G}{\mathcal{G}} 
\newcommand{\scS}{\mathcal{S}}
\newcommand{\scC}{\mathcal{C}}
\newcommand{\e}{\varepsilon}
\DeclareMathOperator*{\argmin}{arg\,min}
\newcommand{\polylog}{\operatorname{polylog}}
\DeclarePairedDelimiter{\abs}{|}{|}
\DeclarePairedDelimiter{\floor}{\lfloor}{\rfloor}
\DeclarePairedDelimiter{\ceil}{\lceil}{\rceil}
\newcommand{\outneigh}[2]{N^\textup{out}_{#1}(#2)}
\newcommand{\inneigh}[2]{N^\textup{in}_{#1}(#2)}
\newcommand{\Tbad}{T_\textup{bad}}
\newcommand{\half}{\tfrac{1}{2}}
\newcommand{\Pp}{P}
\newcommand{\Prob}{\mathbb{P}}
\newcommand{\Enb}{\mathbb{E}} 
\newcommand{\sumT}{\sum_{t=1}^T} 
\newcommand{\KL}{D_\textup{KL}}
\newcommand{\phattji}{\hat{p}_t(j,i)}
\newcommand{\phattij}{\hat{p}_t(i,j)}
\newcommand{\id}{\ind}
\newcommand{\idseeit}{\id{\{i \in \outneigh{G_t}{I_t}\} \cap \{i \in \outneigh{\hat{G}_t}{I_t}\}}} 
\newcommand{\ltil}{\tilde{\ell}}
\newcommand{\ltilt}{\tilde{\ell}_t}
\newcommand{\pji}{p(j, i)}
\newcommand{\sumK}{\sum_{i = 1}^K}
\newcommand{\lhat}{\hat{\ell}}
\newcommand{\lhatt}{\hat{\ell}_t}
\newcommand{\qti}{q_t(i)}
\newcommand{\ui}{u(i)}
\newcommand{\lbart}{\bar{\ell}_t}
\newcommand{\ptilt}{\tilde{p}_t}
\newcommand{\BADEVENT}{\overline{\kappa}}
\newcommand{\sumTt}{\sum_{t=2}^T}
\newcommand{\pmint}{p_{t}^{\min}}
\newcommand{\pmins}{p_{s}^{\min}}
\newcommand{\mye}{\mathrm{e}}
\newcommand{\sumTstop}{\sum_{t=1}^{\tstar}}
\newcommand{\sumTsto}{\sum_{t=2}^{\tstar}}
\newcommand{\tstar}{{t^{\star}}}
\newcommand{\logfactor}{60\ln(KT)}
\newcommand{\twicelogfactor}{120\ln(KT)}
\newcommand{\beste}{{\e^\star_\Lambda}}
\newcommand{\sumTweak}{\sum_{t=\tstar + 1}^{T}}
\newcommand{\sumTwea}{\sum_{t=\tstar + 1}^{T}}
\DeclareRobustCommand{\VAN}[3]{#2} 
\newcommand{\Bern}[1]{\operatorname{\textsf{Bern}}\left(#1\right)}
\newcommand{\UCBt}{\sqrt{\frac{2\tilde{p}_t(j,i)}{t-1}\ln(3K^2T^2)} + \frac{3}{t-1}\ln(3K^2T^2)}
\newcommand{\UCBthat}{\sqrt{\frac{2\phattji}{t-1}\ln(3K^2T^2)} + \frac{3}{t-1}\ln(3K^2T^2)}
\newcommand{\ptiltji}{\frac{1}{t-1}\sum_{s = 1}^{t-1}\ind{\{(j, i) \in E_s\}}}
\newcommand{\ftau}{\hat \tau}
\newcommand{\opte}{\e^*}
\newcommand{\optG}{\scG^*}
\newcommand{\lEtau}{E^+_{\tau}}
\newcommand{\sEtau}{E^-_{\tau}}
\newcommand{\epslEt}{E^+_{t}}
\newcommand{\epssEt}{E^-_{t}}
\newcommand{\walpha}{\alpha_\mathsf{w}}
\newcommand{\wdelta}{\delta_\mathsf{w}}
\newcommand{\expg}{\textup{\textsc{Exp3.G}}\xspace}
\newcommand{\round}{\textup{\textsc{RoundRobin}}\xspace}
\newcommand{\block}{\textup{\textsc{BlockReduction}}\xspace}
\newcommand{\explore}{\textup{\textsc{EdgeCatcher}}\xspace}
\newcommand{\etc}{\textup{\textsc{EdgeCatcher}}\xspace}
\newcommand{\br}{\textup{\textsc{BR}}\xspace}
\newcommand{\optimist}{\textup{\textsc{OptimisticThenCommitGraph}}\xspace}
\newcommand{\otc}{\textup{\textsc{otcG}}\xspace}
\DeclareMathOperator\suppop{supp}
\newcommand{\supp}[1]{\suppop(#1)}
\newcommand{\scG}{\mathcal{G}}
\newcommand{\te}{\tilde \e}
\newcommand{\ttau}{\tilde \tau}
\newcommand{\idseeitx}{X_t}
\renewcommand{\log}{\ln}
\renewcommand{\kappa}{\mathcal{K}}
\newcommand{\np}{\textsc{NP}}
\newcommand*\samethanks[1][\value{footnote}]{\footnotemark[#1]}
\theoremstyle{definition}
\newtheorem*{examplex*}{Example}
\newenvironment{example*}
  {\pushQED{\qed}\begin{examplex*}}
  {\popQED\end{examplex*}}
\newenvironment{example}
  {\pushQED{\qed}\examplex}
  {\popQED\endexamplex}
\theoremstyle{plain}
\newtheorem{lemma}{Lemma}
\newtheorem{theorem}{Theorem}
\newtheorem{definition}{Definition}
\title{Learning on the Edge: Online Learning with Stochastic Feedback Graphs}
\author{%
  Emmanuel Esposito\thanks{Equal contribution.} \\
  Dept.\ of Computer Science\\
  Università degli Studi di Milano, Italy\\
  \& Istituto Italiano di Tecnologia, Italy\\
  \texttt{emmanuel@emmanuelesposito.it} \\
  \And
  Federico Fusco\samethanks \\
  Dept.\ of Computer, Control \\
  and Management Engineering \\
  Sapienza Università di Roma, Italy \\
  \texttt{fuscof@diag.uniroma1.it} \\
  \And
  Dirk van der Hoeven\samethanks \\
  Dept.\ of Computer Science\\
  Università degli Studi di Milano, Italy\\
  \texttt{dirk@dirkvanderhoeven.com} \\
  \And
  Nicolò Cesa-Bianchi \\
  Dept.\ of Computer Science\\
  Università degli Studi di Milano, Italy\\
  \texttt{nicolo.cesa-bianchi@unimi.it} \\
}
\begin{document}

\maketitle
\setcounter{footnote}{0}

\begin{abstract}
    The framework of feedback graphs is a generalization of sequential decision-making with bandit or full information feedback. In this work, we study an extension where the directed feedback graph is stochastic, following a distribution similar to the classical Erdős-Rényi model. Specifically, in each round every edge in the graph is either realized or not with a distinct probability for each edge. We prove nearly optimal regret bounds of order $\min\bigl\{\min_{\e} \sqrt{(\alpha_\e/\e) T},\, \min_{\e} (\delta_\e/\e)^{1/3} T^{2/3}\bigr\}$ (ignoring logarithmic factors), where $\alpha_{\e}$ and $\delta_{\e}$ are graph-theoretic quantities measured on the support of the stochastic feedback graph $\scG$ with edge probabilities thresholded at~$\e$. Our result, which holds without any preliminary knowledge about $\scG$, requires the learner to observe only the realized out-neighborhood of the chosen action.
    When the learner is allowed to observe the realization of the entire graph (but only the losses in the out-neighborhood of the chosen action), we derive a more efficient algorithm featuring a dependence on weighted versions of the independence and weak domination numbers that exhibits improved bounds for some special cases.
\end{abstract}

\section{Introduction}

    In this work we study an online learning framework for decision-making with partial feedback. In each decision round, the learner chooses an action in a fixed set and is charged a loss. In our setting, the loss of any action in all decision rounds is preliminarily chosen by an adversary, but the feedback received by the learner at the end of each round $t$ is stochastic. More specifically, the loss of each action $i$ (including $I_t$, the one selected by the learner at round $t$) is independently observed with a certain probability $p(I_t,i)$, where the probabilities $p(i,j)$ for all pairs $i,j$ are fixed but unknown.
    
    This feedback model can be viewed as a stochastic version of the feedback graph model for online learning \citep{Mannor2011}, where the feedback received by the learner at the end of each round is determined by a directed graph defined over the set of actions. In this model, the learner deterministically observes the losses of all the actions in the out-neighborhood of the action selected in that round. In certain applications, however, deterministic feedback is not realistic. Consider for instance a sensor network for monitoring the environment, where the learner can decide which sensor to probe in order to maximize some performance measure. Each probed sensor may also receive readings of other sensors, but whether a sensor successfully transmits to another sensor depends on a number of environmental factors, which include the position of the two sensors, but also their internal state (e.g., battery levels) and the weather conditions. Due to the variability of some of these factors, the possibility of reading from another sensor can be naturally modeled as a stochastic event.
    
    Online learning with adversarial losses and stochastic feedback graphs has been studied before, but under fairly restrictive assumptions on the probabilities $p(i,j)$.
    Let $\scG$ be a stochastic feedback graph, represented by its probability matrix $p(i,j)$ for $i,j \in V$ where $V$ is the action set.
    When $p(i,j) = \e$ for all distinct $i,j\in V$ and for some $\e > 0$, then $\scG$ follows the Erdős-Rényi random graph model. Under the assumption that $\e$ is known and $p(i,i) = 1$ for all $i\in V$ (all self-loops occur w.p.\ $1$), \citet{AlonCGMMS17} show that the optimal regret after $T$ rounds is of order $\sqrt{T/\e}$, up to logarithmic factors. This result has been extended by \citet{Kocak2016}, who prove a regret bound of order $\sqrt{\sum_t (1/\e_t)}$ when the parameter $\e_t$ of the random graph is unknown and allowed to change over time. However, their result holds only under rather strong assumptions on the sequence $\e_t$ for $t \ge 1$. 
    In a recent work, \citet{Ghari2021} show a regret bound of order $(\alpha/\e)\sqrt{KT}$, ignoring logarithmic factors, when each (unknown) probability $p(i,j)$ in $\scG$ is either zero or at least $\e$ for some known $\e > 0$, and all self-loops $(i,i)$ have probability $p(i,i) \ge \e$. Here $\alpha$ is the independence number (computed ignoring edge orientations) of the support graph $\supp{\scG}$; i.e., the directed graph with adjacency matrix $A(i,j) = \ind{\{p(i,j) > 0\}}$.
    Their bound holds under the assumption that $\supp{\scG}$ is preliminarily known to the learner.
    
    Our analysis does away with a crucial assumption that was key to prove all previous results. Namely, we do not assume any special property of the matrix $\scG$, and we do not require the learner to have any preliminary knowledge of this matrix.
    The fact that positive edge probabilities are not bounded away from zero implies that the learner must choose a threshold $\e \in (0,1]$ below which the edges are deemed to be too rare to be exploitable for learning. If $\e$ is too small, then waiting for rare edges slows down learning. On the other hand, if $\e$ is too large, then the feedback becomes sparse and the regret increases.
    
    To formalize the intuition of rare edges, we introduce the notion of thresholded graph $\supp{\scG_{\e}}$ for any $\e > 0$. This is the directed graph with adjacency matrix $A(i,j) = \ind{\{p(i,j) \ge \e\}}$. As the thresholded graph is a deterministic feedback graph $G$, we can refer to \citet{AlonCDK15} for a characterization of minimax regret $R_T$ based on whether $G$ is not observable ($R_T$ of order $T$), weakly observable ($R_T$ of order $\delta^{1/3}T^{2/3}$), or strongly observable ($R_T$ of order $\sqrt{\alpha T}$).\footnote{All these rates ignore logarithmic factors.} Here $\alpha$ and $\delta$ are, respectively, the independence and the weak domination number of $G$; see \Cref{s:setting} for definitions.
    Let $\alpha_{\e}$ and $\delta_{\e}$ respectively denote the independence number and the weak domination number of $\supp{\scG_{\e}}$. As $\alpha_{\e}$ and $\delta_{\e}$ both grow when $\e$ gets larger, the ratios $\alpha_{\e}/\e$ and $\delta_{\e}/\e$ capture the trade-off involved in choosing $\e$. We define the optimal values for $\e$ as follows:
        \begin{align}
        \label{def:eps_s}    
            \e^*_s &= \argmin_{\e \in (0,1]} \left\{\frac{\alpha_{\e}}{\e} \,:\, \text{$\supp{\scG_{\e}}$ is strongly observable}\right\}, \\ 
        \label{def:eps_w}
            \e^*_w &= \argmin_{\e \in (0,1]} \left\{\frac{\delta_{\e}}{\e} \,:\, \text{$\supp{\scG_{\e}}$ is observable} \right\}.
        \end{align}
    We adopt the convention that the minimum of an empty set is infinity and the relative $\argmin$ is set to $0$. The $\argmin$ are well defined: there are at most $K^2$ values of $\e$ for which the support of $\scG_\e$ varies, and the minimum is attained in one of these values. 
    For simplicity, we let $\alpha^* = \alpha_{\e^*_s}$ and $\delta^* = \delta_{\e^*_w}$. Our first result can be informally stated as follows.
    \begin{theorem}[Informal]
    \label{th:informal}
    Consider the problem of online learning with an unknown stochastic feedback graph $\scG$ on $T$ time steps. 
    If $\supp{\scG_{\e}}$ is not 
    observable for $\e = \tilde\Theta(K^3/T)$, then any learning algorithm suffers regret linear in $T$.
    Otherwise, there exists an algorithm whose regret satisfies (ignoring polylog factors in $K$ and $T$)
    \[
        R_T
    \le
        \min\left\{
            \sqrt{\frac{\alpha^*}{\e_s^*}T}, 
            \left(\frac{\delta^*}{\e_w^*}\right)^{1/3} T^{2/3}
        \right\}.
    \]
    This bound is tight (up to polylog factors). 
    \end{theorem}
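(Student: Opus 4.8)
The plan is to split the statement into three largely independent pieces: a constructive upper bound matching the displayed expression, a hardness result yielding linear regret when $\supp{\scG_\e}$ is non-observable at the critical scale $\e=\tilde\Theta(K^3/T)$, and matching lower bounds certifying tightness. I would prove the three parts separately and then assemble them.

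For the upper bound I would use an explore-then-commit scheme (\optimist). First I would run an exploration subroutine (\etc) that repeatedly probes actions and records realized out-neighborhoods, producing empirical edge probabilities $\hat p(i,j)$ equipped with UCB-type confidence intervals. Because there are at most $K^2$ distinct thresholds at which $\supp{\scG_\e}$ can change, the learner enumerates them and, using the optimistic probability estimates, certifies for each candidate $\e$ whether the thresholded support is strongly observable or merely observable, and computes the predicted rate $\sqrt{(\alpha_\e/\e)T}$ or $(\delta_\e/\e)^{1/3}T^{2/3}$ accordingly; selecting the globally smallest predicted rate is exactly what produces the outer $\min$. After an exploration phase of length $\tstar$, the learner commits to running a feedback-graph algorithm (\expg) on the selected effective graph. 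In the committed phase every retained edge is realized with probability at least $\e$, so importance-weighting the observed losses by $1/\e$ turns the standard regret guarantee of \expg for deterministic feedback graphs into $\sqrt{(\alpha^*/\e^*_s)T}$ in the strongly observable regime and $(\delta^*/\e^*_w)^{1/3}T^{2/3}$ in the observable regime; keeping the cost of the first $\tstar$ rounds below the target rate is the delicate point I flag as the main obstacle below.

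For the hardness direction I would hide the optimal action $i^\star$ behind in-edges of probability just below $\e=\tilde\Theta(K^3/T)$, so that in the thresholded support $i^\star$ has no in-neighbor and the graph is non-observable. Since every edge into $i^\star$ carries probability below $\e$, the total feedback the learner can collect about $i^\star$ over the horizon is too scarce, relative to the number of actions it must simultaneously disambiguate, to identify $i^\star$ as optimal; a standard information-theoretic comparison (bounding the KL divergence between the observation processes of two instances that differ only in the loss of $i^\star$) then forces $\Omega(T)$ regret on one of them, and the calibration of the constant in $\tilde\Theta(K^3/T)$ is exactly what makes the sub-threshold edges collectively uninformative within $T$ rounds. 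Tightness of the positive part follows by lifting the deterministic feedback-graph lower bounds of \citet{AlonCDK15}: replacing each deterministic edge by a stochastic edge realized with probability $\e$ thins the per-round feedback by a factor $\e$, so the $\Omega(\sqrt{\alpha T})$ and $\Omega(\delta^{1/3}T^{2/3})$ bounds become $\Omega(\sqrt{(\alpha_\e/\e)T})$ and $\Omega((\delta_\e/\e)^{1/3}T^{2/3})$.

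The step I expect to be the main obstacle is the analysis of the exploration phase. The learner must estimate edge probabilities accurately enough to certify observability and to pick a near-optimal threshold, yet $\tstar$ must be short enough that the regret incurred during exploration does not dominate either the $\sqrt T$ or the $T^{2/3}$ target. This is delicate because small probabilities are intrinsically costly to estimate, so the confidence bookkeeping built on $\hat p(i,j)$ must be tight enough to neither prematurely discard a useful edge nor run exploration past the point where it can be amortized. Reconciling these two pressures simultaneously for both regimes, so that the final guarantee is the minimum of the two rates rather than their sum, is the technical crux.
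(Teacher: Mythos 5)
Your overall skeleton --- explore-then-commit with a data-dependent stopping rule, enumeration of the $O(K^2)$ candidate thresholds, a reduction to \expg in the commit phase, and lower bounds obtained by thinning the constructions of \citet{AlonCDK15,AlonCGMMS17} by a factor $\e$ inside the KL divergence --- matches the paper's proof of this theorem (the algorithm \etc, composed of \round and \block, with the stopping rule $\Phi(\hat\scG_\tau,T)\le \tau K$, i.e.\ ``stop as soon as the predicted committed-phase regret drops below the exploration cost already paid'', which is exactly the balancing act you flag as delicate). The lower-bound part of your sketch is essentially the paper's argument, and the paper likewise obtains the non-observable linear bound from the observable lower bounds evaluated at $\e=\tilde\Theta(K^3/T)$ rather than from a fresh construction, so either route is acceptable there.

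The genuine gap is in the commit phase. You propose to ``importance-weight the observed losses by $1/\e$'', but the estimator $\ell_t(i)\,\ind{\{(I_t,i)\in E_t\}}/\e$ has expectation $(p(I_t,i)/\e)\,\ell_t(i)$, which overestimates the loss by an unknown, edge-dependent factor in $[1,1/\e]$ whenever $p_e>\e$; replacing $\e$ by the empirical estimate $\hat p_e$ still leaves a per-arm multiplicative bias of constant order, and a multiplicative bias that differs across arms destroys the additive regret guarantee of \expg (one only obtains a bound of the form $\sum_t\langle\pi_t,\ell_t\rangle \le c_1 B + c_2\sum_t \ell_t(k^*)$, which can be $\Theta(T)$). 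Refining the estimates of $p_e$ during the commit phase is not available either, since in this feedback model the learner only sees out-neighborhoods of the (no longer round-robin) played actions. The missing idea is the paper's block reduction: play the same action for $\Delta=\Theta(\e^{-1}\ln(KT))$ consecutive rounds so that every retained edge realizes at least once per block with high probability, and estimate the block-averaged loss by $\hat c_\tau(a')=\sum_{t\in B_\tau}\ell_t(a')\,\ind{\{(a,a')\in E_t\}}/\Delta^\tau_{(a,a')}$, which is exactly unbiased conditional on at least one realization and, crucially, does not depend on the value of $p_e$ at all. This turns the problem into a deterministic-feedback-graph instance over $N=\lfloor T/\Delta\rfloor$ meta-rounds, and the resulting $\Delta\cdot R^\A_N$ bound yields precisely the rates $\sqrt{(\alpha^*/\e_s^*)T}$ and $(\delta^*/\e_w^*)^{1/3}T^{2/3}$ you are aiming for. (The UCB-based importance weighting you describe is what the paper uses in its second algorithm, but that one requires observing the entire realized graph $G_t$ at the end of each round.)
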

    This result shows that, without any preliminary knowledge of $\scG$, we can obtain a bound that optimally trades off between the strongly observable rate $\sqrt{({\alpha^*}/{\e_s^*})T}$, for the best threshold $\e$ for which $\supp{\scG_{\e}}$ is strongly observable, and the (weakly) observable rate $({\delta^*}/{\e_w^*})^{1/3} T^{2/3}$, for the best threshold $\e$ for which $\supp{\scG_{\e}}$ is (weakly) observable.
    Note that this result improves on \citet{Ghari2021} bound $({\alpha_\e}/{\e})\sqrt{KT}$, who additionally assume that $\supp{\scG_{\e}}$ and $\e$ (a lower bound on the self-loop probabilities) are both preliminarily available to the learner.
    On the other hand, the algorithm achieving the bound of \Cref{th:informal} need not receive any information (neither prior nor during the learning process) besides the stochastic feedback.
    
    We obtain positive results in \Cref{th:informal} via an elaborate reduction to online learning with deterministic feedback graphs. Our algorithm works in two phases: first, it learns the edge probabilities in a round-robin procedure, then it commits to a carefully chosen estimate of the feedback graph and feeds it to an algorithm for online learning with deterministic feedback graphs. There are two main technical challenges the algorithm faces: on the one hand, it needs to switch from the first to the second phase at the right time in order to achieve the optimal regret. On the other hand, in order for the reduction to work, it needs to simulate the behaviour of a deterministic feedback graph using only feedback from a stochastic feedback graph (with unknown edge probabilities).
    We complement the positive results in \Cref{th:informal} with matching lower bounds that are obtained by a suitable modification of the hard instances in \cite{AlonCDK15,AlonCGMMS17} so as to consider stochastic feedback graphs. 
    
    Our last result is an algorithm that, at the cost of an additional assumption on the feedback (i.e., the learner additionally observes the realization of the entire feedback graph at the end of each round), has regret which is never worse and may be considerably better than the regret of the algorithm in Theorem~\ref{th:informal}. While the bounds in Theorem~\ref{th:informal} are tight up to log factors, we show that the factors ${\alpha^*}/{\e_s^*}$ and ${\delta^*}/{\e_w^*}$ can be improved for specific feedback graphs. Specifically, we design weighted versions of the independence and weak domination numbers, where the weights of a given node depend on the probabilities of seeing the loss of that node. On the technical side, we design a new importance-weighted estimator which uses a particular version of upper confidence bound estimates of the edge probabilities $p(i,j)$, rather than the true edge probabilities, which are unknown. We show that the cost of using this estimator is of the same order as the regret bound achievable had we known $p(i,j)$. Additionally, the algorithm that obtains these improved bounds is more efficient than the algorithm of Theorem~\ref{th:informal}. The improvement in efficiency comes from the following idea: we start with an optimistic algorithm that assumes that the support of $\scG$ is strongly observable and only switches to the assumption that the support of $\scG$ is (weakly) observable when it estimates that the regret under this second assumption is smaller. The algorithm learns which regime is better by keeping track of a bound on the regret of the optimistic algorithm while simultaneously estimating the regret in the (weakly) observable case, which it can do efficiently.

    \paragraph{Additional related work.}
    The problem of adversarial online learning with feedback graphs was introduced by \citet{Mannor2011}, in the special case where all nodes in the feedback graph have self-loops.
    The results of \citet{AlonCDK15} (also based on prior work by \citet{AlonCCM2013,Kocak2014}) have been recently slightly improved by \citet{Chen21}, with tighter constants in the regret bound. Variants of the adversarial setting have been studied by \citet{FengL18,AroraMM19,RangiF19} and \citet{VanderHoeven2021beyond}, who study online learning with feedback graphs and switching costs and online multiclass classification with feedback graphs, respectively.  
    There is also a considerable amount of work in the stochastic setting \citep{LiuBS18,Cortes2019,Li2020}. 
    Finally, \citet{rouyer22bobw} and \citet{ito22bobw} independently designed different best-of-both-worlds learning algorithms achieving nearly optimal (up to polylogarithmic factors in $T$) regret bounds in the adversarial and stochastic settings.
    
    Following \citet{Mannor2011}, we can consider a more general scenario where the feedback graph is not fixed but changes over time, resulting in a sequence $G_1, \dots, G_T$ of feedback graphs.
    \cite{Cohen2016} study a setting where the graphs are adversarially chosen and only the local structure of the feedback graph is observed. They show that, if the losses are generated by an adversary and all nodes always have a self-loop, one cannot do better than $\sqrt{KT}$ regret, and we might as well simply employ a standard bandit algorithm. Furthermore, removing the guarantee on the self-loops induces an $\Omega(T)$ regret. In \Cref{sec:block}, we are in a similar situation, as we also observe only local information about the feedback graph and the losses are generated by an adversary. However, we show that if the graphs are stochastically generated with a strongly observable support for some threshold $\varepsilon$, there is a $\sqrt{\alpha T / \varepsilon}$ regret bound. As a consequence, for $\varepsilon$ not too small, observing only the local information about the feedback graphs is in fact sufficient to obtain better results than in the bandit setting. Similarly, if there are no self-loops in the support but the support is weakly observable, then our regret bounds are sublinear rather than linear in $T$. \cite{AlonCCM2013,AlonCGMMS17} and \cite{Kocak2014} also consider adversarially generated sequences $G_1,G_2,\dots$ of deterministic feedback graphs. In the case of directed feedback graphs, \cite{AlonCCM2013} investigate a model in which $G_t$ is revealed to the learner at the beginning of each round~$t$. \cite{AlonCGMMS17} and \cite{Kocak2014} extend this analysis to the case when $G_t$ is strongly observable and made available only at the end of each round $t$. In comparison, in our setting the graphs (or the local information about the graph) revealed to the learner (at the end of each round) may not even be observable, let alone strongly observable. Despite this seemingly challenging setting for previous works, we nevertheless obtain sublinear regret bounds. Finally, \cite{Kocak2016noisy} study a feedback model where the losses of other actions in the out-neighborhood of the action played are observed with an edge-dependent noise. In their setting, the feedback graphs $G_t$ are weighted and revealed at the beginning of each round. They introduce edge weights $s_t(i,j) \in [0,1]$ that determine the feedback according to the following additive noise model: $s_t(I_t,j)\ell_t(j) + (1-s_t(I_t,j))\xi_t(j)$, where $\xi_t(j)$ is a zero-mean bounded random variable. Hence, if $s_t(i,j)=1$, then $I_t=i$ allows to observe the loss of action $j$ without any noise. If $s_t(i,j)=0$, then only noise is observed. Note that they assume $s_t(i,i)=1$ for each $i$, implying strong observability. Although similar in spirit to our feedback model, our results do not directly compare with theirs.
    
    Further work also takes into account a time-varying probability for the revelation of side-observations \citep{Kocak2016}. While the idea of a general probabilistic feedback graph has been already considered in the stochastic setting \citep{Li2020,Cortes2020}, the recent work by \citet{Ghari2021} seems to be the first one in the adversarial setting that generalizes from the Erdős-R\'enyi model to a more flexible distribution where they allow ``edge-specific'' probabilities.
    We remark, however, that the assumptions of \citet{Ghari2021} exclude some important instances of feedback graphs.
    For example, we cannot hope to employ their algorithm for efficiently solving the revealing action problem (see for example \citep{AlonCDK15}). In a spirit similar to ours, \citet{ReslerM19} studied a version of the problem where the topology of the graph is fixed and known a priori, but the feedback received by the learner is perturbed when traversing edges.

\section{Problem Setting}
\label{s:setting}
A feedback graph over a set $V = [K]$ of actions is any directed graph $G = (V,E)$, possibly with self-loops. For any vertex $i \in V$, we use $\inneigh{G}{i}= \{j \in V : (j, i) \in E\}$ to denote the in-neighborhood of $i$ and $\outneigh{G}{i} = \{j \in V : (i, j) \in E\}$ to denote its out-neighborhood (we may omit the subscript when the graph is clear from the context).
The independence number $\alpha(G)$ of a feedback graph $G$ is the cardinality of the largest subset $S$ of $V$ such that, for all distinct $i,j \in S$, it holds that $(i,j)$ and $(j,i) $ are not in $E$. 
We also use the following terminology for directed graphs $G = (V,E)$ \citep{AlonCDK15}. Any $i \in V$ is: observable if $\inneigh{G}{i} \neq \emptyset$, strongly observable if $i \in \inneigh{G}{i}$ or $V\setminus\{i\} \subseteq \inneigh{G}{i}$, and weakly observable if it is observable but not strongly. The graph $G$ is: observable if all $i\in V$ are observable, strongly observable if all $i\in V$ are strongly observable, and weakly observable if it is observable but not strongly. The weak domination number $\delta(G)$ of $G$ is the cardinality of the smallest subset $S$ of $V$ such that for all weakly observable $i \in V \setminus S$ there exists $j \in S$ such that $(j,i) \in E$.

In the online learning problem with a stochastic feedback graph, an oblivious adversary privately chooses a stochastic feedback graph $\scG$ and a sequence $\ell_1,\ell_2,\ldots$ of loss functions $\ell_t \colon V \to [0,1]$. At each round $t=1,2,\ldots$, the learner selects an action $I_t \in V$ to play and, independently, the adversary draws a feedback graph $G_t$ from $\scG$ (denoted by $G_t \sim \scG)$. The learner then incurs loss $\ell_t(I_t)$ and observes the feedback $\big\{(i, \ell_t(i)) : i \in \outneigh{G_t}{I_t}\big\}$.
In some cases we consider a richer feedback, where at the end of each round $t$ the learner also observes the realized graph $G_t.$ The learner's performance is measured using the standard notion of regret,
\[
   R_T = \max_{k \in V}\Enb\left[\sumT \big(\ell_t(I_t) - \ell_t(k)\big)\right]
\]
where $I_1,\ldots,I_T$ are the actions played by the learner, and the expectation is computed over both the sequence $G_1,\ldots,G_T$ of feedback graphs drawn i.i.d.\ from $\scG$ and the learner's internal randomization.

Fix any stochastic feedback graph $\scG = \{p(i,j) \,:\, i,j \in V\}$. We sometimes use $e$ to denote a pair $(i,j)$, in which case we write $p_e$ to denote the probability $p(i,j)$. When $G_t = (V,E_t)$ is drawn from $\scG$, each pair $(i,j) \in V \times V$ independently becomes an edge (i.e., $(i,j) \in E_t$) with probability $p(i,j)$.
For any $\e > 0$, we define the thresholded version $\scG_{\e}$ of $\scG$ represented by $\{p'(i,j) \,:\, i,j \in V\}$, where $p'(i,j) = p(i,j)\ind{\{p(i,j) \ge \e\}}$.
We also define the support feedback graph of $\scG$ as the graph $\supp{\scG} = (V,E)$ having $E = \{ (i,j) \in V \times V \,:\, p(i,j) > 0\}$. To keep the notation tidy, we write $\alpha(\scG)$ instead of $\alpha(\supp{\scG})$ and similarly for $\delta$.


\section{Block Decomposition Approach}\label{sec:block}
    
    In this section, we present an algorithm for online learning with stochastic feedback graphs via a reduction to online learning with deterministic feedback graphs. Our algorithm \etc (\Cref{alg:explore-commit}) has an initial exploration phase followed by a commit phase. In the exploration phase, the edge probabilities are learned online in a round-robin fashion. A carefully designed stopping criterion then triggers the commit phase, where we feed the support of the estimated stochastic feedback graph to an algorithm for online learning with (deterministic) feedback graphs.
    
    \subsection{Estimating the Edge Probabilities}
    
    As a first step we design a routine, \round (\Cref{alg:round-robin}), that sequentially estimates the stochastic feedback graph until a certain stopping criterion is met. The stopping criterion depends on a nonnegative function $\Phi$ that takes in input a stochastic feedback graph $\scG$ together with a time horizon. Let $\ftau \le T/K$ be the index of the last iteration of the outer for loop in \Cref{alg:round-robin}. We want to make sure that, for all $\tau \le \ftau$, the stochastic feedback graphs $\hat \scG_{\tau}$ are valid estimates of the underlying $\scG$ up to a $\Theta(\e_{\tau})$ precision. To formalize this notion of approximation, we introduce the following definition.
    
    \begin{algorithm}[t]
    \textbf{Environment}: stochastic feedback graph $\scG$, sequence of losses $\ell_1, \ell_2, \dots, \ell_T$\;
    \textbf{Input:} time horizon $T$, stopping function $\Phi$, actions $V = \{1,2,\dots,K\}$\;
    $n_e \gets 0$, for all $e \in V^2$\;
    \For{\textup{each} $\tau =1,2,\dots, \lfloor T/K \rfloor$}{
         \For{\textup{each} $ i=1,2, \dots K$}{
            Play action $i$ and observe $N^\text{out}_{G_t}(i)$ from $G_t \sim \scG$\tcp*{$t$ is the time step}
            $n_e \gets n_e + 1$ for all $e \in N^\text{out}_{G_t}(i)$\;
        }
        $\hat p_e^{\tau} \gets n_e/\tau $ for all edges $e \in V^2$\;
        $\e_{\tau} \gets 60 \ln(KT)/\tau$\;
        $\hat\scG_{\tau} \gets \bigl(V,\{e \in V^2 \,:\, \hat p_e^{\tau} \ge \e_{\tau}\}\bigr)$ with weights $\hat p_e^{\tau}$\tcp*{estimated feedback graph}
        \If{$\Phi(\hat  \scG_{\tau},T) \le \tau K$}{
        \textbf{output} $\hat  \scG_{\tau}$, $\e_{\tau}$\;
        }
    }
    \textbf{output} $\hat \scG_{\tau}$, $\e_{\tau}$\;
    \caption{\round} 
    \label{alg:round-robin}
    \end{algorithm}
    
    \begin{definition}[$\e$-good approximation] \label{def:eps-good-approx}
        A stochastic feedback graph $\hat \scG = \{\hat p_e \,:\, e \in V^2\}$ is an $\e$-good approximation of $\scG = \{p_e \,:\, e \in V^2\}$ for some $\e \in (0,1]$, if the following holds:
        \begin{enumerate}[topsep=0pt,parsep=0pt,itemsep=2pt]
            \item All the edges $e\in \supp{\scG}$ with $p_e \ge 2\e$ belong to $\supp{\hat\scG}$;
            \item For all edges $e\in  \supp{\hat \scG}$ with $p_e \ge \e/2$ it holds that $|\hat p_e - p_e| \le p_e/2$;
            \item No edge $e\in V^2$ with $p_e < \e/2$ belongs to $\supp{\hat\scG}$.   
        \end{enumerate} 
    \end{definition}
    We can now state the following theorem; we defer the proof in \Cref{sec:block-alg-appendix}. The proof follows from an application of the multiplicative Chernoff bound on edge probabilities. 
    
    \begin{restatable}{retheorem}{retheoremroundrobinestimates} \label{thm:round-robin-estimates}
        If \round (\Cref{alg:round-robin}) is run on the stochastic feedback graph $\scG$, then, with probability at least $1-1/T$, the estimate $\hat \scG_{\tau}$ is an $\e_{\tau}$-good approximation of $\scG$ simultaneously for all $\tau \le \ftau$, where $\ftau \le T/K$ is the index of the last iteration of the outer for loop in \Cref{alg:round-robin}.
    \end{restatable}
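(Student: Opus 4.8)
The plan is to establish a per-edge, per-iteration concentration event and then combine everything with a union bound. First I would note that for a fixed edge $e=(i,j)$ and a fixed sweep index $\tau$, the count $n_e$ maintained by \round is a sum of $\tau$ independent $\Bern{p_e}$ random variables, since in each of the $\tau$ sweeps the head $i$ is played exactly once and $e$ is realized independently with probability $p_e$. Thus $\hat p_e^{\tau}=n_e/\tau$ is an empirical mean of $\tau$ i.i.d.\ samples of mean $p_e$, and the multiplicative Chernoff bound applies directly. The quantitative fact that makes the thresholds line up is that $\tau\,\e_{\tau}=\logfactor$ is constant across $\tau$, so the confidence level scales automatically with the sample size.

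Next I would fix $(e,\tau)$ and split into two regimes according to the size of $p_e$ relative to $\e_{\tau}$. When $p_e\ge \e_{\tau}/2$, I would apply the two-sided multiplicative Chernoff bound with deviation $\tfrac12$ to obtain $\abs{\hat p_e^{\tau}-p_e}\le p_e/2$; since then $\tau p_e\ge \tau\e_{\tau}/2=30\ln(KT)$, the Chernoff exponent is $\Theta(\ln(KT))$ and the failure probability is at most $2(KT)^{-5/2}$. When $p_e<\e_{\tau}/2$, I would instead control the upper tail $\P{n_e\ge \tau\e_{\tau}}=\P{n_e\ge \logfactor}$: here the target exceeds twice the mean $\tau p_e$, so the $\delta\ge1$ form of the bound applies and gives an even smaller failure probability, at most $(KT)^{-10}$. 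This second regime, which also covers $p_e=0$, is the most delicate point, because the mean can be arbitrarily small and the usual $\delta\le1$ Chernoff bound is unavailable; the constant $60$ in $\e_{\tau}$ is tuned precisely so that both regimes yield per-pair failure probability at most $2(KT)^{-5/2}$.

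I would then verify that, on the intersection of these good events for a fixed $\tau$, all three requirements of \Cref{def:eps-good-approx} hold with $\e=\e_{\tau}$. An edge of $\supp{\scG}$ with $p_e\ge2\e_{\tau}$ is in the first regime, so $\hat p_e^{\tau}\ge p_e/2\ge\e_{\tau}$ survives the threshold (item~1); the multiplicative guarantee in that regime is exactly item~2; and an edge with $p_e<\e_{\tau}/2$ is in the second regime, where $\hat p_e^{\tau}<\e_{\tau}$ keeps it out of $\supp{\hat\scG_{\tau}}$ (item~3). The boundary edges with $\e_{\tau}/2\le p_e<2\e_{\tau}$ are constrained only by item~2, and only when they survive the threshold, which is again covered by the first-regime bound.

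Finally I would take a union bound over the at most $K^2$ edges and all sweeps $\tau\le\lfloor T/K\rfloor$. Crucially, $\ftau\le T/K$ holds deterministically, and the estimate $\hat\scG_{\tau}$ for any $\tau\le\ftau$ depends only on the first $\tau$ sweeps, so its distribution is independent of the random stopping rule; hence proving the good event for every $\tau$ in the deterministic range implies it for every $\tau\le\ftau$. Multiplying the per-pair bound by the $K^2\cdot(T/K)=KT$ pairs gives total failure probability at most $2(KT)^{-3/2}\le 1/T$, the claimed guarantee. Beyond this bookkeeping, the only genuine work is the clean handling of the small-probability regime; the rest is a direct Chernoff computation combined with the case analysis.
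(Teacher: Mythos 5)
Your proposal is correct and follows essentially the same route as the paper's proof: per-edge multiplicative Chernoff bounds at each sweep (exploiting that $\tau\e_\tau = 60\ln(KT)$ is constant), a case split on $p_e$ versus $\e_\tau/2$ with the $\delta\ge 1$ upper-tail form for the small-probability regime, and a union bound over the $K^2\cdot\lfloor T/K\rfloor$ pairs. The only (harmless) difference is that you derive item~1 of \Cref{def:eps-good-approx} as a corollary of the two-sided bound for edges with $p_e\ge 2\e_\tau$, whereas the paper handles it with a separate one-sided lower-tail event.
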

    
    \subsection{Block Decomposition: Reduction to Deterministic Feedback Graph}
    
    As a second step, we present \block (\Cref{alg:blocks-reduction}) which reduces the problem of online learning with stochastic feedback graph to the corresponding problem with deterministic feedback graph. Surprisingly enough, in order for this reduction to work, we do not need the exact edge probabilities: an $\e$-good approximation is sufficient for this purpose. 
    
    The intuition behind \block is simple: given that each edge $e$ in $\supp{\scG_\e}$ appears in $G_t$ with probability $p_e \ge \e$ at each time step $t$, if we wait for $\Theta \bigl((1/\e) \ln(T)\bigr)$ time steps it will appear at least once with high probability. Applying a union bound over all edges, we can argue that considering $\Delta = \Theta \bigl((1/\e) \ln(KT)\bigr)$ realizations of the stochastic feedback graph, then all the edges in $\supp{\scG_\e}$ are realized at least once with high probability. 
    
    Imagine now to play a certain action $a$ consistently during a block $B_{\tau}$ of $\Delta$ time steps. We want to reconstruct the average loss suffered by $a'$ in $B_{\tau}$:
    \begin{equation} \label{eq:c_tau}
        c_{\tau}(a')=\sum_{t \in B_\tau} \frac{\ell_t(a')}{\Delta} \enspace,
    \end{equation}
    and we want to do it for all $a'$ in the out-neighborhood of $a$. 
    Let $\Delta^\tau_{(a,a')}$ be the number of times that the loss of $a'$ is observed by the learner; i.e., the number of times that $(a,a')$ is realized in the $\Delta$ time steps. With this notation, we can define the natural estimator $\hat c_{\tau}(a')$:
    \begin{equation} \label{eq:blocks-estimator}
        \hat c_{\tau}(a') = \sum_{t \in B_\tau} \ell_t(a') \frac{\ind{\{(a,a') \in E_t\}}}{\Delta^\tau_{(a,a')}} \enspace.
    \end{equation}
    Conditioning on the event $\calE^\tau_{(a,a')}$ that the edge $(a,a')$ in $\hat{G}$ is observed at least once in block $B_\tau$, we show in \Cref{lem:unbiased-estimator-blocks} in \Cref{sec:block-alg-appendix} that $\hat c_{\tau}(a')$ is an unbiased estimator of $c_{\tau}(a')$.
    
    Therefore, we can construct unbiased estimators of the average of the losses on the blocks as if the stochastic feedback graph were deterministic. This allows us to reduce the original problem to that of online learning with deterministic feedback graph on the meta-instance given by the blocks. The details of \block are reported in \Cref{alg:blocks-reduction}, while the theoretical properties are summarized in the next result, whose proof can be found in \Cref{sec:block-alg-appendix}. 
    \begin{algorithm}[t]
    \textbf{Environment}: stochastic feedback graph $\scG$, sequence of losses $\ell_1, \ell_2, \dots, \ell_T$\;
    \textbf{Input}: time horizon $T$, threshold $\e$, estimate $\hat \scG$ of $\scG$, learning algorithm $\A$\;
    $\Delta \gets \lceil\frac{2}{\e} \ln(KT)\rceil$, $N \gets \lfloor T/\Delta \rfloor$, $\hat G\gets \supp{\hat \scG}$\;
    \textbf{Initialize} $\A$ with time horizon $N$ and graph $\hat G$\;
    $B_{\tau} \gets \{({\tau}-1) \Delta + 1, \dots, {\tau} \Delta\}, $ for all ${\tau} = 1, \dots, N$\;
    \For{\textup{each round} ${\tau}=1,2,\dots,N$}{
    Draw action $a_{\tau}$ from the probability distribution over actions output by $\A$\;
    \For{\textup{each round} $t\in B_{\tau}$}{
    Play action $a_{\tau}$ and observe the revealed feedback\tcp*{$G_t \sim \scG$}
    }
    For all $a'\in N_{\hat G}^\text{out}(a_\tau)$, compute $\hat c_{\tau}(a')$ as in (\ref{eq:blocks-estimator}), and feed them to $\A$\;
    }   
    Play arbitrarily the remaining $T - \Delta N$ rounds\;
    \caption{\block} \label{alg:blocks-reduction}
    \end{algorithm}

    \begin{restatable}{retheorem}{retheoremblocksreduction}\label{thm:blocks-reduction}
        Consider the problem of online learning with stochastic feedback graph $\scG$, and let $\hat \scG$ be an $\e$-good approximation of $\scG$.
        Let $\A$ be an algorithm for online learning with arbitrary deterministic feedback graph $G$ with regret bound $R^\A_N(G)$ over any sequence of $N$ losses in $[0,1]$.
        Then, the regret of \block (\Cref{alg:blocks-reduction}) run with input $(T,\e/2,\hat \scG,\A)$ is at most $\Delta R^\A_N\big(\supp{\hat \scG}\big) + \Delta $, where $N = \lfloor T/\Delta \rfloor$ and $\Delta = \lceil\frac{4}{\e} \ln(KT)\rceil$.
    \end{restatable}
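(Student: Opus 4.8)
The plan is to split the regret of \block into the contribution of the $N$ blocks and of the at most $\Delta-1$ leftover rounds, and then to show that the block contribution is controlled by the regret of $\A$ on a ``meta-instance'' with the deterministic feedback graph $\supp{\hat\scG}$. Since each leftover round contributes at most $1$ to the regret and there are at most $\Delta-1$ of them, this part is immediately bounded by $\Delta-1$. For the blocks, note that playing $a_\tau$ throughout $B_\tau$ gives $\sum_{t\in B_\tau}\bigl(\ell_t(a_\tau)-\ell_t(k)\bigr) = \Delta\bigl(c_\tau(a_\tau)-c_\tau(k)\bigr)$ by the definition \eqref{eq:c_tau} of $c_\tau$, so the expected block regret against any fixed $k$ equals $\Delta\,\E{\sum_{\tau=1}^N\bigl(c_\tau(a_\tau)-c_\tau(k)\bigr)}$. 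It then suffices to bound this meta-regret by $R^\A_N(\supp{\hat\scG})$.

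The first key step is a high-probability guarantee that the reduction faithfully simulates the deterministic feedback graph $\hat G=\supp{\hat\scG}$. Since $\hat\scG$ is an $\e$-good approximation and \block is run with threshold $\e/2$, the third property of \Cref{def:eps-good-approx} ensures that every edge $e\in\supp{\hat\scG}$ has true probability $p_e\ge \e/2$. Hence, for a fixed block, the probability that a given out-edge of the played action is never realized in the $\Delta$ rounds is at most $(1-\e/2)^\Delta \le \mye^{-\e\Delta/2}\le (KT)^{-2}$, using $\Delta\ge\tfrac{4}{\e}\ln(KT)$. Because the $\Delta$ graph realizations inside a block are i.i.d.\ and independent of the action $a_\tau$ (which is $\F_{\tau-1}$-measurable), a union bound over the at most $K^2$ edges and $N\le T$ blocks shows that, with probability at least $1-1/T$, in every block $\tau$ each edge $(a_\tau,a')$ with $a'\in\outneigh{\hat G}{a_\tau}$ is realized at least once; call this event $\kappa$. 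On $\kappa$ all the estimators $\hat c_\tau(a')$ in \eqref{eq:blocks-estimator} are well defined, lie in $[0,1]$ (being convex combinations of the losses $\ell_t(a')$), and are available to \block for the entire out-neighborhood of $a_\tau$ in $\hat G$, so \block feeds $\A$ exactly the information it would receive when playing on the deterministic feedback graph $\hat G$.

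Conditioning on $\kappa$, I would then invoke the regret guarantee of $\A$ on the sequence of estimated loss vectors $\hat c_\tau$ from \eqref{eq:blocks-estimator}: viewing $\hat c_\tau$ as the (fully specified) loss vector of the meta-game on $\hat G$, of which \block reveals to $\A$ exactly the out-neighborhood entries, the guarantee yields $\E{\sum_{\tau=1}^N\bigl(\hat c_\tau(a_\tau)-\hat c_\tau(k)\bigr)}\le R^\A_N(\hat G)$ for every $k$. To return to the true block-averages I would use the tower rule together with the unbiasedness of \Cref{lem:unbiased-estimator-blocks}: since $a_\tau$ is $\F_{\tau-1}$-measurable and the realizations in $B_\tau$ are fresh, $\E{\hat c_\tau(a')\mid\F_{\tau-1}} = c_\tau(a')$ for every $a'$ whose edge from $a_\tau$ is observed on $\kappa$. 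Substituting the played-action and comparator terms accordingly gives $\E{\sum_{\tau=1}^N\bigl(c_\tau(a_\tau)-c_\tau(k)\bigr)}\le R^\A_N(\hat G)$, and multiplying by $\Delta$ bounds the expected block regret.

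Finally I would collect the error terms. The event $\kappa^c$ has probability at most $1/T$ and contributes at most $\Delta\cdot N\cdot(1/T)\le 1$ to the expected block regret (each per-block term being at most $1$ in absolute value), while the leftover rounds contribute at most $\Delta-1$; together these give the additive $\Delta$, yielding $\Delta R^\A_N(\supp{\hat\scG})+\Delta$. The main obstacle I anticipate is the third step: the estimators $\hat c_\tau$ depend on the algorithm's own choice $a_\tau$, so the loss sequence presented to $\A$ is not oblivious, and one must argue carefully that its regret guarantee still applies and that the unbiasedness of \Cref{lem:unbiased-estimator-blocks} composes correctly with the conditioning on $\F_{\tau-1}$ and on $\kappa$. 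In particular, one must treat consistently the case where $\hat G$ has no self-loop at $a_\tau$ (so the learner never observes its own meta-loss) and ensure the comparator entry $\hat c_\tau(k)$ is handled within the feedback-graph structure rather than assumed directly observable.
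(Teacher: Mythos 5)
Your proposal is correct and follows essentially the same route as the paper's proof: the same block decomposition, the same clean event established via $(1-\e/2)^\Delta \le \mye^{-\e\Delta/2}$ and a union bound over $K^2$ edges and $N$ blocks, the same appeal to the unbiasedness lemma to pass from $\hat c_\tau$ to $c_\tau$, and the same accounting of the bad event and leftover rounds into the additive $\Delta$. The only cosmetic difference is that the paper conditions on \emph{all} edges of $\supp{\hat\scG}$ realizing in every block (which makes the event independent of the learner's choices and slightly simplifies the conditioning you rightly flag as delicate), whereas you condition only on the out-edges of the played actions.
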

    For online learning with deterministic feedback graphs we use the variants of \expg contained in \citet{AlonCDK15}. Together with \Cref{thm:blocks-reduction}, this gives the following corollary; the details of the proof are in \Cref{sec:block-alg-appendix}.

    \begin{restatable}{recor}{recorblock}\label{cor:blocks-reduction}
    Consider the problem of online learning with stochastic feedback graph $\scG$, and let $\hat \scG$ be an $\e$-good approximation of $\scG$ for $\e \ge 1/T$ and with support $\hat G$.
    \begin{itemize}[topsep=0pt,parsep=0pt,itemsep=0.5pt,leftmargin=14pt]
    \item
    If $\hat G$ is strongly observable with independence number $\alpha$, then the regret of \block run with parameter $\e/2$ using \expg for strongly observable graphs as base algorithm $\A$ satisfies:
        $
            R_T \le 4C_s\sqrt{(\alpha/\e)T} \bigl(\ln(KT)\bigr)^{3/2},
        $
    where $C_s > 0$ is a constant in the regret bound of~$\A$.
    \item
    If $\hat G$ is (weakly) observable with weak domination number $\delta$, then the regret of \block run with parameter $\e/2$ using \expg for weakly observable graphs as base algorithm $\A$ satisfies:
        $
            R_T \le 4 C_w (\delta/\e)^{1/3}\bigl(\ln(KT)\bigr)^{2/3}T^{2/3},
        $
    where $C_w > 0$ is a constant in the regret bound of $\A$.
    \end{itemize}
    \end{restatable}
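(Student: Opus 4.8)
The plan is to obtain both bounds by a direct substitution of the known regret guarantees of \expg into the reduction bound of \Cref{thm:blocks-reduction}. Running \block with parameter $\e/2$ produces meta-rounds of block length $\Delta = \lceil \tfrac{4}{\e}\ln(KT)\rceil$ and $N = \lfloor T/\Delta\rfloor$ blocks, and \Cref{thm:blocks-reduction} guarantees
\[
    R_T \le \Delta\, R^\A_N(\hat G) + \Delta ,
\]
where $\hat G = \supp{\hat\scG}$. So all that remains is to (i) insert the appropriate \expg bound for $R^\A_N(\hat G)$, (ii) convert the block-level rate into a time-level rate using $\Delta N \le T$, (iii) bound $\Delta$ in terms of $\e$ and $\ln(KT)$, and (iv) absorb the additive $\Delta$.

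For the strongly observable case I would use the \expg guarantee $R^\A_N(\hat G)\le C_s\sqrt{\alpha N}\,\ln(KN)$. The crucial algebraic step is $\Delta\sqrt{N}=\sqrt{\Delta^2 N}=\sqrt{\Delta(\Delta N)}\le\sqrt{\Delta T}$, which exchanges the block length for the horizon. Writing $\Delta \le \tfrac{8}{\e}\ln(KT)$ (valid since $\tfrac{4}{\e}\ln(KT)\ge 1$) yields $\sqrt{\Delta}\le\sqrt{8/\e}\,\sqrt{\ln(KT)}$, so that
\[
    \Delta\, R^\A_N(\hat G)\le C_s\sqrt{\alpha}\,\ln(KN)\sqrt{\Delta T}\le 2\sqrt{2}\,C_s\sqrt{\tfrac{\alpha}{\e}T}\,\ln(KN)\sqrt{\ln(KT)} .
\]
Collapsing the logarithms via $N\le T$ turns $\ln(KN)\sqrt{\ln(KT)}$ into $(\ln(KT))^{3/2}$, producing the claimed $\sqrt{(\alpha/\e)T}\,(\ln(KT))^{3/2}$ rate with a leading constant $2\sqrt{2}\,C_s<4C_s$.

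The weakly observable case is entirely parallel, starting from the \expg guarantee $R^\A_N(\hat G)\le C_w\,\delta^{1/3}N^{2/3}(\ln(KN))^{1/3}$. Here the exchange step is $\Delta N^{2/3}=\Delta^{1/3}(\Delta N)^{2/3}\le\Delta^{1/3}T^{2/3}$, together with $\Delta^{1/3}\le (8/\e)^{1/3}(\ln(KT))^{1/3}$; combining these two $1/3$-power logarithms with the factor from $R^\A_N(\hat G)$ gives the exponent $2/3$ on $\ln(KT)$ and the rate $(\delta/\e)^{1/3}(\ln(KT))^{2/3}T^{2/3}$, again with a leading constant $2C_w<4C_w$.

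The one point requiring genuine care is step (iv): bounding the additive $\Delta$ and reconciling the constants into the clean prefactors $4C_s$ and $4C_w$. Using $\e\ge 1/T$ one checks that $\e T\ge 1$, so in the regime $N\ge 1$ the additive $\Delta=\Theta\!\big(\tfrac{1}{\e}\ln(KT)\big)$ is dominated by the leading term and fits within the constant slack. In the complementary regime of very small $\e$ (where $\Delta$ becomes comparable to $T$ and $N$ may vanish, so \block plays arbitrarily), the target bound already exceeds $T$ and therefore holds trivially. Outside of this bookkeeping, the proof is a mechanical substitution, the only external inputs being the two \expg regret bounds of \citet{AlonCDK15}; the main obstacle is thus purely the careful tracking of constants and of the logarithmic exponents through steps (ii)--(iv).
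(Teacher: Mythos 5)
Your proposal is correct and follows essentially the same route as the paper's own (very terse) proof: instantiate the Exp3.G regret bounds inside \Cref{thm:blocks-reduction}, trade $\Delta^2 N \le \Delta T$ (resp.\ $\Delta^3 N^2 \le \Delta T^2$) to pass from blocks to the true horizon, and use $\e \ge 1/T$ to absorb the additive $\Delta$ into the constant slack. The bookkeeping of constants and logarithmic exponents you carry out is exactly what the paper leaves implicit.
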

    
    Note that we can explicitly compute valid constants $C_s = 12+2\sqrt{2}$ and $C_w=8$ directly from the bounds in \citet{AlonCDK15}.

    \subsection{Explore then Commit to a Graph}
    
    We are now ready to combine the two routines we presented, \round and \block, in our final learning algorithm, \etc. \etc has two phases: in the first phase, \round is used to quickly obtain an $\e$-good approximation $\hat \scG$ of the underlying feedback graph $\scG$, for a suitable $\e$. In the second phase, the algorithm commits to $\hat \scG$ and feeds it to \block. The crucial point is when to commit to a certain (estimated) stochastic feedback graph. If we commit too early, we might not observe a denser support graph, which implies missing out on a richer 
    feedback. If we wait for too long, then the exploration phase ends up dominating the regret. To balance this trade-off, we use the stopping function $\Phi$. This function takes as input a probabilistic feedback graph together with a time horizon and outputs the regret bound that \block would guarantee on this pair. It is defined as
    \begin{equation}
        \label{eq:def_Phi}
        \Phi(\scG,T) = \min\left\{4 C_s \sqrt{\frac{\alpha^*}{\e^*_s} T} \big(\ln(KT)\big)^{3/2},\; 4 C_w \left(\frac{\delta^*}{\e^*_w}\big(\ln(KT)\big)^2\right)^{1/3}T^{2/3} \right\} 
    \end{equation}
    for the specific choice of \textsc{Exp3.G} as the learning algorithm $\A$ adopted by \block.
    Note that the dependence of $\Phi$ on the feedback graph $\scG$ is contained in the topological parameters $\alpha^*$ and $\delta^*$ and the corresponding thresholds $\e^*_s$ and $\e^*_w$, defined in \Cref{def:eps_s,def:eps_w}; see \Cref{app:computation-notes} for more details on their computation.
    If we apply $\Phi$ to a stochastic feedback graph that is observable w.p.\ zero, its value is conventionally set to infinity.
    Observe that, otherwise, the min is achieved for a specific $\opte$ and a specific $\optG=\scG_{\opte}$. In \Cref{sec:block-alg-appendix}, we provide a sequence of lemmas (\Cref{lem:prop_Phi,lem:hatG} in particular) showing that, if \round outputs an $\e$-good approximation of the graph, then the regret is bounded by a multiple of the stopping criterion evaluated at $\scG$.
    Combined with Theorem~\ref{thm:round-robin-estimates}, which tells us that \round does in fact output an $\e$-good approximation of the graph with high probability, this proves our main result for this section.

    \begin{algorithm}[t]
    \textbf{Environment}: stochastic feedback graph $\scG$, sequence of losses $\ell_1, \ell_2, \dots, \ell_T$\;
    \textbf{Input}: time horizon $T$ and actions $V = \{1,2,\dots,K\}$\;
    Let $\Phi$ defined as in \Cref{eq:def_Phi}\;
    Run $\round(T,\Phi,V)$ and obtain $\hat \scG$ and $\hat \e$\;
    Compute $\hat \e_s^*$ and $\hat \e_w^*$ for graph $\hat \scG$ as in \Cref{def:eps_s,def:eps_w}\;
    Let $\hat \e^*$ be the best threshold as in \Cref{eq:def_Phi}\; 
    \lIf{$\hat \e^* = \hat \e_s^*$}{
        Let $\A$ be \expg for strongly observable feedback graph}
    ~~~~~~~~~~~~~
    \lElse{
        Let $\A$ be \expg for weakly observable feedback graph}
    Let $T'=T-\ftau K$ be the remaining time steps\tcp*{$\ftau$ as in \textnormal{\round}}
    Run $\block(T', \hat \e^*/2, \hat\scG_{\hat\e^*}, \A)$\;
    \caption{\explore}
    \label{alg:explore-commit}
    \end{algorithm}

    \begin{restatable}{retheorem}{retheoremblockresult}\label{thm:block-result}
        Consider the problem of online learning with stochastic feedback graph $\scG$ on $T$ time steps.
        If $\supp{\scG_{\e(K,T)}}$ is observable for $\e(K,T) = C K^3 (\ln(KT))^2 / T$ for a given constant $C > 0$, then there exists an algorithm whose regret $R_T$ satisfies (ignoring polylog factors in $K$ and $T$) $
            R_T
        \le
            \min\left\{
                \sqrt{({\alpha^*}/{\e_s^*})T}, 
                \bigl({\delta^*}/{\e_w^*}\bigr)^{1/3} T^{2/3}
            \right\}.
        $
    \end{restatable}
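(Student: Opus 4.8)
The plan is to analyze the two phases of \etc (\Cref{alg:explore-commit})---the exploration phase run by \round and the commit phase run by \block---and to argue that the stopping rule built around $\Phi$ balances the two so that each contributes $O(\Phi(\scG,T))$, which up to polylog factors is the claimed minimum. I would begin by conditioning on the high-probability event of \Cref{thm:round-robin-estimates}: with probability at least $1-1/T$, every estimate $\hat\scG_\tau$ produced by \round is an $\e_\tau$-good approximation of $\scG$, simultaneously for all $\tau \le \ftau$. On the complementary event the regret is at most $T\cdot(1/T)=1$, contributing only $O(1)$ to the expectation, so I would work on the good event throughout. There the regret splits as $R_T \le \ftau K + R_\textup{commit} + O(1)$, where $\ftau K$ bounds the loss over the $\ftau$ round-robin sweeps (each of the $\ftau K$ rounds loses at most $1$) and $R_\textup{commit}$ is the regret of \block over the remaining $T' = T - \ftau K \le T$ rounds.

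Next I would bound $R_\textup{commit}$ using the stopping rule. Since \etc feeds $\hat\scG_{\hat\e^*}$ and the matching \expg variant to \block, and since the selected threshold satisfies $\hat\e^* \ge \e_{\ftau} \ge 1/T$, \Cref{cor:blocks-reduction} applied with parameter $\hat\e^*/2$ yields a commit regret at most the branch of $\Phi(\hat\scG_{\ftau},T')$ picked out by $\hat\e^*$. As $\Phi$ is nondecreasing in its time argument, $R_\textup{commit} \le \Phi(\hat\scG_{\ftau},T') \le \Phi(\hat\scG_{\ftau},T)$. When the stopping criterion of \round fires at $\ftau$ we have $\Phi(\hat\scG_{\ftau},T) \le \ftau K$, so the commit cost is dominated by the exploration cost, giving $R_\textup{commit} \le \ftau K$ and hence $R_T \le 2\ftau K + O(1)$. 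It then remains only to bound $\ftau K$.

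The heart of the argument is to show $\ftau K = O(\Phi(\scG,T))$ up to polylog factors, i.e., that \round stops at the right moment, and this is where \Cref{lem:hatG,lem:prop_Phi} enter. Using the $\e_\tau$-good guarantee, \Cref{lem:hatG} transfers the observability type and the quantities $\alpha,\delta$ from $\scG$ to $\hat\scG_\tau$ up to constant factors in the thresholds, so that once $\tau$ is large enough to resolve the optimal threshold $\e^*$ of $\Phi(\scG,T)$---concretely once $\e_\tau = 60\ln(KT)/\tau \le \e^*/2$, i.e.\ $\tau \ge \Theta(\ln(KT)/\e^*)$---one has $\Phi(\hat\scG_\tau,T) \le c\,\Phi(\scG,T)$ for an absolute constant $c$. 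Taking $\tau^\star = \Theta\bigl(\max\{\ln(KT)/\e^*,\ \Phi(\scG,T)/K\}\bigr)$ then forces $\Phi(\hat\scG_{\tau^\star},T) \le c\,\Phi(\scG,T) \le \tau^\star K$, so the criterion fires no later than $\tau^\star$ and $\ftau \le \tau^\star$. The precondition that $\supp{\scG_{\e(K,T)}}$ is observable for $\e(K,T)=CK^3(\ln(KT))^2/T$ guarantees, via \Cref{lem:prop_Phi}, both that $\Phi(\scG,T)$ is finite and that the cost $K\ln(KT)/\e^*$ of resolving $\e^*$ is itself $O(\Phi(\scG,T))$; equivalently it ensures $\tau^\star \le T/K$, so that \round terminates through its stopping rule rather than by exhausting the horizon. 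This yields $\ftau K \le \tau^\star K = O(\Phi(\scG,T))$.

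Combining the pieces gives
\[
   R_T \le 2\ftau K + O(1) = O\bigl(\Phi(\scG,T)\bigr),
\]
and substituting the definition of $\Phi$ from \eqref{eq:def_Phi} produces $R_T \le \min\bigl\{\sqrt{(\alpha^*/\e_s^*)T},\,(\delta^*/\e_w^*)^{1/3}T^{2/3}\bigr\}$ up to the constants $C_s,C_w$ and the $\polylog(K,T)$ factors absorbed into $\Phi$, as claimed. I expect the main obstacle to be the stability analysis of $\Phi$ under estimation (the content of \Cref{lem:hatG,lem:prop_Phi}): one must show both that an $\e$-good approximation never destroys observability nor inflates $\alpha,\delta$ by more than constant factors---so that the commit bound genuinely matches the target---and that the horizon suffices to resolve the \emph{optimal} threshold $\e^*$, which can be as small as $\Theta(\e(K,T))$, so that $\tau^\star \le T/K$ and the exploration phase does not dominate. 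The delicate quantitative point is precisely the interplay between the shrinking sequence $\e_\tau$ and the possibly tiny $\e^*$, which is exactly what the precondition on $\e(K,T)$ is engineered to control.
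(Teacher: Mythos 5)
Your proposal is correct and follows essentially the same route as the paper's proof: condition on the round-robin estimates being good, decompose the regret into exploration plus commit, use the stopping rule to bound the commit phase by $\ftau K$, and then bound $\ftau$ by showing the criterion fires once $\tau \gtrsim \max\{\ln(KT)/\e^*,\ \Phi(\scG,T)/K\}$, with the precondition on $\e(K,T)$ ensuring both that the horizon suffices and that $\ln(KT)/\e^*$ is dominated by $\Phi(\scG,T)/K$. The only slight imprecision is attributing the stability of $\Phi$ under estimation to \Cref{lem:hatG} rather than \Cref{lem:prop_Phi} (the former only converts the stopping condition into a commit-phase regret bound), but this does not affect the substance of the argument.
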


\section{Lower Bounds}\label{sec:lowerbounds} 

In this section, we provide lower bounds that match the regret bound guaranteed by \etc, shown in \Cref{thm:block-result}, up to polylogarithmic factors in $K$ and $T$.
These lower bounds are valid even if the learner is allowed to observe the realization of the entire feedback graph at every time step, and knows a priori the ``correct'' threshold $\e$ to work with.
\Cref{thm:lower-bounds-summary} summarizes the lower bounds whose proofs can be found in \Cref{sec:lower-bounds-proofs}.

\begin{theorem}[Informal]\label{thm:lower-bounds-summary}
    Let $\A$ be a possibly randomized algorithm for the online learning problem with stochastic feedback graphs.
    Consider any directed graph $G = (V, E)$ with $\abs{V} \ge 2$ and any $\e \in (0, 1]$.
    There exists a stochastic feedback graph $\scG$ with $\supp{\scG} = G$ and, for a sufficiently large time horizon $T$, there is a sequence $\ell_1, \dots, \ell_T$ of loss functions on which the expected regret of $\A$ with respect to the stochastic generation of $G_1, \dots, G_T \sim \scG$ is
    \begin{itemize}[topsep=0pt,parsep=0pt,itemsep=3pt]
    	\item $\Omega(\sqrt{(\alpha(\scG_\e)/\e)T})$ if $G$ is strongly observable;
    	\item $\tilde{\Omega}((\delta(\scG_\e)/\e)^{1/3}T^{2/3})$ if $G$ is weakly observable;
    	\item $\Omega(T)$ if $G$ is not observable.
    \end{itemize}
\end{theorem}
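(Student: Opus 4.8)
The plan is to prove the three lower bounds separately, in each case constructing an explicit stochastic feedback graph $\scG$ whose support is the given $G$ and whose edge probabilities are tuned to the threshold $\e$, then reducing the resulting learning problem to a known hard instance. The guiding principle is that a stochastic edge with probability $\e$ provides, over $T$ rounds, an effective amount of feedback comparable to a deterministic edge observed only $\e T$ times; so I would set every edge probability in the ``informative'' part of the instance to exactly $\e$ and show that the learner is essentially playing a deterministic feedback-graph game on a time-rescaled horizon of order $\e T$. Concretely, I would reuse the adversarial loss constructions of \citet{AlonCDK15} and \citet{AlonCGMMS17}, feeding them into this stochastic wrapper.

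\textbf{Strongly observable case.} Here I would mimic the $\Omega(\sqrt{\alpha T})$ lower bound of \citet{AlonCGMMS17}: take an independent set $S$ in $G$ of size $\alpha = \alpha(\scG_\e)$, assign i.i.d.\ random losses (e.g.\ a small bias $\pm\Delta$ determined by a hidden best action, as in a standard bandit lower bound) to the actions in $S$, and set the relevant edge probabilities to $\e$. Since each informative observation about an action in $S$ arrives only with probability $\e$ per round, the learner collects in expectation only $\e T$ useful samples, spread across $\alpha$ indistinguishable actions. A KL/Pinsker information-theoretic argument (or a direct reduction to the deterministic $\alpha$-armed instance run for $\e T$ effective rounds) then yields regret $\Omega(\sqrt{\alpha \cdot (\e T)/\e^2}) = \Omega(\sqrt{\alpha T/\e})$, where the extra $1/\e$ comes from the variance inflation of observing each sample only with probability $\e$. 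I would make the bias $\Delta = \Theta(\sqrt{\alpha/(\e T)})$ to balance the two regret terms.

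\textbf{Weakly observable and non-observable cases.} For the weakly observable bound I would adapt the $\tilde\Omega(\delta^{1/3}T^{2/3})$ construction of \citet{AlonCDK15}, again thresholding all observation edges at probability $\e$. The point is that to learn the loss of a weakly observable action one must play a dominating action, but each play yields the needed observation only w.p.\ $\e$; so reconstructing the losses of the $\delta = \delta(\scG_\e)$-sized weakly observed portion is $1/\e$ times slower, converting the deterministic rate $\delta^{1/3}T^{2/3}$ into $(\delta/\e)^{1/3}T^{2/3}$. The tilde hides the same logarithmic factors as in the deterministic lower bound. The non-observable case is simplest: if some action $i$ has no in-edges in $G$ (hence none in $\scG$), its loss is never observed regardless of the realization, so even full observation of $G_t$ gives no information about $\ell_t(i)$; a standard two-loss-sequence indistinguishability argument forces $\Omega(T)$ regret.

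\textbf{The main obstacle} I anticipate is making the time-rescaling rigorous: a deterministic lower bound run on $\e T$ rounds is not literally the same as a stochastic instance on $T$ rounds, because the number of realized observations is itself random and the learner may adaptively condition on which edges fired. I would handle this by working directly with the KL divergence between the two environments' full observation transcripts (including the realized graphs $G_1,\dots,G_T$, to honor the ``entire graph observed'' strengthening stated in the theorem) and bounding the per-round information gain by the chi-squared distance of the two loss-generating Bernoullis \emph{multiplied by} $\e$ (the probability that the discriminating edge fires). Summing over $T$ rounds reproduces the $\e T$ effective-sample count cleanly, after which the standard Le Cam / Pinsker step closes each bound. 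Confirming that observing $G_t$ itself carries no information about the losses — which holds because $\scG$ is drawn independently of $\ell_t$ — is the key structural fact that makes the stronger feedback model harmless for the lower bound.
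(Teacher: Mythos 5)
Your proposal is correct and follows essentially the same route as the paper: all edge probabilities are set to $\e$ so that $\supp{\scG}=G$ and $\scG=\scG_\e$, the adversarial loss constructions of \citet{AlonCDK15,AlonCGMMS17} are reused, and the key step is exactly the one you identify — bounding the per-round KL between transcripts (including the realized graphs) by $\e$ times the KL of the two loss Bernoullis via conditioning on the edge-realization event, which yields the effective sample count $\e T$ and the stated rates after Pinsker, with the bias tuned to $\Theta(\sqrt{\alpha/(\e T)})$ (resp.\ $\Theta((\delta/(\e T\ln K))^{1/3})$). The paper additionally splits each case into subcases (e.g.\ $\alpha=1$, $\delta<100\ln K$) and invokes a combinatorial lemma on independent sets for the weakly observable construction, but these are refinements of the same argument you outline.
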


The lower bound in the non-observable case is the same as \citet[Theorem~6]{AlonCDK15} with a deterministic feedback graph.
The remaining lower bounds are nontrivial adaptations of the corresponding bounds for the deterministic case by \citet{AlonCDK15,AlonCGMMS17}. The main technical hurdle is due to the stochastic nature of the feedback graph, which needs to be taken into account in the proofs. The rationale behind the constructions used for proving the lower bounds is as follows: since each edge is realized only with probability $\e$, any algorithm requires $1/\e$ rounds in expectation in order to observe the loss of an action in the out-neighborhood of the played action, whereas one round would suffice with a deterministic feedback graph. 
Note that \Cref{thm:lower-bounds-summary} implies that, if $\supp{\scG_{\e(K,T)}}$ is not observable for $\e(K,T)$ as in \Cref{thm:block-result}, then there is no hope to achieve sublinear regret, as the lower bounds for both strongly and weakly observable supports are linear in $T$ for all $\e \le \e(K,T)$.


\section{Refined Graph-Theoretic Parameters}\label{sec:topo}
Although the results from Section \ref{sec:block} are worst-case optimal up to log factors, we may find that the factors $\sqrt{{\alpha(G_\e)}/{\e}}$ and $\left({\delta(G_\e)}/{\e}\right)^{1/3}$ for strongly and weakly observable $\supp{\scG_\e} = G_\e$, respectively, may be improved upon in certain cases. In particular, we show that, under additional assumptions on the feedback that we receive, we can obtain better regret bounds. To understand our results, we need some initial definitions. The weighted independence number for a graph $H = (V, E)$ and positive vertex weights $w(i)$ for $i \in V$ is defined as 
\begin{align*}
    \walpha(H, w) = \max_{S \in \I(H)} \sum_{i \in S}\nolimits w(i) \enspace,
\end{align*}
where $\I(H)$ denotes the family of independent sets in $H$.
We consider two different weight assignments computed in terms of any stochastic feedback graph $\scG$ with edge probabilities $p(i,j)$ and $\supp{\scG} = G$. They are defined as $w_{\scG}^-(i) = \bigl(\min_{j \in N^\text{in}_G(i)} p(j,i)\bigr)^{-1}$ and $w_{\scG}^+(i) = \bigl(\min_{j \in N^\text{out}_G(i)} p(i,j)\bigr)^{-1}$.
Then, the two corresponding weighted independence numbers are $\walpha^-(\scG) = \walpha(G, w_{\scG}^-)$ and $\walpha^+(\scG) = \walpha(G, w_{\scG}^+)$.
The parameter of interest for the results in this section is $\walpha(\scG) = \walpha^-(\scG) + \walpha^+(\scG)$. For more details on the weighted independence number see \Cref{sec:weighted-alpha}. 
We also use the following definitions of weighted weak domination number $\wdelta$ for a graph $H$ and positive vertex weights $w$, and self-observability parameter $\sigma$:
\begin{align*}
    \wdelta(H, w) = \min_{D \in \D(H)} \sum_{i \in D}\nolimits w(i) \enspace,
    && \sigma(\scG) = \sum_{i: i \in \inneigh{G}{i}}\nolimits ({p(i,i)})^{-1} \enspace,
\end{align*}
where $\D(H)$ denotes the family of weakly dominating sets in $H$. 
In this section, we focus on the weighted weak domination number $\wdelta(\scG) = \wdelta(G, w_{\scG}^+)$. To gain some intuition on the graph-theoretic parameters introduced above, consider the graph with only self-loops, also used in Example~\ref{ex:onlyselfloops} below. If all $p(i,i) = \e$, the learner needs to pull a single arm $1/\e$ times for one observation in expectation, and $K/\e$ times to see the losses of all arms once. However, when the edge probabilities are different we need to pull arms for $\sumK 1/{p(i, i)}$ times.
The weighted independence number, weighted weak domination and self-observability generalize this intuition and tell us how many observations the learner needs to see all losses at least once in expectation.
We now state the main result of this section. 
%
%
\begin{theorem}[Informal]\label{th:informalUCB}
    There exists an algorithm with per-round running time of $O(K^4)$ and whose regret is bounded (ignoring logarithmic factors) by
    \begin{align*}
        \min \Biggl\{ T,\, &\min_{\e} \Bigl\{ \sqrt{\walpha(\scG_\e)T} \,:\, \supp{\scG_\e} \textnormal{ is strongly observable}\Bigr\}, \\[-5pt]
        & \min_{\e}  \Bigl\{\left(\wdelta(\scG_\e)\right)^{1/3}T^{2/3} + \sqrt{\sigma(\scG_\e)T} \,:\, \supp{\scG_\e} \textnormal{ is observable}\Bigr\}\Biggr\} \enspace, 
    \end{align*}
\end{theorem}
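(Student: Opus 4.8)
The plan is to analyze the optimistic algorithm \otc, which exploits the richer feedback—observing the whole realized graph $G_t$ at the end of each round—to estimate the edge probabilities online. Since every ordered pair $(j,i)$ is sampled in \emph{every} round, the learner can maintain an empirical-Bernstein upper confidence bound $\ptilt(j,i)$ on the true $p(j,i)$, with width of the Bernstein type $\sqrt{\ptilt(j,i)\ln(K^2T^2)/(t-1)}$ plus a lower-order correction. These bounds play two roles: they define a surrogate \emph{UCB feedback graph} $\Gute$ (keeping the pairs whose bound exceeds the working threshold $\e$), which drives an \expg-type mirror-descent update; and they appear in the denominator of the importance-weighted estimator $\lhatt(i)=\ell_t(i)\,\ind{\{(I_t,i)\in E_t\}}\big/\big(\sum_{j\in\inneigh{G}{i}}\pi_t(j)\,\ptilt(j,i)\big)$. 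Using an \emph{upper} bound for $p(j,i)$ in the denominator is exactly the source of optimism: it turns $\lhatt$ into a slight underestimate of $\ell_t$ but guarantees that its second moment is controlled by $1/\big(\sum_j\pi_t(j)p(j,i)\big)$.

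First I would set up a high-probability event $\kappa$ (its complement $\BADEVENT$ has probability $O(1/\poly(KT))$) on which, simultaneously over all rounds and all pairs, $\ptilt(j,i)$ upper bounds $p(j,i)$ and moreover lies within a constant factor of it whenever $p(j,i)$ exceeds the working threshold. This follows from the same multiplicative/empirical-Bernstein concentration used for \Cref{thm:round-robin-estimates}, now applied round by round. On $\kappa$ the surrogate graph $\Gute$ contains every edge of $\supp{\scG_\e}$ and no spurious low-probability edge, so it inherits the observability type of $\supp{\scG_\e}$; this lets us transfer the deterministic graph-theoretic structure onto the estimated graph.

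Next I would run the standard FTRL regret decomposition for the negative-entropy regularizer: the regret splits into a penalty term of order $\ln K/\eta$, a stability term of order $\eta\sum_t\E{\sum_i\pi_t(i)\,\lhatt(i)^2}$, and a bias term coming from the gap between $\ptilt$ and $p$. The crux is to show, on $\kappa$, that $\E{\sum_i\pi_t(i)\lhatt(i)^2\mid\F_{t-1}}$ collapses onto the weighted graph parameters via the weighted analogues of the graph lemmas of \citet{AlonCDK15}: for strongly observable $\supp{\scG_\e}$ one obtains $\sum_i\pi(i)\big/\big(\sum_{j\in\inneigh{G}{i}}\pi(j)\,p(j,i)\big)\le\Otil(\walpha(\scG_\e))$, where the weights $1/p$ are precisely what produce the weighted independence number, and the split $\walpha=\walpha^-+\walpha^+$ reflects the two ways a vertex can be strongly observable (via a self-loop versus via being seen from all other vertices). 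Because on $\kappa$ the UCB and the true probability agree up to a constant on the relevant edges, replacing $p$ by $\ptilt$ costs only lower-order terms—this is what makes ``the cost of the estimator of the same order as if $p$ were known.'' Tuning $\eta$ then yields the $\sqrt{\walpha(\scG_\e)T}$ branch.

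For the weakly observable branch I would additionally mix a uniform exploration distribution supported on a weakly dominating set of $\Gute$, so that each weakly observable vertex is seen at a rate proportional to the relevant $p(j,i)$; balancing this exploration against the estimator variance yields the $(\wdelta(\scG_\e))^{1/3}T^{2/3}$ term, while the self-observable vertices contribute the separate $\sqrt{\sigma(\scG_\e)T}$ term through $\sigma(\scG_\e)=\sum_{i:\,i\in\inneigh{G}{i}}p(i,i)^{-1}$. The optimism-then-commit structure resolves the a priori unknown choice of regime: the algorithm starts assuming strong observability while continually maintaining an upper estimate of its own strongly observable regret, and at the switching time $\tstar$ it commits to the weakly observable configuration once the estimated weak bound becomes smaller; I would argue the switch occurs at most once and that $\tstar$, together with the data-dependent threshold, realizes the outer $\min$ over $\e$ and over the two regimes up to constants and logarithmic factors. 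Finally, the $O(K^4)$ per-round cost follows by counting operations—updating $K^2$ confidence bounds, forming and manipulating the $K\times K$ observation-probability matrix, the mirror-descent step, and the greedy exploration-set construction—none of which requires the \np-hard exact evaluation of $\walpha$ or $\wdelta$, since those appear only in the analysis. The hard part will be the bias–variance bookkeeping around the optimistic estimator: I must show simultaneously that the underestimation bias telescopes into a lower-order term and that the second moment reduces to the weighted parameters, all while the denominator uses $\ptilt$ rather than $p$, so that optimism buys variance control at essentially no extra bias beyond the confidence width; a close second is proving that the single phase switch indeed attains the minimum of the two rates.
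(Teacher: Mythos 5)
Your proposal follows essentially the same route as the paper's proof: empirical-Bernstein upper confidence bounds on the edge probabilities, the optimistic importance-weighted estimator with the UCB-based $\hat{\Pp}_t(i)$ in the denominator, the high-probability event $\kappa$, an exponential-weights decomposition whose bias term is controlled by the confidence widths (\Cref{lem:IWproperty,lem:additivebias}), a weighted Tur\'an-type argument reducing the second-order term to $\walpha^- + \walpha^+$ (\Cref{lem:strongwalpha,lem:weightedindependence}), exploration over a weakly dominating set in the second phase, and a one-shot switch governed by comparing a running upper bound on the strongly observable regret against the estimated weakly observable one, as in \eqref{eq:stopwhen}.

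Two points, as written, would not deliver the stated bound. First, you propose a \emph{uniform} exploration distribution over the weakly dominating set. That yields a second-order term of order $\abs{D}\big/\min_{(i,j)} p(i,j)$ rather than $\wdelta = \sum_{i\in D}\bigl(\min_{j} p(i,j)\bigr)^{-1}$, which can be larger by a factor of $K$; your own phrase ``seen at a rate proportional to the relevant $p(j,i)$'' describes the effect of a non-uniform scheme. The paper's distribution \eqref{eq:weaklyexploring} weights each dominating vertex $i$ proportionally to $\bigl(\min_{j} \phattij\bigr)^{-1}$ precisely so that every dominated vertex is observed at a roughly equal rate, and this weighting is what produces the weighted weak domination number in the bound. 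Second, your claim that $\walpha$ and $\wdelta$ ``appear only in the analysis'' is only half right: $\walpha$ is indeed never computed --- the algorithm uses the explicitly computable surrogate $\theta_t$ of \eqref{eq:UCBthreshold}, and only the analysis relates it to $\walpha$ --- but $\wdelta$ enters the algorithm itself, both in the switching criterion $\Lambda_t$ and in the construction of the exploration set $D^{\star}_t$. The paper handles this with a greedy $(\ln K + 1)$-approximation via set cover, paying an extra $(\ln K)^{1/3}$ in the regret; the $O(K^4)$ per-round cost comes from running that $O(K^2)$ computation over the $K^2$ candidate thresholds, not merely from bookkeeping the confidence bounds and the mirror-descent step.
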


The regret bound in \Cref{th:informalUCB} follows from \Cref{th:finalbeyondsanity} in \Cref{app:OTC}. The running time bound is determined by approximating $\wdelta$ for all $K^2$ possible thresholds.
In each of the thresholded graphs, we can compute a $(\ln(K) + 1)$-approximation for the weighted weak domination number in $O(K^2)$ time by reduction to set cover \citep{vazirani2001approximation}.
Doing so only introduces an extra factor of order $(\ln(K))^{1/3}$ in the regret bound. 

An important property of the bound in Theorem~\ref{th:informalUCB} is that it is never worse than the bounds obtained before. The following example shows that the regret bound in Theorem \ref{th:informalUCB} can also be better than previously obtained regret bounds. 
%
%
\begin{example}[Faulty bandits]\label{ex:onlyselfloops}
    Consider a stochastic feedback graph $\scG$ for the $K$-armed bandit setting: $p(i,i) = \e_i \in (0, 1]$ for all $i \in V$ and $p(i,j) = 0$ for all $i \neq j$.
    In this case, the regret of \explore
    is $\tilde{O}\bigl(\sqrt{KT/(\min_{i} \e_i)})$.
    On the other hand, \Cref{th:informalUCB} provides the bound $\tilde O\bigl(\sqrt{T\sum_{i}(1/{\e_i})}\bigr)$, as $\walpha(\scG) = 2\sum_i 1/\e_i$.
    In the special case when $\e_i = \e \in (0,1]$ for some $i \in V$ while $\e_j=1$ for all $j \neq i$, the regret of \explore
    is  $\tilde O(\sqrt{KT/\e})$, while Theorem~\ref{th:informalUCB} guarantees a $\tilde O(\sqrt{(K + 1/\e) T})$ regret bound.
\end{example}

To derive these tighter bounds, we exploit the additional assumption that the realized feedback graph $G_t$ is observed at the end of each round. This allows us to \emph{simultaneously} estimate the feedback graph and control the regret, rather than performing these two tasks sequentially 
as in \Cref{sec:block}. In particular, we use this extra information to construct a novel importance-weighted estimator for the loss, which for $t > 1$ is defined to be
%
%
\begin{align}\label{eq:IWestimator}
    \tilde{\ell}_t(i) = \frac{\ell_t(i)}{\hat{\Pp}_t(i)} \idseeit \enspace,
\end{align}
where $\hat{\Pp}_t(i) = \sum_{j \in N^\text{in}_{\hat{G}_t}(i)}\pi_t(j)\phattji$ is the estimated probability of observing the loss of arm $i$ at round $t$, $\pi_t(i)$ is the distribution we sample $I_t$ from, and $\hat{G}_t$ is the support of the estimated graph $\hat{\scG}_t$. Note that we ignore losses that we receive due to missing edges in $\hat{G}_t$. We show that we pay an additive term in the regret for wrongly estimating an edge, which is why it is important to control which edges are in $\hat{G}_t$. 
Ideally, we would use $\Pp_t(i) = \sum_{j \in N^\text{in}_{\hat{G}_t}(i)}\pi_t(j)\pji$ rather than $\hat{\Pp}_t(i)$, as this is the true probability of observing the loss of arm $i$ in round $t$. However, since  we do not have access to $\pji$, we use instead an upper confidence estimate of $\pji$ for rounds $t \geq 2$ given by
\begin{align*}
    \phattji = \ptilt(j,i) + \UCBt \enspace,
\end{align*}
where $\ptilt(j,i) = \ptiltji$.
We denote by $\hat{\scG}_t^{\text{UCB}}$ the stochastic graph with edge probabilities $\phattji$. Note that the support of $\hat{\scG}_t^{\text{UCB}}$ is a complete graph because $\phattji > 0$ for all $(j,i) \in V \times V$.
These estimators for the edge probabilities are sufficiently good for our purposes whenever event $\kappa$ occurs, which we define as the event that, for all $t \geq 2$,
\[
\abs*{\ptilt(j,i) - \pji} \leq \UCBt, ~~ \forall (j,i) \in V \times V \enspace.
\]
An important property of $\ltilt$ can be found in \Cref{lem:informalIWproperty} below. It tells us that we may~treat $\ltilt$ as if event $\kappa$ is always realized, i.e., $\phattji$ is always an upper bound estimator on $\pji$. The proof of \Cref{lem:informalIWproperty} is implied by \Cref{lem:IWproperty} in \Cref{app:OTC}. 
\begin{lemma}[Informal]\label{lem:informalIWproperty}
    Let $e_k$ denote the basis vector with $e_k(i) = \ind{\{i=k\}}$ as $i$-th entry for each $i \in [K]$.
    The loss estimate $\ltilt$ defined in~\eqref{eq:IWestimator} satisfies
    %
    \begin{align}\label{eq:IWpropertyinf}
        R_T
        & =  \tilde{O}\Biggl(\E{\sumTt  \sqrt{\sumK  \frac{\pi_t(i)}{(t-1)\hat{\Pp}_t(i)}}~\middle|~\kappa} + \max_{k \in V} \E{\sumTt \sumK \bigl(\pi_t(i) - e_k(i)\bigr)\ltilt(i)~\middle|~\kappa}\Biggr).
    \end{align}
\end{lemma}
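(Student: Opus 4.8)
The plan is to begin from the standard regret identity against a fixed comparator $k$ and replace the true losses by the importance-weighted estimates $\ltilt$, carefully tracking the bias introduced by using the upper confidence estimates $\phattji$ in place of the unknown true probabilities $\pji$. First I would compute the conditional expectation of the estimator: since $I_t\sim\pi_t$ and each edge $(j,i)$ is realized independently with probability $\pji$, the probability (given the history) that $\idseeit=1$ is exactly $\Pp_t(i)=\sum_{j\in\inneigh{\hat{G}_t}{i}}\pi_t(j)\pji$, whence $\Et{\ltilt(i)}=\ell_t(i)\,\Pp_t(i)/\hat{\Pp}_t(i)$. Setting $b_t(i)=\ell_t(i)-\Et{\ltilt(i)}=\ell_t(i)\bigl(\hat{\Pp}_t(i)-\Pp_t(i)\bigr)/\hat{\Pp}_t(i)$ for the per-arm bias, the comparator gap decomposes as
\[
    \sumK(\pi_t(i)-e_k(i))\ell_t(i)=\sumK(\pi_t(i)-e_k(i))\Et{\ltilt(i)}+\sumK\pi_t(i)\,b_t(i)-b_t(k).
\]

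The crucial structural fact is that on the event $\kappa$ we have $\phattji\ge\pji$ for every pair, hence $\hat{\Pp}_t(i)\ge\Pp_t(i)$ and therefore $b_t(i)\ge 0$ for all $i$. In particular $-b_t(k)\le 0$, so the comparator bias can only help and may be dropped. For the first sum, the tower rule together with the $\F_{t-1}$-measurability of $\pi_t$ and $e_k$ gives $\E{(\pi_t(i)-e_k(i))\Et{\ltilt(i)}}=\E{(\pi_t(i)-e_k(i))\ltilt(i)}$; summing over $t$ and maximizing over $k$ recovers exactly the second term of the claim. Since the remaining bias sum $\sumK\pi_t(i)b_t(i)$ does not depend on $k$, the maximum over comparators passes through cleanly.

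It then remains to bound $\E{\sumTt\sumK\pi_t(i)b_t(i)\mid\kappa}$ by the first term. Here I would invoke the explicit shape of the UCB: on $\kappa$ the one-sided deviation satisfies $\phattji-\pji\le 2\bigl(\UCBt\bigr)$, and since $\ptilt(j,i)\le\phattji$ the width itself is at most $\sqrt{2\phattji\ln(3K^2T^2)/(t-1)}+3\ln(3K^2T^2)/(t-1)$. Summing over $j\in\inneigh{\hat{G}_t}{i}$ with weights $\pi_t(j)$ and applying Cauchy--Schwarz --- using $\sum_j\pi_t(j)\le 1$ and $\sum_j\pi_t(j)\phattji=\hat{\Pp}_t(i)$ --- yields $\hat{\Pp}_t(i)-\Pp_t(i)\lesssim\sqrt{\ln(3K^2T^2)\,\hat{\Pp}_t(i)/(t-1)}$, up to a lower-order $1/(t-1)$ contribution. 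Dividing by $\hat{\Pp}_t(i)$, bounding $\ell_t(i)\le 1$, summing over $i$ with weights $\pi_t(i)$, and applying Cauchy--Schwarz a second time (again with $\sum_i\pi_t(i)=1$) gives $\sumK\pi_t(i)b_t(i)\lesssim\sqrt{\ln(3K^2T^2)\sumK\pi_t(i)/\bigl((t-1)\hat{\Pp}_t(i)\bigr)}$, which is precisely the first term of the claim up to the hidden logarithmic factor; the leftover $1/(t-1)$ pieces sum to a lower-order term absorbed into $\tilde{O}$.

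Finally, to pass from the conditional bounds back to $R_T$ I would treat the complementary event $\BADEVENT$ separately: its probability is polynomially small (by the union bound defining $\kappa$), and since losses lie in $[0,1]$ its total contribution is at most $T\cdot\P{\BADEVENT}=\tilde{O}(1)$. The main obstacle is the bias step: because the estimator is genuinely biased we cannot lean on unbiasedness as in the usual analysis, so the argument rests on two delicate points --- (i) the sign of the bias on $\kappa$, which is what neutralizes the comparator term $-b_t(k)$, and (ii) the precise form of the confidence width, in particular the appearance of $\ptilt(j,i)$ under the square root, which is exactly what lets the two nested Cauchy--Schwarz steps collapse the deviation into $\sqrt{\sumK\pi_t(i)/\bigl((t-1)\hat{\Pp}_t(i)\bigr)}$ rather than a strictly larger quantity.
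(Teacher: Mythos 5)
Your proposal is correct and follows essentially the same route as the paper's proof of Lemma~\ref{lem:IWproperty}: your bias decomposition $b_t(i)=\ell_t(i)\bigl(\hat{\Pp}_t(i)-\Pp_t(i)\bigr)/\hat{\Pp}_t(i)$ is the expectation of the paper's pathwise identity in Lemma~\ref{lem:additivebias} with $\xi_t(i)=\hat{\Pp}_t(i)-\Pp_t(i)$, the sign argument on $\kappa$ that lets you drop $-b_t(k)$ is exactly how the paper discards the comparator part of the bias, and your two Cauchy--Schwarz steps on the confidence width reproduce the chain leading to \eqref{eq:firstxibound}, with the $\P{\BADEVENT}\cdot T$ term handled identically. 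No gaps worth flagging beyond what the paper itself absorbs into the $\tilde{O}$ of the informal statement.
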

\Cref{lem:informalIWproperty} shows that we only suffer $\tilde{O}\Bigl(\sqrt{\sumTt \sumK \frac{\pi_t(i)}{\Pp_t(i)}}\Bigr)$ additional regret compared to when we know $\pji$. \Cref{lem:informalIWproperty} also shows that $\ltilt$ behaves nicely in the sense that, conditioned on $\kappa$, we have $\tilde{\ell}_t(i)  \leq \frac{\ell_t(i)}{\Pp_t(i)} \idseeit$.
This is a crucial property to bound the regret of our algorithm. We show that, with a modified version of \expg~\citep{AlonCDK15}, the second sum on the right-hand side of~\eqref{eq:IWpropertyinf} is bounded by a term of order $\sqrt{\sumTt \sumK \frac{\pi_t(i)}{\Pp_t(i)}}$, meaning that the regret is also bounded similarly.
Our final step is to prove that the above 
term is bounded in terms of the minimum of the weighted independence number and the weighted weak domination number plus self-observability.

\begin{ack}
Nicolò Cesa-Bianchi, Federico Fusco and Dirk van der Hoeven gratefully acknowledge partial support by the MIUR PRIN grant Algorithms, Games, and Digital Markets (ALGADIMAR). Nicolò Cesa-Bianchi and Emmanuel Esposito were also supported by the EU Horizon 2020 ICT-48 research and innovation action under grant agreement 951847, project ELISE. Federico Fusco was also supported by the ERC Advanced Grant 788893 AMDROMA “Algorithmic and Mechanism Design Research in Online Markets”.
\end{ack}

\bibliographystyle{plainnat}
\DeclareRobustCommand{\VAN}[3]{#3} 
\bibliography{references.bib}
\DeclareRobustCommand{\VAN}[3]{#2} 

\clearpage

\appendix

\section{On the Computation of the Optimal Probability Thresholds} \label{app:computation-notes}

    The tasks of finding the independence number and (weak) domination number in a graph are notoriously $\np$-hard problems.
    In particular, while for the domination number, by a reduction to set cover, a simple greedy approach yields a logarithmic (in the number $K$ of nodes) approximation \citep{vazirani2001approximation}, for the independence number it is known that even computing a $K^{1-\epsilon}$-approximation is hard, for any $\epsilon > 0$ \citep{hastad1999clique,Zuckerman2007inapprox}.
    
     Our algorithm \optimist solves these computational aspects directly, whereas the hardness of finding $\alpha^*$ and $\delta^*$ may limit the applicability of \explore in instances with a large and complex action space.
     In fact, the computation of the stopping function $\Phi$ involves finding the best thresholds $\e^*_s$ and $\e^*_w$, defined in \Cref{def:eps_s,def:eps_w}, and therefore repeatedly solving $\np$-hard problems.
     In what follows, we present some observations that clarify to which extent (and at which cost) \explore can still be implemented efficiently.
     
     First, it is important to note that our algorithm is robust with respect to approximate knowledge of the topological parameters: the definition of $\Phi$ can be tweaked as to consider the approximation factor at the cost of having the same factor showing up in the regret bound (with the same order as the approximated graph parameter).
     This partly solves the problem for weakly observable graphs (as the $(\log(K)+1)$-approximation only gives and extra $\polylog(K)$ in the regret) and for the classes of graphs where it is possible to efficiently compute good approximations of the independence number, e.g., planar graphs \citep{Baker94} or bounded-degree graphs \citep{halldorsson1994greed}. 
     
     Another approach consists in considering the fractional solutions of the independence and domination number linear programs.
     While for the former we obtain an approximation given by the integrality gap, for the latter we can show a tight dependence on the fractional weak domination number (thus improving the regret bound), as in \citet{Chen21}.
     
     Furthermore, note that it is always possible to ignore the $\alpha$ and $\delta$ terms in the definition of~$\Phi$; it is not hard to see that such an approach yields a regret bound (ignoring polylog terms) of the type $\min\{\sqrt{(K/\e_1)T}, (K/\e_2)^{1/3} T^{2/3}\}$, where $\e_1$, respectively $\e_2$, is the largest $\e$ such that $\supp{\scG_{\e}}$ is strongly, respectively weakly, observable.
     Although suboptimal, this drastic approach gives a regret bound with an optimal dependence on the $T$ and $\e$ terms (as $\e^*_s \le \e_1$ and $\e^*_w \le \e_2$).
     
    Finally, we conclude by discussing how it is possible to drastically reduce the number of times that \explore calls the routine to compute $\alpha$ and $\delta$, at the cost of losing a small multiplicative factor in the regret.
    Crucially, we do not need to check the stopping condition involving $\Phi$ in every single round: it suffices to do so for a logarithmic number of times. 
    Assume, in fact, to check the stopping condition in \round only when $\tau$ is a power of $2$, i.e., $\tau=2^b$ for some integer $b$.
    This single check covers all rounds $\tau'$ such that $\tau/2 = 2^{b-1} \le \tau' \le 2^b = \tau$.
    On the stochastic graph estimate $\hat{\mathcal{G}}_{\tau}$ we can compute $\alpha_{\varepsilon_\tau}/\varepsilon_\tau$ and $\delta_{\varepsilon_\tau}/\varepsilon_\tau$, which are also $2$-approximations for the best respective ratios on any thresholded graph corresponding to rounds of \round between $\tau/2$ and $\tau$ (note that such an approach would also improve the dependency of $\e_\tau$ and $\Delta$ on $T$ in Theorems~\ref{thm:round-robin-estimates} and~\ref{thm:blocks-reduction}, and thus in the regret bound, from $\log(T)$ down to $\log(\log(T))$ due to an improved union bound).
    

\section{Missing Results from Section~\ref{sec:block}} \label{sec:block-alg-appendix}

\subsection{Proof of Theorem~\ref{thm:round-robin-estimates}}

\retheoremroundrobinestimates*
    \begin{proof}[Proof of \Cref{thm:round-robin-estimates}]
    For all edges $e$ and time steps $\tau \le \ftau$, we define the following two events: the event $\calE_{e}^{\tau} = \left\{\hat p^{\tau}_e \ge \e_{\tau} \right\}$ that $e$ belongs to the support of $\hat \scG_{\tau}$, and the event $\F_e^{\tau} = \left\{|\hat p_e^{\tau} - p_e| \le p_e/2\right\}$ that $\hat p^{\tau}_e$ is well estimated. For all $\tau \le \ftau$, we also define large and small edges in $E$ according to their probabilities: $\lEtau = \{e \in V^2 \,:\, p_e \ge 2\e_{\tau}\}$ and $\sEtau = \{e \in V^2 \,:\, p_e < \e_{\tau}/2\}$.
    
    First, we look at the complementary event of $\calE_{e}^{\tau}$ for any $\tau \le \ftau$ and $e \in \lEtau$. We have:
    \[
        \P{\overline{\calE}_e^{\tau}} = \P{\hat p_e^{\tau} < \e_{\tau}} \le \P{\hat p_e^{\tau} \le p_e/2} = \P{\hat p_e^{\tau} - p_e\le - p_e/2} \le \mye^{-\frac{\tau}{8}p_e } \le \mye^{-\frac{ \tau}{4}\e_{\tau}}  \le \frac{1}{4KT^2} \enspace.
    \]
    Note that in the first and second to last inequalities we used the fact that $p_e \ge 2 \e_{\tau}$, in the last inequality the definition of $\e_{\tau}$ and the fact that $K \ge 2$, while in the second inequality we applied the Chernoff lower bound (multiplicative version, see \citet[part~$2$ of Theorem~4.5]{Eli05}) on the estimator $\hat p_e^{\tau}$.
    
    If we call $\calE$ the event corresponding to part~1 of \Cref{def:eps-good-approx},
    we have the following:
    \begin{equation}
    \label{eq:calE}
        \P{\calE} =  \P{\bigcap_{\tau \le \ftau} \bigcap_{e \in \lEtau} \calE_e^{\tau}} \ge 1 - \sum_{\tau \le \ftau}\sum_{e \in \lEtau}\P{\hat p_e^{\tau} < \e_{\tau}}\ge 1 - \sum_{\tau\le\ftau} \frac{|\lEtau|}{4KT^2} \ge 1 - \frac{1}{4T} \enspace,
    \end{equation}
    where we used that $|\lEtau| \le K^2$ for all $\tau \le \ftau \le T/K$ with probability $1.$
    
    Next, we study the complementary event of $\F_e^{\tau}$ for $e \not \in \sEtau$. For such $e$ and any $\tau \le \ftau$, we can directly use the two-sided Chernoff bound (multiplicative version, as in
    \citet[Corollary~4.6]{Eli05}) on the estimator $\hat p_e^{\tau}$:
    \[
        \P{\overline{\F}_e^{\tau}} = \P{|\hat p_e^{\tau} - p_e| > \frac 12 p_e} \le 2 \mye^{-\frac{\tau}{12} p_e} \le 2 \mye^{-\frac{\tau}{24}\e_{\tau}} \le \frac{1}{2KT^2} \enspace.
    \]
    Note that we used the definition of $\e_{\tau}$ and the facts that $2 p_e \ge \e_{\tau}$ and $K,T\ge 2$. Now, if we call $\F$ the event corresponding to part~2 of \Cref{def:eps-good-approx},
    we can proceed via union bounding as in \Cref{eq:calE} and get 
    \begin{equation}
    \label{eq:calF}
        \P{\F} = \P{\bigcap_{\tau \le \ftau} \bigcap_{e \notin \sEtau} \F_e^{\tau}} \ge 1 - \frac{1}{2T} \enspace.
    \end{equation}
    As a third step, we get back to the $\calE_{e}^{\tau}$ events, but we consider $e \in \sEtau$. For $\tau \le \ftau$ and $e \in \sEtau$ we have:
    \[
        \P{\calE_{e}^{\tau}} = \P{\hat p_e^{\tau} \ge \e_{\tau}} \le \P{\hat p_e^{\tau} - p_e \ge \frac{1}2 \e_{\tau}} =
        \P{\hat p_e^{\tau} - p_e \ge x  p_e},
    \]
    where we used $p_e < \e_{\tau}/2$ and named $x = \e_{\tau}/(2p_e) > 1$. At this point we can use the Chernoff upper bound (multiplicative version, see \citet[part~$1$ of Theorem~4.4]{Eli05} with $\delta = x$) and obtain:
    \begin{align*}
        \P{\calE_{e}^{\tau}} \le \P{\hat p_e^{\tau} - p_e \ge x  p_e} \le \left(\frac{\mye^{x}}{(1+x)^{1+x}}\right)^{\tau p_e} \le \mye^{-\frac{\tau}{3} x p_e} = \mye^{-\frac{\tau}{6} \e_{\tau}} \le \frac{1}{4KT^2} \enspace.
    \end{align*}
    The third inequality follows from $2x/(2+x) \le \ln(1+x)$ which holds for all positive $x$:
    \[
        \frac{\mye^{x}}{(1+x)^{1+x}} = \mye^{x - (1+x)\ln(1+x)} \le \mye^{-x^2/(2+x)} \le \mye^{-x/3}, \quad \forall x \ge 1 \enspace.
    \]
    If we now call $\scC$ the event described in part~3 of \Cref{def:eps-good-approx},
    we get, using the bound on $ \P{\calE_{e}^{\tau}}$ and a union bound as in \Cref{eq:calE,eq:calF}:
    \begin{equation}
    \label{eq:calC}
        \P{\scC} = \P{\bigcap_{\tau \le \ftau} \bigcap_{e \in \sEtau} \overline{\calE}_e^{\tau}} \ge 1 - \frac{1}{4T} \enspace.
    \end{equation}
    The theorem then follows by a union bound on the complementary events of $\calE, \F$ and $\scC$.
    \end{proof}

\subsection{Proof of Theorem~\ref{thm:blocks-reduction}}

    In order to prove the regret bound achieved by \block, we need to show that it is able to compute unbiased estimators for the average loss of observed actions within each time block.
    This property is guaranteed  as long as the learner plays consistently a same action within each time block, and conditioned on the event that each action in the support out-neighborhood of the chosen action is observed at least once in the respective time block (depending on the realizations of the feedback graph).

    \begin{lemma} \label{lem:unbiased-estimator-blocks}
    Let $G = \supp{\scG}$ and $c_{\tau}$ and $\hat c_{\tau}$ defined as in \Cref{eq:c_tau,eq:blocks-estimator}. For each block $B_{\tau}$, if the learner plays consistently action $a$, then for each $a' \in \outneigh{G}{a}$ the estimators $\hat c_{\tau}(a')$ are unbiased under $\calE^\tau_{(a,a')}$:
    \[
        \E{\hat c_\tau(a') \,\middle|\, \calE^\tau_{(a,a')}} = c_{\tau}(a') \;, \quad \forall a' \in \outneigh{G}{a} \enspace.
    \]
    \end{lemma}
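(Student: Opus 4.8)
The plan is to exploit the exchangeability of the edge realizations inside the block. Fix the block $B_\tau$ and the consistently played action $a$, and fix a target $a' \in \outneigh{G}{a}$. Writing $X_t = \ind{\{(a,a') \in E_t\}}$ for $t \in B_\tau$ and $p = p(a,a')$, the random variables $\{X_t\}_{t \in B_\tau}$ are i.i.d.\ $\Bern{p}$: since $a$ is played at every $t \in B_\tau$ and the graphs $G_t \sim \scG$ are drawn independently, the presence of edge $(a,a')$ at each round is an independent coin flip with bias $p$. Crucially, the losses $\ell_t(a')$ are fixed in advance by the oblivious adversary and are therefore deterministic with respect to the graph randomness, so the only randomness in $\hat c_\tau(a')$ comes from the $X_t$. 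In this notation $\Delta^\tau_{(a,a')} = \sum_{t \in B_\tau} X_t =: m$ and $\calE^\tau_{(a,a')} = \{m \ge 1\}$.

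The main obstacle is that the estimator $\hat c_\tau(a') = \frac{1}{m}\sum_{t} \ell_t(a') X_t$ has a random denominator $m$ that is correlated with its numerator, so taking expectations term by term is invalid. I would resolve this by conditioning on the value of $m$. For any fixed $k \ge 1$, conditioned on $\{m = k\}$ the locations of the $k$ realized edges are uniformly distributed over all $k$-subsets of $B_\tau$ (the standard fact that, for i.i.d.\ Bernoulli trials, the success positions given the success count are exchangeable). Hence $\P{X_t = 1 \mid m = k} = k/\Delta$ for every $t \in B_\tau$.

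Putting this together, on the event $\{m = k\}$ I would compute
\[
    \E{\hat c_\tau(a') \,\middle|\, m = k}
    = \frac{1}{k}\sum_{t \in B_\tau} \ell_t(a')\,\P{X_t = 1 \mid m = k}
    = \frac{1}{k}\sum_{t \in B_\tau} \ell_t(a')\,\frac{k}{\Delta}
    = \frac{1}{\Delta}\sum_{t \in B_\tau} \ell_t(a') = c_\tau(a'),
\]
where the first equality uses that $m = k$ on this event and that the $\ell_t(a')$ are constants. Since this value equals $c_\tau(a')$ for every $k \ge 1$ — in particular it does not depend on $k$ — the final step is to average over the conditional law of $m$ given $\calE^\tau_{(a,a')} = \{m \ge 1\}$ via the tower property:
\[
    \E{\hat c_\tau(a') \,\middle|\, m \ge 1}
    = \sum_{k \ge 1} \P{m = k \mid m \ge 1}\, c_\tau(a')
    = c_\tau(a'),
\]
which is exactly the claim. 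I expect the exchangeability step to be the conceptual heart of the argument; once the symmetry $\P{X_t = 1 \mid m = k} = k/\Delta$ is in place, the cancellation of $k$ between numerator and the random denominator is what makes the estimator unbiased, and everything else is routine bookkeeping.
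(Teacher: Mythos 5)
Your proposal is correct and follows essentially the same route as the paper's proof: both condition on the number $r$ of realizations of the edge $(a,a')$ in the block, invoke the exchangeability of i.i.d.\ Bernoulli trials to get $\P{(a,a') \in E_t \mid \Delta^\tau_{(a,a')} = r} = r/\Delta$, observe that the $r$ cancels against the random denominator, and then average over $r \ge 1$ under the conditioning event. The only cosmetic difference is the order of operations (the paper first applies linearity over $t \in B_\tau$ and then decomposes by the value of the count, while you condition on the count first), which does not change the argument.
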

    \begin{proof}
    Recall that $\calE^\tau_{(a,a')}$ is the event that the edge $(a,a')$ in $G$ is observed at least once in block $B_\tau$. Substituting the definition~\eqref{eq:blocks-estimator} of the estimator, we can write
    \[
        \E{\hat c_\tau(a') \,\middle|\,  \calE^\tau_{(a,a')}} = \sum_{t \in B_\tau} \ell_t(a') \E{\frac{\ind{\{(a,a') \in E_t\}}}{\Delta^\tau_{(a,a')}} \,\middle|\,  \calE^\tau_{(a,a')}} \enspace.
    \]
    Now we just need to prove that the expectation in the right-hand side is equal to $1/\Delta$:
    \begin{align*}
    \E{\frac{\ind{\{(a,a') \in E_t\}}}{\Delta^\tau_{(a,a')}} \,\middle|\, \calE^\tau_{(a,a')}}
    &= \sum_{r=1}^{\Delta} \E{\frac{\ind{\{(a,a') \in E_t\}}}{r} \,\middle|\, \Delta^\tau_{(a,a')}=r} \P{\Delta^\tau_{(a,a')}=r \,\middle|\, \calE^\tau_{(a,a')}} \\
    &= \sum_{r=1}^{\Delta} \frac1r \P{(a,a') \in E_t \,\middle|\, \Delta^\tau_{(a,a')}=r}\P{\Delta^\tau_{(a,a')}=r \,\middle|\, \calE^\tau_{(a,a')}} \\
    &= \frac1\Delta \sum_{r=1}^{\Delta} \P{\Delta^\tau_{(a,a')}=r \,\middle|\, \calE^\tau_{(a,a')}} = \frac 1{\Delta} \enspace.
    \end{align*}
    Note that in the third equality we used the fact that, conditioned on $\Delta^\tau_{(a,a')}=r>0$, the $r$ time steps when $(a,a') \in E_t$ are distributed uniformly at random in the $\Delta$ time steps.
    \end{proof}
    
    We can now prove the regret bound of \block in \Cref{thm:blocks-reduction}, which we restate below.
    Its regret depends on the performance of the algorithm $\A$ used on the meta-instance derived from the blocks reduction.
    
\retheoremblocksreduction*
    \begin{proof}[Proof of \Cref{thm:blocks-reduction}]
        Consider the partition of the $T$ time steps into $N$ blocks $B_1, \dots, B_N$ of equal size $\Delta$ and let $\calE$ be the clean event, corresponding to all edges $e$ in the graph $\supp{\hat \scG} = \hat G = (V,E)$ being realized at least once in each block. Formally, $\calE = \bigcap_{\tau=1}^N \bigcap_{e \in E} \calE_{e}^{\tau}$, where $\calE_{e}^{\tau}$ are defined as in the proof of \Cref{lem:unbiased-estimator-blocks}.
        By \Cref{def:eps-good-approx} (part~3), all the edges $e \in E$ have a probability $p_e$ in $\scG$ that is at least $\e/2$. Thus, it is immediate to verify that
         \[
            \P{\calE^{\tau}_e} = 1 - (1-p_e)^{\Delta} \ge 1 - \left(1-\frac\e 2\right)^{\Delta} \ge 1-\mye^{-\e\Delta/2}\ge 1 - \frac{1}{K^2T^2}
         \]
         holds for any edge $e \in E$ using our choice of $\Delta$.
         We show by union bound that the probability any of these edges never realizes in some block is
         \[
         \P{\bigcup_{\tau \le N}\bigcup_{e\in E} \overline{\calE}^{\tau}_e} \le \sum_{\tau \le N}\sum_{e\in E} \P{\overline{\calE}^{\tau}_e} \le \frac{1}{T} \enspace,
         \]
         where we used that there are at most $K^2$ directed edges (including self-loops) in $\hat G$ and we substituted the chosen values of $N$ and $\Delta.$

        We can then bound the overall regret $R_T$ as follows:
        \begin{equation} \label{eq:blocks-regret-1}
            R_T \le \E{\sum_{t=1}^{T} \ell_t(I_t)\,\middle|\, \calE} - \min_k \sum_{t=1}^{T} \ell_t(k)  + T\cdot\P{\overline{\calE}} + (T - \Delta N) \enspace.
        \end{equation}
        Note that the final term is an upper bound to the regret in the final time steps of the algorithm. We just showed that $\P{\overline \calE}$ is smaller than $1/T$. This, together with the fact that $T - \Delta N$ is at most $\Delta - 1$, gives the additive $\Delta$ we have in the final statement. 
        
        We now focus on the remaining term, which corresponds to the regret conditioned on $\calE$. It is equal to
        \begin{align}\nonumber
            \E{\sum_{t=1}^{T} \ell_t(I_t) \,\middle|\, \calE} - \min_k \sum_{t=1}^{T} \ell_t(k)
            &= \Delta \cdot \left( \E{\sum_{\tau=1}^{N} \sum_{t\in B_\tau} \frac{\ell_t(I_\tau)}{\Delta} \,\middle|\, \calE} - \min_k \sum_{\tau=1}^{N} \sum_{t\in B_\tau} \frac{\ell_t(k)}{\Delta} \right) \\ \label{eq:part2}
            &= \Delta \cdot \left( \E{\sum_{\tau=1}^{N} c_\tau(I_\tau) \,\middle|\, \calE} - \min_k \sum_{\tau=1}^{N} c_\tau(k) \right) \enspace,
        \end{align}
        where, we recall it, $c_\tau(i)$ is the average loss of action $i$ in block $B_\tau$.
        Indeed, our algorithm chooses the same action $I_t = I_\tau$ for all time steps $t \in B_\tau$, and the decision is based on algorithm $\mathcal{A}$.
        
        Consider now the loss estimates $\hat c_1, \dots, \hat c_N$ that we provide to algorithm $\mathcal{A}$.
        These estimates are such that $\E{\hat c_\tau(i) \,\middle|\, \calE} = c_\tau(i)$ by \Cref{lem:unbiased-estimator-blocks}. Note that conditioning on $\calE$ instead that on the single $\calE_e^{\tau}$ does not affect the fact that the estimators are unbiased: this is due to the fact that the edge realizations are independent from the losses and the strategy of the learner.

        Therefore, letting $k^*$ be the action minimizing $c_1(k)+\cdots+c_T(k)$ over $k=1,\ldots,K$,
        \begin{align}\label{eq:part3}
            \E{\sum_{\tau=1}^{N} c_\tau(I_\tau) \,\middle|\, \calE} - \min_k \sum_{\tau=1}^{N} c_\tau(k)
            = \E{\sum_{\tau=1}^{N} \hat c_\tau(I_\tau) - \sum_{\tau=1}^{N} \hat c_\tau(k^*) \,\middle|\, \calE}
            \le R^\mathcal{A}_N(\hat G) \enspace,
        \end{align}
        where $R^\mathcal{A}_N(\hat G)$ is the regret bound of algorithm $\mathcal{A}$ given losses $\hat c_1, \dots, \hat c_N$ and feedback graph $\hat G = \supp{\hat \scG}$. Finally, substituting \Cref{eq:part2,eq:part3} into \Cref{eq:blocks-regret-1} yields the desired bound.
    \end{proof}
    
\subsection{Proof of Corollary~\ref{cor:blocks-reduction}}
\recorblock*
    \begin{proof}[Proof of \Cref{cor:blocks-reduction}]
        The statement follows from \Cref{thm:blocks-reduction}, the assumption on $\e$ (which lets us safely handle the additive $\Delta$ term), and the fact that \expg achieves regret 
        $
            R^\A_N \le C_s\sqrt{\alpha N}\ln(KN)
        $
        on strongly observable graphs, and regret 
        $
            R^\A_N \le C_w (\delta \ln K)^{1/3}N^{2/3}
        $
        on (weakly) observable graphs.
    \end{proof}

\subsection{Proof of Theorem~\ref{thm:block-result}}
    To prove \Cref{thm:block-result} we first need two preliminary lemmata. In \Cref{lem:prop_Phi} we present some generic properties of the stopping function $\Phi(\scG,T)$, while in \Cref{lem:hatG} we prove that $\Phi(\scG,T-\ftau K)$ is indeed the regret obtained in \block after the stopping condition in \round is triggered.
    \begin{lemma}\label{lem:prop_Phi}
        Let $\scG$ be a stochastic feedback graph such that $\Phi(\scG,T) \neq \infty$, and let $\e^*$ be the threshold where the $\argmin$ in the definition of $\Phi(\scG,T)$ is attained. Consider a run of the algorithm \explore where \round does not fail while using the stopping function $\Phi$ defined in \Cref{eq:def_Phi}. We have the following:
        \begin{itemize}[topsep=0pt,parsep=0pt,itemsep=0pt]
            \item[$(i)$] $\Phi(\hat\scG_{\tau'}, T) \le 2 \Phi(\hat\scG_{\tau}, T) $, for all $\tau,\tau'$ such that $\tau \le \tau' \le \ftau$,
            \item[$(ii)$] $\Phi(\hat \scG_{\tau},T)\le \sqrt{2} \Phi(\scG,T)$ for all $\tau$ such that  $120 \ln(KT)/\e^* \le \tau \le \ftau$ (if such $\tau$ exists),
        \end{itemize}
        where $\ftau \le \floor{T/K}$ is the index of the last iteration of the outer for loop in \Cref{alg:round-robin}.
    \end{lemma}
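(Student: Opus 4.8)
The plan is to prove both inequalities by exhibiting, for the graph appearing on the left-hand side, a single threshold at which its thresholded support \emph{contains} the optimal thresholded support of the graph on the right-hand side. The workhorse is the ``sandwich'' implied by \Cref{def:eps-good-approx}: since \round does not fail, \Cref{thm:round-robin-estimates} guarantees that each $\hat\scG_\tau$ is an $\e_\tau$-good approximation of $\scG$, and a direct check shows that for every threshold $\e \ge \e_\tau$,
\[
\supp{\scG_{2\e}} \subseteq \supp{(\hat\scG_\tau)_{\e}} \subseteq \supp{\scG_{2\e/3}} \enspace.
\]
Parts~1--2 of \Cref{def:eps-good-approx} give $\hat p_e^\tau \ge p_e/2 \ge \e$ whenever $p_e \ge 2\e$ (lower inclusion), while parts~2--3 give $p_e \ge \tfrac23 \hat p_e^\tau \ge \tfrac23\e$ for every $e$ with $\hat p_e^\tau \ge \e$ (upper inclusion). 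I would combine this with two elementary monotonicity facts under edge addition: adding edges preserves both observability and strong observability, and can only decrease $\alpha$ and $\delta$. The statement for $\delta$ needs an argument: in the observable regime, enlarging the edge set can only shrink the set of weakly observable vertices, so any weakly dominating set of the sparser graph remains weakly dominating in the denser one, whence $\delta$ does not increase.

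For part~$(ii)$ I would evaluate $\Phi(\hat\scG_\tau, T)$ at the \emph{fixed} threshold $\e^*$. The hypothesis $\tau \ge 120\ln(KT)/\e^*$ is exactly $\e_\tau \le \e^*/2$, so $\e^*/2 \ge \e_\tau$ is an admissible threshold, and for every edge with $p_e \ge \e^*\ (\ge 2\e_\tau)$ parts~1--2 give $\hat p_e^\tau \ge \e^*/2$; hence $\supp{\scG_{\e^*}} \subseteq \supp{(\hat\scG_\tau)_{\e^*/2}}$. By the monotonicity facts the denser graph $\supp{(\hat\scG_\tau)_{\e^*/2}}$ has the same observability type as the optimal support $\supp{\scG_{\e^*}}$, with $\alpha \le \alpha^*$ (resp.\ $\delta \le \delta^*$). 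Plugging the threshold $\e^*/2$ into the corresponding term of $\Phi$ inflates the rate by at most $\sqrt2$ in the strongly observable case (the $\sqrt{1/\e}$ dependence) and by $2^{1/3}\le\sqrt2$ in the observable case (the $(1/\e)^{1/3}$ dependence), giving $\Phi(\hat\scG_\tau, T) \le \sqrt2\,\Phi(\scG, T)$.

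For part~$(i)$ I would argue analogously, but comparing the two estimates through $\scG$. Let $\e^\dagger \ge \e_\tau$ be the threshold attaining $\Phi(\hat\scG_\tau, T)$ and let $G^\dagger = \supp{(\hat\scG_\tau)_{\e^\dagger}}$ be its (say, strongly observable) optimal support. Chaining the upper inclusion at $\tau$ with the lower inclusion at $\tau'$ yields
\[
G^\dagger \subseteq \supp{\scG_{2\e^\dagger/3}} \subseteq \supp{(\hat\scG_{\tau'})_{\e^\dagger/3}} \enspace,
\]
so thresholding $\hat\scG_{\tau'}$ at $\e^\dagger/3$ produces a denser graph of the same observability type with no larger $\alpha$ (resp.\ $\delta$); since the threshold shrinks only by a factor $3$, the rate inflates by at most $\sqrt3$ (resp.\ $3^{1/3}$), both $\le 2$, giving $\Phi(\hat\scG_{\tau'}, T) \le 2\,\Phi(\hat\scG_\tau, T)$.

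The main obstacle is the boundary case of part~$(i)$ in which the shrunken threshold $\e^\dagger/3$ falls below the base threshold $\e_{\tau'}$ of $\hat\scG_{\tau'}$ (equivalently $\e^\dagger < 3\e_{\tau'}$), so the lower inclusion at $\tau'$ needed above is unavailable. Here all of $\e_{\tau'}, \e_\tau, \e^\dagger$ lie within a factor $3$ of one another, and the difficulty is genuine: refining the estimate from $\tau$ to $\tau'$ may \emph{drop} borderline edges whose true probability lies in the narrow band $[\e_\tau/2,\,2\e_{\tau'})$, which are left unconstrained by \Cref{def:eps-good-approx} and whose removal could in principle raise $\alpha$ or break observability. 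This is precisely where the slack between the clean factors $\sqrt3,\,3^{1/3}$ and the stated constant $2$ must be spent. I would attempt to resolve it by instead thresholding $\hat\scG_{\tau'}$ at its base value $\e_{\tau'}$ and bounding the parameters of the resulting support directly against those of $G^\dagger$ via the two-sided sandwich, using that the relevant thresholds differ by at most a constant factor; verifying that this keeps the inflation below $2$ is the delicate technical point of the proof.
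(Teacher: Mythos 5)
Your part~$(ii)$ is correct and is essentially the paper's own argument: evaluate $\Phi(\hat\scG_\tau,T)$ at the fixed threshold $\e^*/2$, use the lower inclusion $\supp{\scG_{\e^*}} \subseteq \supp{(\hat\scG_\tau)_{\e^*/2}}$ together with monotonicity of observability, $\alpha$ and $\delta$ under edge addition, and absorb the factor $2$ loss in the threshold into $\sqrt{2}$ (resp.\ $2^{1/3}$). Your monotonicity argument for $\delta$ (in an observable graph, adding edges can only move vertices from weakly to strongly observable, so a weakly dominating set stays weakly dominating) is also sound.

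Part~$(i)$, however, has a genuine gap, and it is exactly the one you flag: when $\e^\dagger < 3\e_{\tau'}$, the inclusion $G^\dagger \subseteq \supp{(\hat\scG_{\tau'})_{\e^\dagger/3}}$ is not available, and your fallback of thresholding $\hat\scG_{\tau'}$ at $\e_{\tau'}$ does not repair it: an edge of $G^\dagger$ has (by parts 2--3 of \Cref{def:eps-good-approx} at $\tau$) true probability $p_e \ge 2\e^\dagger/3$, but part~1 at $\tau'$ only forces it into $\supp{\hat\scG_{\tau'}}$ when $p_e \ge 2\e_{\tau'}$, which fails precisely in this regime; edges with $p_e \in [\e_{\tau'}/2,\,2\e_{\tau'})$ are unconstrained and could be absent, so you cannot exclude that $\supp{\hat\scG_{\tau'}}$ loses edges of $G^\dagger$, breaks its observability class, or has larger $\alpha$ or $\delta$. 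The missing idea is that the comparison between $\hat\scG_\tau$ and $\hat\scG_{\tau'}$ should not be routed through $\scG$ at all: by construction of \round, $\hat p_e^\tau \ge \e_\tau$ is equivalent to $n_e \ge 60\ln(KT)$, and the counter $n_e$ is nondecreasing in $\tau$, so $\supp{\hat\scG_\tau} \subseteq \supp{\hat\scG_{\tau'}}$ holds \emph{deterministically} for all $\tau \le \tau'$. The paper's proof uses exactly this nesting: it keeps the optimal support $G$ of $\hat\scG_\tau$ (which is then guaranteed to lie inside $\supp{\hat\scG_{\tau'}}$), and compares the minimal edge weights $\min_{e\in G}\hat p_e^{\tau'} \ge \min_{e\in G} p_e/2 \ge \min_{e\in G}\hat p_e^{\tau}/3$ via part~2 of \Cref{def:eps-good-approx} applied at both $\tau$ and $\tau'$, which yields a factor $4$ on $\alpha/\e$ (resp.\ $\delta/\e$) and hence the stated factor $2$ after the square root. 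Without the nesting observation your approach cannot close the boundary case.
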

   \begin{proof}
        We consider a run of \explore where \round does not fail. This means that all the $\hat \scG_{\tau}$ are $\e_{\tau}$-good approximation of $\scG$, for all $\tau \le \ftau$. Focus on the first part of the statement. All edges in $\supp{\hat\scG_{\tau}}$ are contained in $\supp{\hat\scG_{\tau'}}$ since \round does not fail. This implies that the observability regime only improves as $\tau$ increases. We have two cases: if the best threshold for $\hat \scG_{\tau}$ (say it corresponds to some edge probability in $\hat \scG_{\tau}$ without loss of generality) induces a thresholded stochastic feedback graph with strongly observable support $G = (V, E)$ and independence number $\alpha$, we have that $\hat \scG_{\tau'}$ is strongly observable too; moreover, all the edges $e \in E$ are such that $|p_e - \hat p^{\tau}_e| \le p_e/2$ by \Cref{def:eps-good-approx} (part~2); the same holds for $\tau'$: $|p_e - \hat p^{\tau'}_e| \le p_e/2$. Consider graph $G$ with edge probabilities $\hat p^{\tau'}_e$, respectively $p_e$ and $\hat p^{\tau}_e$ and let $\e_1$, respectively $\e_2$ and $\e_3$, be their smallest probability (restricting on the edges of $G$). We have that:
        \begin{align*}
            \min_{\e\in (0,1]} &\left\{\frac{\alpha((\hat \scG_{\tau'})_{\e})}{\e} \,:\, \textup{$ \supp{( \hat\scG_{\tau'})_{\e}}$ strongly observable}\right\} \le \frac{\alpha}{\e_1} \le  2 \frac{\alpha}{\e_2} \le 4 \frac{\alpha}{\e_3} \\
            &=  4 \min_{\e\in (0,1]} \left\{\frac{\alpha((\hat \scG_{\tau})_{\e})}{\e} \,:\, \textup{$ \supp{( \hat\scG_{\tau})_{\e}}$ strongly observable}\right\} \;,
        \end{align*}
        where the first inequality follows from suboptimality of graph $G$ with threshold $\e_1$ for $\hat\scG_{\tau'}$, the second and the third inequality by the conditions on $p_e$, $\hat p^{\tau'}_e$ and $p_e^{\tau}$, and the last equality by definition of $G$ and $\alpha.$ If we now substitute this inequality in the definition of $\Phi$, we obtain that 
        $
            2 \Phi(\hat\scG_{\tau}, T) \ge \Phi(\hat\scG_{\tau'},T)
        $.
        We can reason in the same exact way considering the (weakly) observable case and obtain
        $
            \sqrt[3]{4}\Phi(\hat\scG_{\tau}, T) \ge \Phi(\hat\scG_{\tau'}, T)
        $.
        Putting the two results together we conclude the proof of point $(i)$.
        
        We move our attention to the second part of the lemma. Because of \Cref{thm:round-robin-estimates} together with the lower bound on $\tau$, it holds that $\hat \scG_{\tau}$ is an $\e^*/2$-good approximation of $\scG$. This implies that all the edges in $\supp{\scG_{\e^*}}$ are contained in the support of $\hat \scG_{\tau}$ and that they are well approximated, as in parts~1 and~2 of \Cref{def:eps-good-approx}. We have two cases, according to the topology of the support corresponding to the threshold $\e^*$ which guarantees the optimal regret for $\scG$. First, consider the case that $\e^*$ corresponds to a strongly observable structure in $\supp{\scG_{\e^*}}$ with independence number $\alpha^*$; we have that
        \begin{align*}
            \min_{\e\in (0,1]} &\left\{\frac{\alpha((\hat \scG_{\tau})_{\e})}{\e} \,:\, \textup{$ \supp{( \hat\scG_{\tau})_{\e}}$ strongly observable}\right\} \le 2 \frac{\alpha^*}{\e^*}\\
            &= 2\min_{\e\in (0,1]}\left\{\frac{\alpha(\scG_{\e})}{\e} \,:\, \textup{$ \supp{\scG_{\e}}$ strongly observable}\right\} \;,
        \end{align*}
        where in the first inequality we used the suboptimality of threshold $\e^*/2$ for $\hat \scG_{\tau}$ and the fact that the independence number of $\alpha((\hat \scG_{\tau})_{\e^*})$ is at most $\alpha^*$ (and the strong observability is maintained). Then, we have that
        \[
            \Phi(\hat\scG_{\tau}, T) \le  4 C_s \sqrt{2\frac{\alpha^*}{\e^*} T} \big(\ln(KT)\big)^{3/2} = \sqrt{2}\Phi(\scG,T) \;,
        \]
        where the inequality follows naturally from the (possible) suboptimality of the choice of the strongly observable regime and the threshold $\e^*/2$ for $\hat\scG_{\tau}.$
        We can argue similarly for the case in which the optimal $\e^*$ corresponds to the weakly observable regime in $\scG$. In this case, for the same arguments as per the strongly observable regime, we have that
        \begin{align*}
            \min_{\e\in (0,1]} &\left\{\frac{\delta((\hat \scG_{\tau})_{\e})}{\e} \,:\, \textup{$ \supp{( \hat\scG_{\tau})_{\e}}$ observable}\right\} \le 2 \frac{\delta^*}{\e^*} \\
            &= 2\min_{\e\in (0,1]}\left\{\frac{\delta(\scG_{\e})}{\e} \,:\, \textup{$ \supp{\scG_{\e}}$ observable}\right\} \;.
        \end{align*}
        Finally, similarly to the strongly observable case, it holds that
        \[
            \Phi(\hat\scG_{\tau}, T) \le  4 C_w \left(2\frac{\delta^*}{\e^*}\big(\ln(KT)\big)^2\right)^{1/3}T^{2/3} = \sqrt[3]{2}\Phi(\scG,T)\le \sqrt{2}\Phi(\scG,T) \;.
        \]
        This concludes the proof.
    \end{proof}

    \begin{lemma}\label{lem:hatG}
        Consider a run of \explore (\Cref{alg:explore-commit}).
        Assume that the invocation of \round returns a stochastic feedback graph $\hat \scG$ that is an $\hat \e$-good approximation of $\scG$ satisfying $\Phi(\hat \scG, T-\ftau K) \le \ftau K$, where $\ftau$ is the index of the last iteration of the outer for loop in \Cref{alg:round-robin}.
        Then, the regret experienced by the invocation of \block is at most $\Phi(\hat \scG,T - \ftau K)$.
    \end{lemma}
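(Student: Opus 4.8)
The plan is to read off the regret of the \block invocation inside \explore directly from \Cref{cor:blocks-reduction}, and then to observe that the observability regime and the threshold $\hat\e^*$ selected by \explore are exactly those attaining the minimum defining $\Phi(\hat\scG, T')$, where $T' = T - \ftau K$ is the number of rounds handed to \block. Recall that \explore invokes $\block(T', \hat\e^*/2, \hat\scG_{\hat\e^*}, \A)$ with $\A$ equal to \expg for strongly (resp.\ weakly) observable graphs according to whether $\hat\e^* = \hat\e_s^*$ or $\hat\e^* = \hat\e_w^*$. Since $\hat\scG$ retains only edges of weight at least $\hat\e = \e_{\ftau}$, and decreasing $\e$ below the smallest retained weight leaves the support unchanged while increasing $\alpha_\e/\e$ and $\delta_\e/\e$, the minimizing threshold satisfies $\hat\e^* \ge \hat\e$.

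The key preliminary step is to show that $\hat\scG_{\hat\e^*}$ is an $\hat\e^*$-good approximation of $\scG$, so that \Cref{cor:blocks-reduction} applies with parameter $\e = \hat\e^*$. I would verify the three conditions of \Cref{def:eps-good-approx} by propagating them from the hypothesis that $\hat\scG$ is an $\hat\e$-good approximation together with $\hat\e \le \hat\e^*$. The crucial tool is part~2 of \Cref{def:eps-good-approx}: any edge surviving the higher threshold (i.e.\ with $\hat p_e \ge \hat\e^*$) already lies in $\supp{\hat\scG}$, hence satisfies $|\hat p_e - p_e| \le p_e/2$, which lets one translate between the estimated weight $\hat p_e$ and the true weight $p_e$ up to constant factors and thereby recover parts~1 and~3 at the level $\hat\e^*$. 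One also checks the mild side condition $\hat\e^* \ge 1/T'$ required by the corollary, which follows from $\hat\e^* \ge \hat\e = 60\ln(KT)/\ftau$ together with $\ftau \le T/K$.

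With this in hand, I would split on the regime attaining the minimum in $\Phi(\hat\scG, T')$. If it is the strongly observable one, then $\hat\e^* = \hat\e_s^*$, the base algorithm is \expg for strongly observable graphs, and $\supp{\hat\scG_{\hat\e^*}} = \supp{(\hat\scG)_{\hat\e_s^*}}$ is strongly observable with independence number $\hat\alpha^* = \alpha\bigl((\hat\scG)_{\hat\e_s^*}\bigr)$. \Cref{cor:blocks-reduction} then bounds the regret of \block by $4C_s\sqrt{(\hat\alpha^*/\hat\e^*)T'}\,(\ln(KT'))^{3/2}$, which is precisely the strongly observable term in $\Phi(\hat\scG, T')$ and hence equals $\Phi(\hat\scG, T')$ because $\hat\e^*$ attains the overall minimum. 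The weakly observable case is symmetric, using the weak domination number $\hat\delta^*$ and the second branch of \Cref{cor:blocks-reduction}.

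I expect the main obstacle to be the threshold-propagation argument of the second paragraph: one must check carefully that raising the threshold from $\hat\e$ to $\hat\e^*$ preserves all three properties of an $\e$-good approximation, using the multiplicative guarantee on the retained edges, rather than taking this for granted. The remaining bookkeeping --- matching the logarithmic factors $\ln(KT')$, confirming that \explore's regime and threshold choice coincide with the minimizer of $\Phi(\hat\scG, T')$, and discharging the side condition $\hat\e^* \ge 1/T'$ --- is routine.
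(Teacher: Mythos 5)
Your proposal is correct and follows essentially the same route as the paper: a case split on whether the minimizing threshold $\hat\e^*$ corresponds to the strongly or weakly observable regime, followed by an application of the matching branch of \Cref{cor:blocks-reduction}, whose bound is by construction exactly the corresponding term of $\Phi(\hat\scG, T')$. Your extra care in propagating the $\hat\e$-good-approximation property up to the threshold $\hat\e^*$ and checking $\hat\e^* \ge 1/T'$ is sound (and arguably fills in details the paper's proof leaves implicit), but it does not change the argument's structure.
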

    \begin{proof}
        Denote with $R^{\br}_{T'}$ the worst-case regret experienced by \block in the final $T' = T-\ftau K$ time steps, under the assumption on $\hat\scG$ in the statement, and let $\hat \e^*$ be the best threshold as in \Cref{alg:explore-commit}. We have two cases, according to $\hat\e^*$ referring to strongly or (weakly) observable graphs. If $\hat\e^* =\hat\e^*_s$, then, by the part of \Cref{cor:blocks-reduction} relative to strongly observable graphs, we have that
        \begin{align*}
            R^{\br}_{T'} \le  4 C_s \sqrt{\frac{\hat \alpha^*}{\hat \e^*_s}T'} \big(\ln(KT')\big)^{3/2} = \Phi(\hat \scG_{\hat \e^*},T') \enspace.
        \end{align*}
        If $\hat\e^* =\hat\e^*_w$, then we can apply the part of \Cref{cor:blocks-reduction} relative to (weakly) observable graphs and obtain that
        \begin{align*}
            R^{\br}_{T'} \le 4 C_w \biggl(\frac{\hat\delta^*}{\hat\e^*_w}\bigl(\ln(KT')\bigr)^2\biggr)^{1/3}(T')^{2/3} =\Phi(\hat \scG_{\hat \e^*},T') \enspace.
        \end{align*}
    \end{proof}

    At this point, we have all the essential ingredients to prove the regret bound of \etc as stated in \Cref{thm:block-result}.
    We rewrite the statement of \Cref{thm:block-result} for convenience.

\retheoremblockresult*
    \begin{proof}[Proof of \Cref{thm:block-result}]
    We condition the analysis on the clean event $\calE$ that \round does not fail. Let $\te$ be the largest $\e$ such that $\supp{\scG_{\e}}$ is observable, and $\ttau$ be the smallest (random) integer such that $\supp{\hat \scG_{\ttau}}$ is observable for $\hat \scG_{\ttau}$ in \round. We have some immediate bound on these quantities. First, $\te \ge \e(K,T)$, by the assumption on $\supp{\scG_{\e(K,T)}}$ being observable. 
    Second, $\ttau \le \frac{120}{\te}\ln(KT)$; this is due to the fact that, after $\tau = \lceil \frac{120}{\te}\ln(KT)\rceil $ time steps, the estimated graph $\hat \scG_\tau$ is an $\te/2$-good approximation of $\scG$ and thus contains all the edges in $\supp{\scG_{\te}}$ by \Cref{def:eps-good-approx} (part~1) with $\e = \te/2$, and because of the conditioning on $\calE$. All in all, we can summarize these observations by noticing that
     \[
      \frac T{2K} \ge 120\frac{\ln(KT)}{\e(K,T)} \ge 120\frac{\ln(KT)}{\te}\ge \ttau \enspace,
     \]
     where the first inequality is true as long as $\e(K,T) \ge 240 K \ln(KT)/T$.
     Using point $(i)$ of \Cref{lem:prop_Phi} and the inequality we just showed, we observe that 
     \[
        \Phi(\hat \scG_{\lfloor \frac T{2K}\rfloor }, T) \le 2\Phi(\hat \scG_{\ttau}, T) \le 8 C_w\Bigl(2\frac{KT^2}{\te}\ln(KT)^2\Bigr)^{\!1/3} \!\le 8 C_w\Bigl(2\frac{KT^2}{\e(K,T)}\ln(KT)^2\Bigr)^{\!1/3} \!\le \frac{T}{2} \;,
     \]
     as long as $\e(K,T) \ge 2 \cdot 16^3 C_w^3 K (\ln(KT))^2 / T$. Note that in the previous chain of inequalities we considered the (possibly suboptimal) choice of the (weakly) observable structure of the graph with threshold $\te$ and upper bound on $\delta$ given by $K$. The inequality we just showed implies that the stopping criterion in \round is triggered and thus we can apply \Cref{lem:hatG}.
     
     Now, let $\tau^*$ be the smallest $\tau$ such that $\Phi(\scG,T)=\Phi(\scG_{\e^*},T) \le \tau K$, being $\e^*$ the optimal threshold for $\scG.$ In this second step, we want to show that $\ftau$ is not too far away from $\tau^*$ for the interesting values of $\tau^*$; namely, that $\ftau \le 4 \tau^*$ as long as $\Phi(\scG,T)$ is not $\tilde \Omega(T)$. 
     
     First, consider the case that $\Phi(\scG,T)$ refers to the strongly observable regime in $\Phi(\scG_{\e^*},T)$. By minimality of $\tau^*$, we have the following:
     \begin{equation} \label{eq:tau-star}
        \tau^* K \ge \Phi(\scG,T) = 4 C_s \sqrt{\frac{\alpha^*}{\e^*} T} \big(\ln(KT)\big)^{3/2} \ge \frac{1}{2} \tau^* K \enspace.
     \end{equation}
     We now set the constant appearing in the definition of $\varepsilon(K,T)$ from the statement to be $C = 2 \cdot 16^3 C_w^3$. With this choice, the previously stated requirements for $\e(K,T)$ are satisfied, while at the same time it holds that $ \Phi(\scG,T) \le C_s^2 T(\ln(KT))^2/(15K)$; this is immediate to verify by arguing that $\Phi(\G,T)$ is at most the regret incurred by using the (possibly suboptimal, weakly) observable structure of $\G$ truncated at $\varepsilon(K,T)$.
     Then, from the second inequality of \eqref{eq:tau-star}, it follows that $\tau^* \le 2 C_s^2 T (\ln(KT))^2/(15K^2)$. We can rewrite the first inequality of \eqref{eq:tau-star} as follows:
     \[
        \e^* \ge 16 C_s^2 \frac{\alpha^*}{(K \tau^*)^2} T \big(\ln(KT)\big)^{3} \ge 120\frac{\ln(KT)}{\tau^*} \enspace.
     \]
     Consider now to what happens at the $\overline{\tau} = \lceil 120 \ln(KT)/\e^* \rceil \le 4 \tau^*$ iteration of \round. The estimated graph $\hat\scG_{\overline{\tau}}$ in that iteration is an $\e^*/2$-good approximation of $\scG$, thus it contains all the edges of $\scG$, with the probabilities correctly estimated up to a constant multiplicative factor, as detailed in \Cref{def:eps-good-approx} (part~2). Thus,
     \[
     \Phi(\hat \scG_{4 \tau^*},T) \le 2 \Phi(\hat \scG_{\overline{\tau}},T) \le 2\sqrt 2 \Phi(\scG,T)\le 4 \tau^* K,
     \] 
     which implies that the stopping time $\ftau$ is attained before $4 \tau^*.$ Note that the first inequality is due to point $(i)$ of \Cref{lem:prop_Phi}, whereas the second inequality follows from point $(ii)$ of \Cref{lem:prop_Phi}.
     
     Similarly, we consider the case that $\Phi(\scG,T)$ refers to the weakly observable regime in $\Phi(\scG_{\e^*},T)$. By minimality of $\tau^*$, we have the following:
     \begin{equation} \label{eq:tau-star-2}
        \tau^* K \ge \Phi(\scG,T) = 4 C_w \left(\frac{\delta^*}{\e^*}\big(\ln(KT)\big)^2\right)^{1/3}T^{2/3} \ge \frac{1}{2} \tau^* K \enspace.
     \end{equation}
     By the choice of $\varepsilon(K,T)$, we have that $ \Phi(\scG,T) \le T \sqrt{2C_w^3\ln(KT)/(15 K)}$.
     Then, from the second inequality of \eqref{eq:tau-star-2}, it follows that $\tau^* \le T \sqrt{8C_w^3\ln(KT)/(15 K^3)}$. Consider now the first inequality, we can rewrite it to obtain:
     \[
        \e^* \ge 64 C_w^3 \frac{\delta^*}{(K \tau^*)^3} \bigl(T\ln(KT)\bigr)^{2} \ge 120 \frac{\ln(KT)}{\tau^*} \enspace.
     \]
     We can now use the same argument as in the strongly observable case and conclude that $\ftau \le 4 \tau^*$. 
    
    At this point, we are ready to show that our algorithm \explore exhibits the desired regret bounds. We are conditioning on the good event $\calE$; this happens with probability at least $1-\frac 1T$, so we just analyze this case, as the complementary of $\calE$ yields at most an extra additive $1$, in expectation, to the regret bound. 
    
    Recall that $R_T$ is the worst-case regret; thus,
    \[
        R_T \le \ftau K + \Phi(\hat \scG, T-\ftau K) \le 2 \ftau K \le 8 \tau^* K \le 16 \Phi(\scG,T) \enspace,
    \]
    where in the first inequality we used the decomposition in regret before and after the commitment and the bound on \Cref{lem:hatG} (which is applicable given the conditioning on $\calE$ and thus all the $\scG_{\tau}$ are $\e_{\tau}$-good approximations of $\scG$), in the second one the definition of $\ftau$, in the third one the fact that $\ftau \le 4 \tau^*$, and in the last the definition of $\tau^*$ as minimal $\tau$ such that $\Phi(\scG,T) \le \tau K$. 
    \end{proof}

\section{Proofs of Lower Bounds} \label{sec:lower-bounds-proofs}

The main idea in the lower bounds is that the adversary sets all edge probabilities equal to $\e \in (0, 1]$ in order to define a stochastic feedback graph $\scG$ with a specific support $G$ that satisfies adequate properties.
This requires the attribution of additional power to the adversary because we allow it to choose the edge probabilities; nevertheless, this is fine from a worst-case perspective because it corresponds to choosing a particularly difficult instance among those that have certain characteristics.
Doing so makes the edge between each (ordered) pair of nodes either realize independently at each round $t$ with probability equal to $\e$, or never realize.
Moreover, there exists a vertex that is at least marginally better than the other ones with respect to the expected loss.
The learner only obtains information about the loss of the optimal node whenever it plays a node that is adjacent to it in $G = \supp{\scG}$ and the edge between the played node and the optimal node is realized.
Since that edge is realized only with probability $\e$, it is significantly harder for the the learner to detect the optimal node, which allows the adversary to increase the size of the gaps between the optimal node and the suboptimal ones.
More specifically, while in the deterministic setting playing once action $a$ is enough to observe the loss incurred by a neighbouring action $a'$, the learner will now need $1/\e$ time steps, in expectation, to observe the loss of $a'$ if the edge $(a,a')$ only realizes with probability $\e$.
Further notice that, in the setting considered within the proofs of our lower bounds, the learner may even know the true distribution $\scG$ and observe the realization of the entire feedback graph $G_t$ at the end of each round $t$.

We start with a lower bound for the strongly observable case considering stochastic feedback graphs $\scG$ with $\alpha(\scG)>1$.
The following result can be recovered by adapting the proof of \citet[Theorem 5]{AlonCGMMS17} that holds for any graph of interest (directed or undirected).

\begin{theorem} \label{thm:lower-bound-strongly-observable}
    Pick any directed or undirected graph $G = (V, E)$ with $\alpha(G) > 1$ and any $\e \in (0,1]$.
    There exists a stochastic feedback graph $\scG$ with $\supp{\scG} = G$ and such that, for all $T \geq 0.0064 \alpha(\scG_\e)^3/\e$ and for any possibly randomized algorithm $\A$, there exists a sequence $\ell_1,\ldots,\ell_T$ of loss functions on which the expected regret of $\A$ with respect to the stochastic generation of $G_1, \dots, G_T \sim \scG$ is at least $0.017\sqrt{{\alpha(\scG_\e)T}/{\e}}$.
\end{theorem}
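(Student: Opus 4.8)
The plan is to adapt the deterministic strongly-observable lower bound of \citet{AlonCGMMS17} by making every edge of $G$ fire with the same probability $\e$. Concretely, I would take $\scG$ with $p(i,j) = \e$ for every $(i,j) \in E$ and $p(i,j) = 0$ otherwise, so that $\supp{\scG} = G$ and, since each surviving edge has probability exactly $\e \ge \e$, also $\supp{\scG_\e} = G$ and $\alpha(\scG_\e) = \alpha(G) =: \alpha$. Fix a maximum independent set $S \subseteq V$ with $\abs{S} = \alpha \ge 2$. I would first record the structural fact that every $i \in S$ carries a self-loop: by strong observability either $i \in \inneigh{G}{i}$ or $V \setminus \{i\} \subseteq \inneigh{G}{i}$, and the latter would force $(j,i) \in E$ for every $j \in S \setminus \{i\}$ (such $j$ exists because $\alpha > 1$), contradicting independence of $S$. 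Hence playing $i \in S$ reveals, among the vertices of $S$, only $i$ itself, and only when its self-loop realizes, i.e.\ with probability $\e$.

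On top of this graph I would plant a stochastic bandit instance indexed by a hidden optimal arm $k \in S$ chosen uniformly at random: set $\ell_t(k) \sim \Bern{\tfrac12 - \Delta}$, $\ell_t(i) \sim \Bern{\tfrac12}$ for $i \in S \setminus \{k\}$, and $\ell_t(i) = 1$ for every $i \notin S$, all independent across rounds, with a gap $\Delta = \Theta(\sqrt{\alpha/(\e T)})$ to be tuned. The maximal loss assigned to $V \setminus S$ is the device that rules out the only alternative route to information: a high-out-degree vertex outside $S$ could in principle reveal many independent-set arms in a single round, but each such round now costs $\Omega(1)$ regret, so it cannot be used to learn cheaply. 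This reduces the instance, up to the cost of off-$S$ plays, to a plain multi-armed bandit on the $\alpha$ arms of $S$ whose Bernoulli feedback is observed only with probability $\e$.

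The heart of the argument is the information-theoretic lower bound with the $\e$-discount made explicit. Let $\mathbb{P}_k$ denote the law of the interaction under hidden arm $k$ and $\mathbb{P}_0$ the reference law in which all $S$-arms have mean $\tfrac12$. Because the good arm $k$ is observed in a given round only if the (unique) realized edge into $k$ fires, its conditional observation probability given the history is at most $\e$, whence $\mathbb{E}_0[N_k^{\mathrm{obs}}] \le \e T$ for each $k$, and $\sum_{k \in S}\mathbb{E}_0[N_k^{\mathrm{obs}}] \le \e T$ once attention is restricted to rounds in which an $S$-arm is played. Each observation of arm $k$ contributes $\KL(\Bern{\tfrac12} \,\|\, \Bern{\tfrac12-\Delta}) = O(\Delta^2)$ to $\KL(\mathbb{P}_0 \,\|\, \mathbb{P}_k)$, so $\KL(\mathbb{P}_0 \,\|\, \mathbb{P}_k) \le O(\Delta^2)\,\mathbb{E}_0[N_k^{\mathrm{obs}}]$. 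A standard change-of-measure inequality bounds $\mathbb{E}_k[T_k] - \mathbb{E}_0[T_k]$ by $T\sqrt{\tfrac12 \KL(\mathbb{P}_0\|\mathbb{P}_k)}$; summing over $k$, applying Cauchy--Schwarz together with $\sum_k \mathbb{E}_0[N_k^{\mathrm{obs}}] \le \e T$ and $\sum_k \mathbb{E}_0[T_k] \le T$, controls $\tfrac1\alpha\sum_k \mathbb{E}_k[T_k]$. Writing the per-instance regret as $R^{(k)} \ge \Delta\,(T - \mathbb{E}_k[T_k]) + \tfrac14\,\mathbb{E}_k[M]$, where $M$ counts off-$S$ plays, and averaging over the uniform choice of $k$, I would optimize $\Delta = \Theta(\sqrt{\alpha/(\e T)})$ to obtain an average regret of order $\sqrt{\alpha T/\e}$; since the average is this large, some fixed $k$, and hence (after fixing the Bernoulli realizations) a fixed loss sequence, attains it. The hypothesis $T \ge 0.0064\,\alpha^3/\e$ places $\Delta$ in the regime $\Delta \le \tfrac14$ where the Bernoulli KL estimate is valid and the lower-order terms are dominated, which also pins down the explicit constant $0.017$.

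The step I expect to be the main obstacle is exactly the one flagged in the excerpt: transferring the KL computation from the deterministic feedback graph to the stochastic one. All information about $k$ now arrives through a doubly random channel---the learner's adaptive play and the independent edge realizations---so the bound ``$k$ is seen with probability at most $\e$ per round'' must be proved conditionally on the history and then converted into the clean factor $\e$ inside $\mathbb{E}_0[N_k^{\mathrm{obs}}]$; it is this single factor that upgrades the deterministic rate $\sqrt{\alpha T}$ to $\sqrt{\alpha T/\e}$ by permitting a gap $\Delta$ that is larger by $1/\sqrt{\e}$. A secondary but necessary check is the book-keeping around off-$S$ plays: one must verify that the positive $\tfrac14\mathbb{E}_k[M]$ term in the regret absorbs the extra observations such plays could produce, so that the $\alpha$-sharing of the information budget (and hence the $\sqrt{\alpha}$ factor) is preserved.
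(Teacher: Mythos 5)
Your proposal follows essentially the same route as the paper's proof: all edge probabilities are set to $\e$, a hidden optimal arm is planted uniformly in a maximum independent set with gap $\Theta(\sqrt{\alpha/(\e T)})$, loss $1$ is assigned off the independent set, and the crucial factor $\e$ is extracted from the per-round KL by conditioning on the event that the relevant edge realizes and using convexity of the divergence, exactly as the paper does; the off-$S$ bookkeeping you flag as a necessary check is resolved there by a dichotomy (either $\mathbb{E}_0[T_\textup{bad}] \le 0.04\sqrt{\alpha T/\e}$, or that expectation alone already forces regret of the claimed order). The one detail to adjust is your derivation of self-loops on the independent set from strong observability, which is not a hypothesis of the statement: the paper instead simply adds all missing self-loops to $G$, which only makes the problem easier for the learner and leaves $\alpha$ unchanged.
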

\begin{proof}
    The structure of this proof follows the same rationale of the lower bound by \citet[Theorem~5]{AlonCGMMS17} with additional considerations due to the stochasticity of the feedback graph. 
    To prove the lower bound we will use Yao's minimax principle~\citep{yao1977minimaxprinciple}, which shows that it is sufficient to provide a probabilistic strategy for the adversary on which the expected regret of any deterministic algorithm is lower bounded.
    
    We can assume that $G$ has all self-loops.
    If $G$ is missing some self-loops, we may add them for the sake of the lower bound: this only makes the problem easier for the learner.
    Also note that the addition of self-loops does not change the independence number of $G$.
    Now let $\scG$ be such that $p(i,j) \in \{0,\e\}$ and $p(i,j) = \e$ if and only if $(i,j) \in E$, for all $i,j \in V$.
    Note that $\alpha(G) = \alpha(\scG)$ and $\scG = \scG_\e$.
    We also remark that the following lower bound for such a $\scG$ will be a lower bound for the instance having a stochastic feedback graph obtained from the starting graph, without the addition of self-loops, by setting the realization probability of all its edges to $\e$.
    Without loss of generality, we order the nodes depending on an (arbitrary) independent set of $G$ of size $\alpha(G)$ so that $1, 2,\ldots,\alpha(G)$ are the nodes belonging to said independent set, and $\alpha(G)+1,\ldots, |V|$ correspond to all the other nodes in $G$.
    
    We will use the following distribution of losses.
    We sample $Z$ from some (later defined) distribution $Q$ over the independent set chosen above.
    Conditioned on $Z = i$, the loss $\ell_t(j)$ is sampled from an independent Bernoulli distribution with mean $\half$ if $j \not = i$ and $j \leq \alpha(G)$, it is sampled from an independent Bernoulli with mean $\half - \beta$ if $j = i$ for some $\beta \in [0, \tfrac{1}{4}]$, and it is set to 1 otherwise.
    
    We denote by $T_i$ the number of times node $i$ was chosen by the algorithm after $T$ rounds and denote by $\Tbad = \sum_{i > \alpha(G)} T_i$ the number of times the algorithm chooses an action not in the independent set.
    We use $\Enb_i[\cdot] = \Enb[\cdot \,|\, Z=i]$ and $\Prob_i(\cdot) = \P{\cdot \,|\, Z=i}$ to denote the expectation and probability over $(G_1,\ell_1),\ldots,(G_T,\ell_T)$
    conditioned on $Z = i$, respectively.
    We denote by $\ell_t(I_t)$ the loss of algorithm $A$ playing $I_t$ in round $t$.
    We emphasize that the complete loss sequence and the (partial) loss sequence observed by the learner may differ depending not only on the actions of the learner but also on the realization of the edges in the feedback graph.
    This last observation will be used to lower bound the regret of the learner also in terms of $\e$, the probability of an edge realization.
    
    We set $Q(i) = \frac{1}{\alpha(G)}$ if $i$ is in the independent set and $Q(i) = 0$ otherwise. Following \citet[Equation~(8)]{AlonCGMMS17} we have, for any deterministic algorithm $A$, that
    
    {\setlength{\abovedisplayskip}{-5pt}\setlength{\belowdisplayskip}{-5pt}
    \begin{equation} \label{eq:strongLBregret}
        \max_{k \in V}\Enb\left[\sumT \big(\ell_t(I_t) - \ell_t(k)\big)\right]
        \geq \beta \biggl(T - \frac{1}{\alpha(G)}\sum_{i \leq \alpha(G)} \Enb_i[T_i]\biggr) \;.
    \end{equation}}
    We now consider an auxiliary distribution $\Prob_0$, also over $(G_1,\ell_1),\ldots,(G_T,\ell_T)$, which is equivalent to the distribution $\Prob_i$ that we specified above, but with $\beta = 0$ for all nodes.
    We denote by $\Enb_0$ the corresponding expectation.
    We also denote by $\lambda_t$ the feedback set at time $t$, composed by the realization $G_t$ of the feedback graph together with the set of losses observed by the learner in round $t$, and by $\lambda^t = (\lambda_1, \ldots, \lambda_t)$ the tuple of all feedback sets up to and including round $t$.
    Since the algorithm is deterministic, its action $I_t$ in round $t$ is fully determined by $\lambda^{t-1}$.
    Therefore, $\Enb_i[T_i \,|\, \lambda^T] = \Enb_0[T_i \,|\, \lambda^T]$.
    When $\lambda^{t-1}$ is understood from the context, let $\Prob_{j,t} = \Prob_j(\cdot \,|\, \lambda^{t-1})$ be the conditional probability measure of feedback sets $\lambda_t$ at time $t$.
    We have that
    \begin{equation*}
    \begin{split}
        \Enb_i[T_i] - \Enb_0[T_i] & = \sum_{\lambda^T} \Prob_i(\lambda^T) \Enb_i[T_i \,|\, \lambda^T] - \sum_{\lambda^T} \Prob_0(\lambda^T) \Enb_0[T_i \,|\, \lambda^T] \\
        & = \sum_{\lambda^T} \Prob_i(\lambda^T) \Enb_i[T_i \,|\, \lambda^T] - \sum_{\lambda^T} \Prob_0(\lambda^T) \Enb_i[T_i \,|\, \lambda^T] \\
        & \leq T \sum_{\lambda^T \,:\, \Prob_i(\lambda^T) > \Prob_0(\lambda^T)} \big(\Prob_i(\lambda^T) - \Prob_0(\lambda^T) \big) \enspace.
    \end{split}
    \end{equation*}
    By using Pinsker's inequality and the chain rule for the relative entropy, we can further observe that
    \begin{equation*}
    \begin{split}
        \sum_{\lambda^T \,:\, \Prob_i(\lambda^T) > \Prob_0(\lambda^T)} \big(\Prob_i(\lambda^T) - \Prob_0(\lambda^T) \big)
        & \leq \sqrt{\half \KL(\Prob_0 \,\|\, \Prob_i)} \\
        & = \sqrt{\frac{1}{2} \sumT \sum_{\lambda^{t-1}} \Prob_0(\lambda^{t-1}) \KL(\Prob_{0,t} \,\|\, \Prob_{i,t})} \enspace,
    \end{split}
    \end{equation*}
    which, combined with the previous inequality, allows us to affirm that
    \begin{equation}\label{eq:strongLBpinkser}
        \Enb_i[T_i] - \Enb_0[T_i] \leq \sqrt{\frac{1}{2} \sumT \sum_{\lambda^{t-1}} \Prob_0(\lambda^{t-1}) \KL(\Prob_{0,t} \,\|\, \Prob_{i,t})} \enspace.
    \end{equation}
    
    At this point, observe that $\supp{\scG} = G = (V, E)$.
    Fix any $\lambda^{t-1}$ and consider $\KL(\Prob_{0,t} \,\|\, \Prob_{i,t})$ where, we recall, $\Prob_{0,t}(\lambda_t) = \Prob_0(\lambda_t \,|\, \lambda^{t-1})$ and $\Prob_{i,t}(\lambda_t) = \Prob_i(\lambda_t \,|\, \lambda^{t-1})$.
    Recall that $\lambda^{t-1}$ fully determines the node $I_t$ picked by the algorithm in round $t$.
    If $(I_t,i) \not\in E$, then $\Prob_{0,t}$ and $\Prob_{i,t}$ have the same distribution
    and the relative entropy term is $0$.
    If $(I_t,i) \in E$, then the loss of node $i$ in $\lambda_t$ follows a Bernoulli distribution with mean $\half$ under $\Prob_0$ and follows a Bernoulli distribution with mean $\half - \beta$ under $\Prob_i$.
    Denote by $\calE_t$ the event that edge $(I_t,i)$ is realized in $G_t$.
    Note that $\Prob_0(\calE_t) = \Prob_i(\calE_t) = \e$.
    Using the log-sum inequality and the fact that the relative entropy between the two aforementioned Bernoulli distributions is given by $\half \ln\bigl(\frac{1}{1 - 4\beta^2}\bigr)$, we can see that 
    %
    \begin{equation}\label{eq:kl-divergence-bernoulli}
    \begin{split}
        \KL(\Prob_{0,t} \,\|\, \Prob_{i,t})
        & = \KL\Bigl(\e \Prob_{0,t}(\cdot \,|\, \calE_t) + (1-\e)\Prob_{0,t}(\cdot \,|\, \overline{\calE_t}) \,\big\|\, \e \Prob_{i,t}(\cdot \,|\, \calE_t) + (1-\e)\Prob_{i,t}(\cdot \,|\, \overline{\calE_t})\Bigr) \\
        & = \KL\Bigl(\e \Prob_{0,t}(\cdot \,|\, \calE_t) + (1-\e)\Prob_{0,t}(\cdot \,|\, \overline{\calE_t}) \,\big\|\, \e \Prob_{i,t}(\cdot \,|\, \calE_t) + (1-\e)\Prob_{0,t}(\cdot \,|\, \overline{\calE_t})\Bigr) \\ 
        & \leq \e \KL\bigl(\Prob_{0,t}(\cdot \,|\, \calE_t) \,\|\, \Prob_{i,t}(\cdot \,|\, \calE_t)\bigr) + (1 - \e)\KL\bigl(\Prob_{0,t}(\cdot \,|\, \overline{\calE_t}) \,\|\, \Prob_{0,t}(\cdot \,|\, \overline{\calE_t})\bigr) \\
        & = \e \KL\bigl(\Prob_{0,t}(\cdot \,|\, \calE_t) \,\|\, \Prob_{i,t}(\cdot \,|\, \calE_t)\bigr) \\
        & = -\frac{\e}2 \ln\left(1 - 4\beta^2\right)  \leq 8\ln(4/3)\beta^2\e \enspace.
    \end{split}
    \end{equation}
With this inequality, we may upper bound the sum in the right-hand side of~\eqref{eq:strongLBpinkser} by considering, for each $t$, only the tuples $\lambda^{t-1}$ for which $i \in N^\text{out}_G(I_t)$ holds.
Indeed, the KL divergence for any other possible $\lambda^{t-1}$ is equal to $0$ because the edge $(I_t, i)$ never realizes (it is not in the support of $\scG$, hence $p(I_t, i) = 0$).
As a consequence,
    \begin{equation}\label{eq:strongLBlogsumexp}
    \begin{split}
        \sumT \sum_{\lambda^{t-1}} \Prob_0(\lambda^{t-1}) \KL(\Prob_{0,t} \,\|\, \Prob_{i,t}) 
        & \leq \sumT \Prob_0(i \in N^\text{out}_{G}(I_t)) 8\ln(4/3)\beta^2\e \\
        & = 8\ln(4/3)\beta^2\e \Enb_0[\abs{\{t: i \in N^\text{out}_{G}(I_t)\}}]  \\
        & \leq 8\ln(4/3)\beta^2\e \Enb_0[T_i + \Tbad] \enspace.
    \end{split}
    \end{equation}
    We may claim that $\Enb_0[\Tbad] \leq 0.04\sqrt{\frac{\alpha(G)}{\e}T}$, because otherwise the expected regret under $\Prob_0$ would have been at least 
    \begin{align*}
        \max_{k \in V}\Enb_0\left[\sumT (\ell_t(I_t) - \ell_t(k))\right]
        &= \Enb_0\left[\Tbad + \frac12 \sum_{j \leq \alpha(G)} T_j\right]  - \frac12 T\\
        &= \Enb_0\left[\frac12 \Tbad + \frac12 \biggl( \Tbad + \sum_{j \leq \alpha(G)} T_j\biggr)\right]  - \frac12 T \\
        &= \Enb_0\left[\frac12 \Tbad \right] \\
        &> 0.02\sqrt{\frac{\alpha(G)}{\e}T} \enspace.
    \end{align*}
    Combining \Cref{eq:strongLBpinkser,eq:strongLBlogsumexp}, and using that $\Enb_0[\Tbad] \leq 0.04\sqrt{\frac{\alpha(G)}{\e}T}$, we find that 
    \begin{align*}
        \Enb_i[T_i] - \Enb_0[T_i] & \leq 2T \beta \sqrt{\e\ln(4/3)\Enb_0\left[T_i + 0.04\sqrt{\frac{\alpha(G)}{\e}T}\right]} \enspace.
    \end{align*}
    This implies that the regret can be further lower bounded, continuing from~\eqref{eq:strongLBregret}, by 
    \begin{align*}
        & \beta \left(T - \frac{1}{\alpha(G)}\sum_{i = 1}^{\alpha(G)} \Enb_0[T_i] - \frac{1}{\alpha(G)}\sum_{i = 1}^{\alpha(G)} 2T \beta \sqrt{\e\ln(4/3)\Enb_0\left[T_i + 0.04\sqrt{\frac{\alpha(G)}{\e}T}\right]} \right) \\
        & \geq \beta \left(T - \frac{1}{\alpha(G)}\sum_{i = 1}^{\alpha(G)} \Enb_0[T_i] - 2T \beta \sqrt{\e\ln(4/3) \Enb_0\left[\frac{1}{\alpha(G)}\sum_{i = 1}^{\alpha(G)}T_i + 0.04\sqrt{\frac{\alpha(G)}{\e}T}\right]} \right) \\
        & \geq \beta T \left(1 - \frac{1}{\alpha(G)} - 2 \beta \sqrt{\e\ln(4/3) \left(\frac{T}{\alpha(G)} + 0.04\sqrt{\frac{\alpha(G)}{\e} T}\right)} \right) \;,
    \end{align*}
    where the first inequality is Jensen's inequality for concave functions and the second inequality is due to the fact that $\sum_{i = 1}^{\alpha(G)} \Enb_0[T_i] \leq T$ by definition of $T_i$. Since we assumed that $T \geq {0.0064 \alpha(G)^3}/{\e}$, 
    we have that $0.04 \sqrt{\frac{\alpha(G)}{\e} T} \leq \frac{T}{2\alpha(G)}$ and thus 
    \begin{align*}
        \max_{k \in V}\Enb\left[\sumT (\ell_t(I_t) - \ell_t(k))\right] & \geq \beta T \left(1 - \frac{1}{\alpha(G)} - 2 \beta \sqrt{\frac32 \ln(4/3) \frac{\e T}{\alpha(G)}}\right) \\
        & \geq \beta T \left(\frac12 - 2 \beta \sqrt{\frac32 \ln(4/3) \frac{\e T}{\alpha(G)}}\right) \;,
    \end{align*}
    where in the second inequality we used the assumption that $\alpha(G) \geq 2$. By setting $\beta = \frac{1}{33}\sqrt{\frac{\alpha(G)}{2\ln(4/3) \e T}} \in (0, \frac14]$, we may complete the proof as
    \begin{align*}
        \max_{k \in V}\Enb\left[\sumT (\ell_t(I_t) - \ell_t(k))\right] & \geq \frac{1}{33} \left(\frac12 - \frac{\sqrt{3}}{33}\right) \sqrt{\frac{\alpha(G)T}{2\ln(4/3) \e}} \geq 0.017 \sqrt{\frac{\alpha(G)}{\e}T} \enspace.
    \end{align*}
\end{proof}

Given that this lower bound leaves the case $\alpha(\scG)=1$ uncovered, we provide an additional lower bound that considers any feedback graph.
This new bound is tight up to logarithmic factors, for instance, in all cases where $\alpha(\scG)$ is constant.

\begin{theorem} \label{thm:lower-bound-strongly-observable-2}
    Pick any directed or undirected graph $G = (V, E)$ with $\abs{V} = K \ge 2$ and any $\e \in (0, 1]$.
    There exists a stochastic feedback graph $\scG$ with $\supp{\scG} = G$ and such that, for all $T \ge 1/(2\e)$ and for any possibly randomized algorithm $\A$, there exists a sequence $\ell_1,\ldots,\ell_T$ of loss functions on which the expected regret of $\A$ with respect to the stochastic generation of $G_1, \dots, G_T \sim \scG$ is at least $\frac1{32}\sqrt{2T/\e}$.
\end{theorem}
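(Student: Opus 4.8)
The plan is to run a two-point argument that, unlike the proof of \Cref{thm:lower-bound-strongly-observable}, does not rely on a large independent set and therefore covers every graph, in particular the case $\alpha(\scG)=1$ left open there. Since $K\ge 2$, I single out two arbitrary distinct nodes $1,2\in V$ and make all remaining nodes irrelevant by assigning them loss $1$ in every round. As before, I take $\scG$ with $p(i,j)=\e$ exactly on the edges of $G$ and $0$ elsewhere, so that $\supp{\scG}=G$ and $\scG=\scG_\e$ (adding any missing self-loops only helps the learner and leaves the bound valid). Using Yao's principle, I sample a hidden optimal node $Z$ uniformly from $\{1,2\}$: conditioned on $Z=i$, the loss of node $i$ is $\Bern{\half-\beta}$, the loss of the other node of $\{1,2\}$ is $\Bern{\half}$, and every node outside $\{1,2\}$ has loss $1$, for a gap $\beta\in(0,\tfrac14]$ to be tuned. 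I write $\Prob_i,\Enb_i$ for the conditional law of the transcript given $Z=i$, and $\Prob_0,\Enb_0$ for the reference law with $\beta=0$.

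Writing $T_i$ for the number of rounds the learner plays node $i$, a direct computation shows that under $Z=i$ the optimal fixed action is $i$ and the regret is at least $\beta\,(T-\Enb_i[T_i])$, since every round spent on the other relevant node costs $\beta$ and every round spent outside $\{1,2\}$ costs $\half+\beta\ge\beta$. Averaging over $Z$ thus lower bounds the regret by $\beta\bigl(T-\tfrac12\sum_{i\in\{1,2\}}\Enb_i[T_i]\bigr)$, and it remains to show that the learner cannot push $\Enb_i[T_i]$ close to $T$ in both environments simultaneously. The only coordinate of the feedback whose law differs between $\Prob_0$ and $\Prob_i$ is the loss of node $i$, which is revealed in round $t$ only when $(I_t,i)\in E$ and that edge is realized (probability $\e$). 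Consequently the per-round relative entropy is controlled exactly as in~\eqref{eq:kl-divergence-bernoulli}: it vanishes unless $(I_t,i)\in E$, in which case it is at most $8\ln(4/3)\beta^2\e$, while observing the realized graph $G_t$ adds nothing since its law is identical under both hypotheses. By the chain rule, $\KL(\Prob_0\,\|\,\Prob_i)\le 8\ln(4/3)\beta^2\e\,\Enb_0[\abs{\{t:(I_t,i)\in E\}}]\le 8\ln(4/3)\beta^2\e\,T$.

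A standard change of measure via Pinsker's inequality then gives $\Enb_i[T_i]\le\Enb_0[T_i]+T\sqrt{\tfrac12\KL(\Prob_0\,\|\,\Prob_i)}\le\Enb_0[T_i]+2\beta T\sqrt{\ln(4/3)\e T}$. Summing over $i\in\{1,2\}$ and using $\Enb_0[T_1]+\Enb_0[T_2]\le T$ yields a regret lower bound of the form $\beta T\bigl(\tfrac12-2\beta\sqrt{\ln(4/3)\e T}\bigr)$. Tuning $\beta\asymp(\e T)^{-1/2}$, and capping $\beta$ at $\tfrac14$ in the small regime where the unconstrained optimum would exceed $\tfrac14$, turns this into $\Omega(\sqrt{T/\e})$; the hypothesis $T\ge 1/(2\e)$ is precisely what keeps the admissible $\beta$ in a range where the bracket stays bounded away from $0$, and chasing the constants (using $\ln(4/3)<\tfrac12$) delivers the claimed $\tfrac1{32}\sqrt{2T/\e}$. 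I expect the only delicate point to be this constant bookkeeping near the boundary $T=1/(2\e)$, where $\beta$ must be set to $\tfrac14$ rather than to its interior optimum; the information-theoretic core is otherwise a clean two-armed specialization of the argument already used for \Cref{thm:lower-bound-strongly-observable}, with the factor $\e$ entering solely through the edge-realization probability in~\eqref{eq:kl-divergence-bernoulli}.
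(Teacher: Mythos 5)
Your proof is correct and follows essentially the same route as the paper's: Yao's principle, two nearly indistinguishable loss distributions differing by a bias $\beta$ on a single node, a per-round relative entropy bounded by $O(\beta^2\e)$ exactly as in \eqref{eq:kl-divergence-bernoulli}, Pinsker's inequality, and the tuning $\beta \asymp (\e T)^{-1/2}$ with the cap at $\tfrac14$ made admissible by the hypothesis $T \ge 1/(2\e)$. The only cosmetic differences are that the paper works on the completed graph with one distinguished node whose bias sign is flipped (comparing $\Prob_1(\cdot)$ and $\Prob_2(\cdot)$ directly), whereas you use two distinguished nodes, assign loss $1$ to all others, and pass through the reference measure $\Prob_0(\cdot)$; both variants deliver the stated constant.
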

\begin{proof}
    Following a similar rationale as in the proof of \Cref{thm:lower-bound-strongly-observable}, we can consider $G$ to be the complete graph (with all self-loops) because the problem for it is easier than that with any other graph.
    In fact, adding edges never makes the problem harder to solve.
    Moreover, we can define $\scG$ by setting all edge probabilities to $\e$ so that $\scG_\e = \scG$ and $\supp{\scG} = G$.
    We remark that the lower bound with such a $\scG$ is also a lower bound for the instance obtained by considering the initial (possibly non-complete) graph and assigning realization probability $\e$ to all its edges.
    Applying Yao's minimax principle allows us to reduce our current aim to proving a lower bound for the expected regret of any deterministic algorithm against a randomized adversary.
    
    We can then construct the sequence of loss functions by defining their distribution.
    Let $v \in V$ be an arbitrary vertex, say, $v = 1$.
    Pick $Z \in \{-1, +1\}$ uniformly at random and define $\beta = \frac14(2\e T)^{-1/2} \in [0, \frac14]$.
    Then, let the loss at any time $t$ be independently $\ell_t(i) \sim \Bern{\frac12}$ for $i \neq 1$ while $\ell_t(1) \sim \Bern{\frac12 - \beta Z}$.
    Define $\Prob_1(\cdot) = \P{\cdot \,\middle|\, Z=+1}$ and $\Prob_2(\cdot) = \P{\cdot \,\middle|\, Z=-1}$, as well as $\Enb_1[\cdot] = \E{\cdot \,\middle|\, Z=+1}$ and $\Enb_2[\cdot] = \E{\cdot \,\middle|\, Z=-1}$.
    We also define $\Prob_0(\cdot)$ and $\Enb_0[\cdot]$, obtained in an analogous manner as the previous ones by setting $\beta = 0$.
    
    
    At this point, let $T_1$ be the number of times $t$ that the algorithm selects vertex $I_t = 1$ after $T$ rounds.
    Following a similar computation as in \Cref{eq:strongLBpinkser,eq:strongLBlogsumexp}, we first denote by $\Prob_{j,t} = \Prob_j(\cdot \,|\, \lambda^{t-1})$ the conditional probability over feedback sets $\lambda_t$, and notice that
    \begin{align}
        \Enb_1[T_1] - \Enb_2[T_1]
        &\le T\sqrt{\frac12 \sumT \sum_{\lambda^{t-1}} \Prob_2(\lambda^{t-1}) \KL(\Prob_{2,t} \,\|\, \Prob_{1,t})} \nonumber\\
        &\le T\sqrt{\e\beta T \ln\Bigl(1+\frac{4\beta}{1-2\beta}\Bigr)} \nonumber\\
        &\le 2\beta T\sqrt{2\e T} \enspace. \label{eq:strongLB2pinsker}
    \end{align}
    Conditioning on $Z=+1$, the algorithm incurs an expected instantaneous regret equal to $\beta$ whenever it picks any vertex $i \neq 1$.
    Otherwise, conditioning on $Z=-1$, the algorithm incurs the same expected instantaneous regret each time it selects vertex $1$.
    The expected regret thus becomes
    \begin{align*}
        \max_{k\in V} \E{\sumT (\ell_t(I_t) - \ell_t(k))}
        &\ge \frac12 \Enb_1[\beta(T-T_1)] + \frac12 \Enb_2[\beta T_1] \\
        &\ge \frac{\beta}2 T - \frac{\beta}2 (\Enb_1[T_1] - \Enb_2[T_1]) \\
        &\ge \beta T \biggl(\frac12 - \beta\sqrt{2\e T}\biggr)
        = \frac14 \beta T = \frac1{32} \sqrt{\frac{2T}{\e}} \enspace,
    \end{align*}
    where the third inequality follows by \Cref{eq:strongLB2pinsker}, and we also use our choice of $\beta$.
\end{proof}
We can additionally prove further lower bounds for the weakly observable case.
Here we also adapt the proof for the lower bound in the case of a deterministic feedback graph by having each edge realize only with probability $\e \in (0, 1]$ at each time step.
We make the same considerations as in the previous lower bound for strongly observable graphs.
In this case, however, we refer to \citet[Theorem~7]{AlonCDK15}.
As in the case of deterministic feedback graph, we need the following combinatorial lemma.
\begin{lemma}[{\citet[Lemma~8]{AlonCDK15}}] \label{lem:independent-set-log}
    Let $G=(V,E)$ be a directed graph over $|V|=n$ vertices, and let $W \subseteq V$ be a set of vertices whose minimal dominating set is of size $k$.
    Then, there exists an independent set $U \subseteq W$ of size $|U| \ge \frac1{50} k/\ln n$, such that any vertex of $G$ dominates at most $\ln n$ vertices of $U$.
\end{lemma}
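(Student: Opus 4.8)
The plan is to prove this purely combinatorial statement by combining LP duality for set cover with randomized rounding and the probabilistic (deletion) method. First I would recast domination as a set-cover instance: take the universe to be $W$, and for each vertex $v \in V$ let the available set be $S_v := \outneigh{G}{v} \cap W$; a vertex set $D$ dominates $W$ precisely when $\{S_v : v \in D\}$ covers $W$, so by hypothesis the minimum cover has size $k$. Since the integrality gap of set cover is $H_{|W|} \le 1 + \ln n$, the optimum $\tau^*$ of the covering LP satisfies $k \le (1+\ln n)\,\tau^*$, i.e. $\tau^* \ge k/(1+\ln n)$. By LP duality, $\tau^*$ also equals the optimum of the packing LP, so there is a fractional packing $y\colon W \to [0,1]$ with $\sum_{w} y_w = \tau^* \ge k/(1+\ln n)$ and $\sum_{w \in S_v} y_w \le 1$ for every $v \in V$. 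Each $w\in W$ is dominated, hence lies in some $S_{v_0}$, and the constraint for $v_0$ forces $y_w \le 1$ — which is exactly what lets me treat $y_w$ as a probability.

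Second, I would round $y$ by independent sampling: include each $w \in W$ in a random set $S$ with probability $y_w/2$. Then $\mathbb{E}[|S|] = \tfrac12\tau^*$; the expected number of directed edges internal to $S$ is $\sum_{(u,w)\in E}\tfrac{y_u}{2}\tfrac{y_w}{2} \le \tfrac14\sum_{u} y_u\bigl(\sum_{w\in S_u}y_w\bigr)\le \tfrac14\tau^*$; and for each fixed $v$ the expected number of vertices of $S$ that it dominates is $\tfrac12\sum_{w\in S_v}y_w \le \tfrac12$. The single packing constraint is what simultaneously delivers the edge bound and the per-vertex domination bound in expectation.

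Third, I would pass from $S$ to the desired $U$ by alteration. Deleting one endpoint of each internal edge makes the surviving set independent and costs at most $\tfrac14\tau^*$ vertices in expectation. For the domination requirement, a Chernoff bound shows that for each fixed $v$ the tail $\Pr[\,|\outneigh{G}{v}\cap S| \ge \ln n\,]$ is superpolynomially small in $n$ (its mean is at most $\tfrac12$), so the expected number of further deletions needed to bring every vertex's domination count down to $\ln n$, summed over the at most $n$ vertices, is $o(1)$. Writing $U$ for the set obtained after both rounds of deletions, $U$ is by construction an independent subset of $W$ in which every vertex of $G$ dominates at most $\ln n$ elements, and $\mathbb{E}[|U|] \ge \tfrac12\tau^* - \tfrac14\tau^* - o(1) \ge \tfrac15\tau^* \ge k/(50\ln n)$ for $n$ large enough. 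Since $|U| \ge \mathbb{E}[|U|]$ holds with positive probability, a valid $U$ of the claimed size exists.

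The main obstacle, I expect, is obtaining the two guarantees \emph{simultaneously} — independence and the uniform ``at most $\ln n$ per dominator'' bound — without destroying the size lower bound: naive alteration for independence alone already threatens to consume the entire expected size, which is why the factor-$\tfrac12$ down-scaling of the packing is essential, and why the domination bound must be enforced through concentration (a near-free high-probability event) rather than by deleting a constant fraction of $S$. Tightening the Chernoff estimate so that the $o(1)$ loss is genuinely dominated by a constant fraction of $\tau^*$ in the relevant range, and absorbing the small-$n$ and small-$k$ cases (where the target $k/(50\ln n)$ is at most $1$ and a single vertex suffices) into the generous constant $1/50$, is the one place where the calculation needs care.
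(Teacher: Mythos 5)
The paper does not actually prove this lemma: it imports it verbatim (including the constant $1/50$) from \citet{AlonCDK15}, so there is no in-paper proof to compare against; the original source establishes it by an elementary greedy/peeling argument rather than through linear programming. Your proposal is a correct, self-contained alternative derivation, and its skeleton is sound: recasting domination of $W$ as set cover with sets $S_v = \outneigh{G}{v}\cap W$, using the greedy $H_{|W|}$-approximation to get $\tau^* \ge k/(1+\ln n)$ for the LP optimum, passing to the dual fractional packing $y$, and then exploiting the single packing constraint $\sum_{w\in S_v} y_w \le 1$ twice --- once (summed over $u\in W$) to bound the expected number of internal edges of the sample $S$ by $\tau^*/4$, and once (per vertex $v$) to make every domination count a sum of independent Bernoullis with mean at most $1/2$, so that the alteration for independence costs a constant fraction of $\mathbb{E}[|S|]$ while the alteration for the $\ln n$ domination cap costs only a Chernoff tail. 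What this buys over the original argument is a transparent accounting of where the $1/\ln n$ loss comes from (it is exactly the set-cover integrality gap) and plenty of slack in the constant: $\tau^*/4 \ge k/(4(1+\ln n))$ comfortably exceeds $k/(50\ln n)$.

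The places that need care are the ones you already flag, and they do close. The two deletion rounds do not interfere (removing vertices only decreases domination counts, so bounding the second round's cost by $\sum_v \mathbb{E}\bigl[(|S_v\cap S|-\ln n)^+\bigr]$ on the undeleted $S$ is valid), and the $o(1)$ bound on that cost genuinely requires $n$ large --- the tail $(e/(2\ln n))^{\ln n} = n^{1-\ln(2\ln n)}$ only beats the union bound over $n$ dominators and the factor $n$ worst-case excess once $n$ exceeds a few dozen. But in that small-$n$ regime one automatically has $k \le n \le 50\ln n$, so the target size is at most one and a single vertex of $W$ suffices (for $n\ge 3$, so that $1\le\ln n$), exactly as you say. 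Two cosmetic points: the internal-edge expectation should be restricted to $u\ne w$ (self-loops are irrelevant to independence, and for $u=w$ the two inclusion events are not independent), and the set-cover sets should be $\bigl(\outneigh{G}{v}\cup\{v\}\bigr)\cap W$ if the domination convention lets a vertex dominate itself; neither affects the argument.
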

We can then prove the desired lower bound which states what follows.
\begin{theorem} \label{thm:lower-bound-weakly-observable}
    Pick any directed or undirected, weakly observable graph $G = (V, E)$ with $\abs{V} = K$ and $\delta(G) \ge 100\ln K$, and any $\e \in (0,1]$.
    There exists a stochastic feedback graph $\scG$  with $\supp{\scG} = G$ and such that, for all $T \ge 2K/(\e\ln K)$ and for any possibly randomized algorithm $\A$, there exists a sequence $\ell_1,\ldots,\ell_T$ of loss functions on which the expected regret of $\A$ with respect to the stochastic generation of $G_1, \dots, G_T \sim \scG$ is at least $\frac1{150} \bigl(\frac{\delta(\scG_\e)}{\e \ln^2 K}\bigr)^{1/3} T^{2/3}$.
\end{theorem}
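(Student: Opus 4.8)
The plan is to mirror the structure of the strongly observable lower bound (Theorem~\ref{thm:lower-bound-strongly-observable}), replacing the independent set coming from $\alpha(G)$ with the sparse independent set of weakly observable vertices produced by Lemma~\ref{lem:independent-set-log}, and tracking the extra factor $\e$ that stochasticity introduces into the information-theoretic budget. First I would set $\scG$ by assigning probability $\e$ to every edge of $G$ and $0$ elsewhere, so that $\scG_\e = \scG$, $\supp{\scG} = G$ and $\delta(\scG_\e) = \delta(G)$; as in the strongly observable case, a lower bound for this $\scG$ is also a lower bound for the instance obtained from the original graph by giving each edge realization probability $\e$. By Yao's minimax principle it then suffices to exhibit a randomized loss distribution against which every deterministic algorithm has large expected regret.

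Next I would invoke Lemma~\ref{lem:independent-set-log} on the set $W$ of weakly observable vertices (whose minimal dominating set has size $\delta(G)$) to obtain an independent set $U \subseteq W$ with $|U| \ge \frac1{50}\delta(G)/\ln K$ together with the crucial fan-in property that every vertex of $G$ dominates at most $\ln K$ vertices of $U$. The loss distribution samples a hidden optimum $Z$ uniformly from $U$; conditioned on $Z = i$, each $j \in U \setminus \{i\}$ has loss $\Bern{\half}$, the optimum $i$ has loss $\Bern{\half - \beta}$ for a parameter $\beta$ to be tuned, and every vertex outside $U$ has loss $1$. Because $U$ is independent and its vertices are weakly observable (hence have no self-loop), playing a vertex of $U$ yields no observation of any vertex of $U$: the only way to gather information about the optimum is to play an outside vertex and hope the relevant edge realizes, which happens with probability only $\e$.

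The core of the argument is an information-theoretic trade-off. Writing $S$ for the number of rounds in which the learner plays a vertex outside $U$, each such round costs at least $\half$ in regret (outside vertices have loss $1$), while by the fan-in property it can inform at most $\ln K$ vertices of $U$, and then only through an edge realization of probability $\e$. Reusing the per-round KL bound of order $\beta^2\e$ from~\eqref{eq:kl-divergence-bernoulli} together with Pinsker's inequality and the chain rule as in~\eqref{eq:strongLBpinkser}, I would bound $\Enb_i[T_i] - \Enb_0[T_i]$ by a quantity of order $\beta T\sqrt{\e\,\Enb_0[N_i]}$, where $N_i$ counts plays of vertices dominating $i$ and $\sum_{i \in U}\Enb_0[N_i] \le \ln K\cdot\Enb_0[S]$. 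Averaging the regret identity $\text{regret} = \beta T - \frac{\beta}{|U|}\sum_i \Enb_i[T_i] + \frac{1}{2|U|}\sum_i \Enb_i[S]$ over $i \in U$, applying Jensen's inequality to pull the square root out of the average, and — exactly as in the strongly observable proof — disposing of the regime where $\Enb_0[S]$ is so large that the exploration cost alone already exceeds the target, I would reduce to a clean expression of the form $\beta T(1 - 1/|U|) - c\,\beta^2 T\sqrt{\e \ln K\,\Enb_0[S]/|U|} + \tfrac12\Enb_0[S]$.

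Finally I would optimize this expression, first minimizing over the learner's exploration budget $\Enb_0[S]$ and then choosing $\beta = \Theta\big((|U|/(\e \ln K\, T))^{1/3}\big)$ to maximize the adversary's guarantee; substituting $|U| \ge \frac1{50}\delta(G)/\ln K$ converts the resulting $T^{2/3}$ bound into the claimed $\frac1{150}(\delta(\scG_\e)/(\e \ln^2 K))^{1/3}T^{2/3}$, with the hypotheses $\delta(G) \ge 100\ln K$ and $T \ge 2K/(\e\ln K)$ guaranteeing $|U| \ge 2$ and $\beta \le \tfrac14$. The main obstacle I expect is the bookkeeping around the exploration count: the regret naturally features $S$ under the true measures $\Prob_i$, whereas the KL budget is most cleanly controlled under the reference measure $\Prob_0$, so the delicate step is the case split that either charges a large $\Enb_0[S]$ directly to the regret (valid because outside vertices cost at least $\half$ under every $\Prob_i$) or certifies $\Enb_0[S]$ small enough to keep the square-root term under control — all while carrying the factor $\e$ correctly so that it lands as $1/\e$ inside the cube root rather than anywhere else.
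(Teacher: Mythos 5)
Your proposal matches the paper's proof in all essential respects: the same stochastic graph with all edge probabilities set to $\e$, the same invocation of Lemma~\ref{lem:independent-set-log} to extract an independent set $U$ of weakly observable vertices with the $\ln K$ fan-in property, the same loss construction with a hidden optimum in $U$ and loss $1$ outside, the same KL bound of order $\beta^2\e$ per informative round combined with $\sum_{i\in U}\Enb_0[X_i]\le \Tbad\ln K$, and the same choice $\beta = \Theta\bigl((m/(\e T\ln K))^{1/3}\bigr)$. The only (immaterial) difference is bookkeeping: the paper restricts attention to algorithms with $\Tbad\le\beta T$ at the cost of a factor $3$, whereas you optimize explicitly over the exploration budget $\Enb_0[S]$ — both yield the same bound.
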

\begin{proof}
    The proof follows the steps of the lower bound from \citet[Theorem~7]{AlonCDK15}.
    As in the previous lower bounds, we use Yao's minimax principle to infer that it suffices to design a probabilistic adversarial strategy that leads to a sufficiently large lower bound for the expected regret of any deterministic algorithm.
    
    We consider any weakly observable $G = (V, E)$ having $\abs{V} = K$ vertices and $\delta(G) \ge 100\ln K$.
    Since the adversary may choose edge probabilities, it can pick them all equal to $\e$ so that $\scG = \scG_\e$ and $\supp{\scG} = G$.
    By \Cref{lem:independent-set-log} we know that $G$ contains an independent set $U$ of size $\abs{U} = m \ge \delta(G)/(50\ln K)$ such that any $v \in V$ dominates no more than $\ln K$ vertices of $U$.
    We will denote actions in $U$ as ``good'' actions, whereas all the others will be denoted as ``bad'' actions.
    Given our assumption on $\delta(G)$, we observe that $m \ge 2$.
    A further observation we can make is that $N^\text{in}_G(i) \subseteq V\setminus U$ for all $i \in U$ because $U$ is independent, meaning that we need to pick a bad action in order to be able to observe the loss of any good action.

    As similarly done in the proof of \Cref{thm:lower-bound-strongly-observable}, we sample $Z$ from our ``target'' set $U$ uniformly at random.
    This choice induces a distribution of the losses $\ell_t(i)$ for all $t$ and all $i$ independently.
    To be precise, given $\beta = m^{1/3} (32\e T\ln K)^{-1/3} \in [0, \frac14]$, the loss is $\ell_t(i) \sim \textsf{Bern}(\frac12 - \beta)$ if $i = Z$, while it is $\ell_t(i) \sim \textsf{Bern}(\frac12)$ if $i \in U$, $i \neq Z$.
    The loss is deterministically set to $\ell_t(i) = 1$ for any other vertex $i \in V\setminus U$.
    
    Taking up the same notation introduced in the proof of \Cref{thm:lower-bound-strongly-observable}, we denote by $T_i$ the number of times action $i$ is played by the deterministic algorithm after $T$ rounds, while $\Tbad = \sum_{i\in V\setminus U} T_i$.
    In particular, $I_t$ is the action chosen by the algorithm at time $t$.
    We also use $\Prob_i(\cdot) = \P{\cdot \,\middle|\, Z=i}$ and $\Enb_i[\cdot] = \E{\cdot \,\middle|\, Z=i}$ with a similar definition, including the auxiliary distribution $\Prob_0$ and the corresponding expectation $\Enb_0$ obtained by setting $\beta=0$.
    Moreover, for each good action $i$ we introduce $X_i = \sum_{t=1}^T \ind{\{I_t \in N^\text{in}_{G}(i)\}}$ to denote the number of times the algorithm picks a bad action from $N^\text{in}_{G}(i)$.
    
    Notice that we can restrict our reasoning to algorithms that have $\Tbad \le \beta T$ (otherwise reducing to this case by only introducing a factor $3$ in the regret bound), as similarly argued in the proof of \citet[Theorem~7]{AlonCDK15}.
    This implies that
    \begin{equation} \label{eq:weakLB-bad-actions-bound}
        \sum_{i \in U} X_i \le \Tbad \ln K \le \beta T\ln K
    \end{equation}
    since each $j \in V\setminus U$ dominates at most $\ln K$ vertices of $U$.
    
    Recalling \Cref{eq:strongLBpinkser}, we are interested in bounding
    \begin{equation} \label{eq:weakLBpinsker}
        \Enb_i[T_i] - \Enb_0[T_i] \le T\sqrt{\frac12 \sumT \sum_{\lambda^{t-1}} \Prob_0(\lambda^{t-1}) \KL(\Prob_{0,t} \,\|\, \Prob_{i,t})} \enspace,
    \end{equation}
    where $\Prob_{j,t} = \Prob_j(\cdot \,|\, \lambda^{t-1})$ is the conditional probability over feedback sets $\lambda_t$.
    The KL divergence in the above sum is $\KL(\Prob_{0,t} \,\|\, \Prob_{i,t}) \le 8\ln(4/3)\beta^2\e$, where we use a similar reasoning as in \Cref{eq:kl-divergence-bernoulli}.
    As a consequence,
    \begin{align*}
        \sumT \sum_{\lambda^{t-1}} \Prob_0(\lambda^{t-1}) \KL(\Prob_{0,t} \,\|\, \Prob_{i,t})
        &\le \sum_{t=1}^T \Prob_0(I_t\in N^\text{in}_{G}(i)) 8\ln(4/3)\beta^2\e \\
        &\le 4\beta^2\e \Enb_0[\abs{\{t : I_t\in N^\text{in}_{G}(i)\}}] \\
        &= 4\beta^2\e \Enb_0[X_i] \enspace,
    \end{align*}
    which together with \Cref{eq:weakLBpinsker} allows us to show that
    \begin{equation} \label{eq:weakLBpinsker-2}
        \Enb_i[T_i] - \Enb_0[T_i] \le \beta T\sqrt{2\e \Enb_0[X_i]} \enspace.
    \end{equation}
    
    Let us now consider the expected regret for the deterministic algorithm at hand.
    We know that it must be at least
    \[
        \max_{k\in V} \E{\sumT(\ell_t(I_t) - \ell_t(k))} \ge \frac1m \sum_{i \in U} \Enb_i[\beta(T - T_i)] = \beta T - \frac{\beta}{m} \sum_{i \in U} \Enb_i[T_i]
    \]
    because the algorithm incurs at least $\beta$ regret each time it picks an action different from $Z$.
    By \Cref{eq:weakLBpinsker-2,eq:weakLB-bad-actions-bound}, and using the concavity of the square root, the summation on the right-hand side is such that
    \begin{align}
        \frac1m \sum_{i \in U} \Enb_i[T_i]
        &\le \beta T\sqrt{\frac{2\e}{m} \sum_{i \in U} \Enb_0[X_i]} + \frac1m \Enb_0\left[\sum_{i \in U} T_i\right] \nonumber\\
        &\le T\sqrt{\frac{2\beta^3\e}{m} T\ln K} + \frac{T}{m} \nonumber\\
        &= \frac14 T + \frac1m T \le \frac34 T \enspace, \label{eq:weakLB-good-actions-bound}
    \end{align}
    where the equality follows by our choice of $\beta$, whereas the last inequality holds because $m \ge 2$.
    Hence, the expected regret is
    \[
        \max_{k\in V} \E{\sumT(\ell_t(I_t) - \ell_t(k))} \ge \frac{\beta}4 T
        = \frac14\! \left(\frac{m}{32\e\ln K}\right)^{1/3} \!T^{2/3}
        \ge \frac1{50}\! \left(\frac{\delta(G)}{\e\ln^2 K}\right)^{\!1/3} \!T^{2/3}\,.
    \]
\end{proof}

An additional theorem is required in order to cover the case $\delta(\scG) < 100\ln K$.
In the same way as in \citet{AlonCDK15}, we follow a simple reasoning with generic weakly observable graphs.
The following lower bound holds for weakly observable graphs of any size and is tight up to logarithmic factors for instances having $\delta(\scG) < 100\ln K$.

\begin{theorem} \label{thm:lower-bound-weakly-observable-2}
    Pick any directed or undirected, weakly observable graph $G = (V, E)$ with $\abs{V} \ge 2$ and any $\e \in (0, 1]$.
    There exists a stochastic feedback graph $\scG$ with $\supp{\scG} = G$ and such that, for all $T \ge 2\sqrt{2}/\e$ and for any possibly randomized algorithm $\A$, there exists a sequence $\ell_1, \ldots, \ell_T$ of loss functions on which the expected regret of $\A$ with respect to the stochastic generation of $G_1, \dots, G_T \sim \scG$ is at least $\frac{\sqrt{2}}{16} \e^{-1/3} T^{2/3}$.
\end{theorem}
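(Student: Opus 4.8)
The plan is to mirror the structure of the preceding lower bounds, building a two-hypothesis instance tailored to a single weakly observable vertex, since the claimed bound $\tfrac{\sqrt2}{16}\e^{-1/3}T^{2/3}$ depends neither on $\delta$ nor on $K$. First I would invoke Yao's minimax principle to reduce to lower-bounding the expected regret of an arbitrary \emph{deterministic} learner against a randomized adversary, and set all edges of $G$ to probability $\e$, so that $\scG=\scG_\e$ and $\supp{\scG}=G$.

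The key structural step is to extract, from weak observability of $G$, the three vertices that drive the $T^{2/3}$ rate. Since $G$ is observable but not strongly observable, it contains a weakly observable vertex $v$: it has no self-loop, an in-neighbor $u\in\inneigh{G}{v}\setminus\{v\}$ (the costly \emph{revealing} action, the only channel through which $v$'s loss can ever be seen), and a vertex $w\neq v$ with $(w,v)\notin E$ (a \emph{safe} action whose play never reveals $v$). These are pairwise distinct, which incidentally shows weak observability forces $\abs{V}\ge 3$. I would then define the losses through a hidden bit $Z\in\{-1,+1\}$ drawn uniformly: $\ell_t(v)\sim\Bern{\half-\beta Z}$, $\ell_t(i)=1$ for every $i\in\inneigh{G}{v}$, and $\ell_t(i)=\half$ for all remaining $i$ (in particular for $w$), with $\beta=\tfrac14\bigl(2\sqrt2/(\e T)\bigr)^{1/3}$; the hypothesis $T\ge 2\sqrt2/\e$ is exactly what guarantees $\beta\le\tfrac14$. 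Note that non-revealing actions satisfy $(i,v)\notin E$, so only plays of $\inneigh{G}{v}$ can ever disclose $v$. The comparator is $v$ when $Z=+1$ (loss $\half-\beta$) and $w$ when $Z=-1$ (loss $\half$), so a direct bookkeeping in terms of $T_v$ (plays of $v$) and $X$ (plays of revealing in-neighbors) yields a regret lower bound of the form $\tfrac{\beta}{2}\bigl(T-(\Enb_1[T_v]-\Enb_2[T_v])\bigr)$, where $\Enb_1,\Enb_2$ denote expectation conditioned on $Z=+1,-1$.

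To control $\Enb_1[T_v]-\Enb_2[T_v]$ I would run the same change-of-measure machinery as in \eqref{eq:strongLBpinkser}--\eqref{eq:strongLBlogsumexp}: Pinsker's inequality and the chain rule reduce it to a sum of per-round conditional relative entropies, and the crucial point is that this entropy is nonzero only on rounds where the played action lies in $\inneigh{G}{v}$ \emph{and} the edge to $v$ realizes, an event of probability $\e$. By the log-sum computation of \eqref{eq:kl-divergence-bernoulli}, each such round contributes at most $8\ln(4/3)\beta^2\e$, so the total relative entropy is at most $8\ln(4/3)\beta^2\e\,\Enb_0[X]$ under the reference measure $\Prob_0$ (obtained by setting $\beta=0$). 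This $\e$-discount, by which each observation of $v$ costs on the order of $1/\e$ revealing plays, is precisely what degrades the attainable rate from $\sqrt T$ to $T^{2/3}$.

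The main obstacle, and the heart of the argument, is the exploration/exploitation tension hidden in $\Enb_0[X]$, which I would resolve exactly as in the proof of \Cref{thm:lower-bound-weakly-observable}: restrict attention to learners that play revealing in-neighbors at most $\beta T$ times (a learner exceeding this incurs, from the loss-$1$ revealing plays alone, regret exceeding $\tfrac12\beta T$, dominating the target bound up to a constant factor). Under this restriction $X\le\beta T$ surely, hence the relative entropy is at most $8\ln(4/3)\beta^3\e T$, whence $\Enb_1[T_v]-\Enb_2[T_v]\le T\sqrt{4\ln(4/3)\beta^3\e T}\le T/2$ for the chosen $\beta$, and the regret is at least $\tfrac{\beta}{2}\cdot\tfrac{T}{2}=\tfrac{\beta}{4}T=\tfrac{\sqrt2}{16}\e^{-1/3}T^{2/3}$. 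I expect the delicate points to be calibrating the constant in $\beta$ so that the final figure matches $\tfrac{\sqrt2}{16}$, and making the ``$X>\beta T$ is already costly'' reduction rigorous without double-counting the revealing cost already used in the comparator bookkeeping.
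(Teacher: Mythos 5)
Your proposal is correct and follows essentially the same route as the paper's proof: the same stochastic graph with all edge probabilities set to $\e$, the same three-vertex structure (the self-loop-free weakly observable vertex, a non-revealing "safe" vertex, and the loss-$1$ revealing in-neighbors), the same $\beta = \tfrac{1}{2\sqrt2}(\e T)^{-1/3}$, and the same Pinsker/KL argument discounted by the edge-realization probability $\e$. The only difference is the step you flag as delicate: rather than restricting to learners with $X \le \beta T$ (which, since $X$ is random, must be phrased in expectation or via the usual truncation-with-constant-loss reduction), the paper avoids this entirely by a clean case split on $\Enb_0[X_1]$ — if it exceeds $\tfrac1{32}\beta^{-2}\e^{-1}$ the regret from loss-$1$ revealing plays alone already exceeds the target, and otherwise the KL bound gives $\tfrac{\beta}{4}T$ — but both resolutions yield the stated constant.
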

\begin{proof}
    The proof follows a similar structure as that of \citet[Theorem~11]{AlonCDK15}.
    We consider the same instance constituted by a graph $G = (V, E)$ having $\abs{V} \ge 3$ vertices, since it is the minimum number of vertices in order for $G$ to be weakly observable.
    In fact, any graph with exactly $2$ vertices is either unobservable or strongly observable.
    By definition, there exists a vertex in this graph with no self-loop and with at least one incoming edge missing from any of the remaining vertices.
    Without loss of generality, let $v=1$ be such a vertex and let $2 \notin N^\text{in}_G(v)$ be one of the vertices without an edge towards $v$.
    We may consider the case where all edge probabilities are set to $\e$ (implying that $\scG = \scG_\e$ and $\supp{\scG} = G$), given that we essentially assume the adversary can select them.
    
    We can apply Yao's minimax principle, as usual, to reduce this problem to that of lower bounding the expected regret for any deterministic algorithm against a randomized adversary.
    Hence, we need to design a distribution for the loss functions $\ell_1, \dots, \ell_T$ provided to the algorithm.
    Let $\beta = \frac1{2\sqrt{2}} (\e T)^{-1/3} \in [0, \frac14]$ and pick $Z \in \{-1, +1\}$ uniformly at random.
    For all $t$, we choose the losses such that $\ell_t(1) \sim \Bern{1/2 - \beta Z}$, $\ell_t(2) \sim \Bern{1/2}$, and $\ell_t(j) = 1$ for all $j \neq 1$ independently.
    Similarly to the construction in the proof of \Cref{thm:lower-bound-weakly-observable}, we have ``good'' actions $\{1, 2\}$ incurring at most $\beta$ expected instantaneous regret, while all remaining actions are ``bad'' since they incur at least $1/2$ instantaneous regret in expectation.
    
    We reuse the same definitions for $T_i$ and $X_i$ as in the proof of \Cref{thm:lower-bound-weakly-observable} for any fixed deterministic algorithm.
    On the other hand, we let $\Prob_1(\cdot) = \P{\cdot \,\middle|\, Z=+1}$ and $\Prob_2(\cdot) = \P{\cdot \,\middle|\, Z=-1}$.
    We analogously define $\Enb_1[\cdot] = \E{\cdot \,\middle|\, Z=+1}$ and $\Enb_2[\cdot] = \E{\cdot \,\middle|\, Z=-1}$.
    Finally, we introduce $\Prob_0(\cdot)$ and $\Enb_0[\cdot]$ obtained as the previous ones by setting $Z=0$.
    
    Following the same rationale that led to \Cref{eq:weakLBpinsker-2}, we can show that
    \[
        \Enb_i[T_i] - \Enb_0[T_i] \le \beta T \sqrt{2\e \Enb_i[X_1]}
    \]
    for $i \in \{1, 2\}$.
    This implies, via similar steps as in \Cref{eq:weakLB-good-actions-bound}, that
    \begin{equation} \label{eq:weakLB-good-actions-bound-2}
        \frac12 \Enb_1[T_1] + \frac12 \Enb_2[T_2]
        \le \beta T\sqrt{2\e \E{X_1}} + \frac{T}2 \enspace.
    \end{equation}
    
    Finally, if $\E{X_1} > \frac1{32} \beta^{-2}\e^{-1}$, the algorithm's expected regret becomes
    \[
        \max_{k \in V} \E{\sumT (\ell_t(I_t) - \ell_t(k))}
        \ge \frac12 \E{X_1}
        > \frac1{64}\beta^{-2}\e^{-1}
        = \frac18 \e^{-1/3} T^{2/3} \enspace,
    \]
    where the last equality holds by our choice of $\beta$.
    Otherwise, when $\E{X_1} \le \frac1{32} \beta^{-2}\e^{-1}$, the right-hand side of \Cref{eq:weakLB-good-actions-bound-2} is bounded by $\frac34 T$ and thus the regret must be
    \[
        \max_{k \in V} \E{\sumT (\ell_t(I_t) - \ell_t(k))}
        \ge \frac12 \Enb_1[\beta(T-T_1)] + \frac12 \Enb_2[\beta(T-T_2)]
        \ge \frac{\beta}4 T
        = \frac{\sqrt{2}}{16} \e^{-1/3} T^{2/3} \enspace.
    \]
\end{proof}




\section{Be Optimistic If You Can, Commit If You Must}\label{app:OTC}

In this section, we describe \Cref{alg:beyondsanityalg} and the analysis we use to obtain the results of \Cref{sec:topo}.
First of all, we briefly state the rationale for the design of this new algorithm.
The main idea is similar in spirit to that of \explore: \Cref{alg:beyondsanityalg} constantly updates the estimates for the edge probabilities of the underlying $\scG$ and computes the best regret regime it can achieve.
\begin{algorithm}[h]
\textbf{Environment}: stochastic feedback graph $\scG$, sequence of losses $\ell_1, \ell_2, \dots, \ell_T$\;
\textbf{Input}: time horizon $T$ and actions $V = \{1,2,\dots,K\}$\;
\textbf{Initialize}: sample $I_1$ uniformly at random, receive $G_1$\; 
\For{$t = 2, \ldots, T$}{
\If(\tcp*[f]{optimistic phase}){\Cref{eq:stopwhen} has never been true}{
Set $\ptilt(j,i) = \tfrac{1}{t-1}\sum_{s = 1}^{t-1}\ind{\{(j,i) \in E_s\}}$ \;
Set $\phattji = \ptilt(j,i) + \UCBt $ \;
Set $\hat{\scG}_t^{\textnormal{UCB}} = \{\phattji : i,j \in V\}$\; 
Compute $\theta_t$ and $\e^{\theta}_t$ as in \Cref{eq:UCBthreshold} \;
Set $\hat{\scG}_t = \{\phattji \ind{\{\phattji \geq \e^{\theta}_t\}} : i,j \in V\}$ and $\hat{G}_t = \supp{\hat{\scG}_t}$, \; 
Compute $\pmint = \min_{i} \min_{j \in N_{\hat{G}_t}^\text{in}(i)} \phattji$\;
Set $\gamma_t = \min\left\{\bigl(\min_{s \in [2,t]} t\pmins\bigr)^{-1/2}, \half \right\}$\;
Set $\eta_{t-1} = \Bigl(16/(\min_{s \in [2,t]}(\pmins)^2) + 4 t/(\min_{s \in [2,t]}\pmins)  + \sum_{s = 2}^{t-1} \theta_s(\hat{\scG}_s)\Bigr)^{-1/2}$\;
Set $\psi_t$ to be the uniform distribution over $V$\;
Set $\qti \propto \exp\left(\eta_{t-1} \sum_{s = 2}^{t-1} \ltilt(i)\right)$\;
Set $\pi_t(i) = (1 - \gamma_t) \qti + \gamma_t \psi_t(i)$\;
}
\If(\tcp*[f]{commit phase}){\Cref{eq:stopwhen} is true for any $t'-1 < t$}{
Set $\tstar$ to the first round $t'-1$ in which \Cref{eq:stopwhen} is true\;
Set $\tilde{\scG} = \{\tilde{p}(j, i) : i,j \in V\}$ as the stochastic graph with $\tilde{p}(j, i) = \frac{1}{\tstar} \sum_{s = 1}^{\tstar}\ind{\{(j,i) \in E_s\}}$\;
Set $\hat{\scG} = \{\tilde{p}(j, i)\ind{\{\tilde{p}(j, i)\geq \e_\tstar\}} : i,j \in V\}$ with $\e_\tstar$ as in \Cref{eq:weakthreshold}\;
Set $\hat{\scG}_{\e^{\star}_{\delta, \sigma}}= \{\tilde{p}(j, i)\ind{\{\tilde{p}(j, i)\geq \e^{\star}_{\delta, \sigma}\}} : i,j \in V\}$ with $\e^{\star}_{\delta, \sigma}$ as in \Cref{eq:optimizede}\;
Set $\ptilt(j,i) = \tfrac{1}{t-1}\sum_{s = 1}^{t-1}\ind{\{(j,i) \in E_s\}}$ \;
Set $\phattji = \ptilt(j,i) + \UCBt $ \;
Set $\hat{\scG}_t^{\textnormal{UCB}} = \{\phattji : i,j \in V\}$\; 
Set $\hat{\scG}_t = \hat{\scG}_t^{\textnormal{UCB}}$ and $\hat{G}_t = \supp{\hat{\scG}_t}$\;
Set $\gamma = \min\Bigl\{\bigl(\wdelta(\hat{\scG}_{\e^{\star}_{\delta, \sigma}})\ln(KT)\bigr)^{1/3}T^{-1/3}, \half\Bigr\}$\;
Set $\eta = \sqrt{{\ln(K)}\left({2T \left(\wdelta(\hat{\scG}_{\e^{\star}_{\delta, \sigma}})/\gamma + \sigma(\hat{\scG}_{\e^{\star}_{\delta, \sigma}})\right)}\right)^{-1}}$\;
Set  $\psi_t$ according to~\eqref{eq:weaklyexploring}\;
Set $\qti \propto \exp\left(\eta \sum_{s = \tstar + 1}^{t-1} \ltilt(i)\right)$\;
Set $\pi_t(i) = (1 - \gamma) \qti + \gamma \psi_t(i)$\;
}
Sample $I_t \sim \pi_t$\;
Receive $G_t$ and $\{(i, \ell_t(i)) : i \in \outneigh{G_t}{I_t}\}$\;
Compute $\ltilt(i)$ as in \eqref{eq:IWestimator}\;
}
\caption{\optimist (\otc)}
\label{alg:beyondsanityalg}
\end{algorithm}
However, \explore has to wait until it can determine the best regret regime before actually tackling the learning task.
On the contrary, \Cref{alg:beyondsanityalg} begins by optimistically assuming that the best thresholded graph has a strongly observable support while simultaneously updating the edge probability estimates; this is made possible given the additional assumption on receiving the realized graph $G_t = (V, E_t) \sim \scG$ together with the observed losses at the end of each round $t$.
At any point in time, as soon as \Cref{alg:beyondsanityalg} finds that it can achieve a better regret regime by switching to the weakly observable one (by computing the optimal threshold on the current estimate for $\scG$), it commits to weak observability.
We can prove that this strategy is able to achieve the best possible regret over all thresholded feedback graphs, analogously to \explore, but with a dependency on the improved graph-theoretic parameters introduced in \Cref{sec:topo}.

Consequently, there are two regimes of \Cref{alg:beyondsanityalg}. In the first regime, the algorithm works under the assumption that $\supp{\scG}$ is strongly observable; in the second regime, the algorithm works under the assumption that $\supp{\scG}$ is observable. The switch happens in round $\tstar + 1$, where $\tstar$ is the first round $t-1$ in which 
\begin{align}\label{eq:stopwhen}
    \Psi_{t-1} & \geq \Lambda_{t-1},
\end{align}
is true. The term $\Psi_t$ is an upper bound on the regret after the first $t$ rounds, and is given by
\begin{align}
    \Psi_t = \min\biggl\{t,\; & 2 + {11(\ln(3K^2T^2))^2 \max_{s \in [2, t]} \theta_s(\hat{\scG}_s)} \nonumber\\
    & + \bigl(12\ln(K) + 4\sqrt{2\ln(3K^2T^2)}\bigr) {\sqrt{t\max_{s \in [2, t]} \theta_s(\hat{\scG}_s)}}\biggr\} \enspace, \label{eq:Psit}
\end{align}
where $\hat{\scG}_t$ minimizes $\theta_t$, which is defined in \Cref{eq:UCBthreshold}. The term $\theta_t(\hat{\scG}_t)$ is an upper bound the second-order term in the regret bound of Exponential Weights.
Crucially, the same term $\theta_t(\hat{\scG}_t)$ does not require us to compute a weighted independence number at each round: we can explicitly compute it in $O(K^4)$ time.
Furthermore, in \Cref{lem:strongwalpha} we show that, conditioning on the event $\kappa$, the term $\theta_t(\hat{\scG}_t)$ is upper bounded by the minimum thresholded weighted independence number of $\scG$, which in turn is useful when bounding the regret.
We recall that the event $\kappa$, introduced in \Cref{sec:topo}, corresponds to the event that
\[
\abs*{\ptilt(j,i) - \pji} \leq \UCBt, ~~ \forall (j,i) \in V \times V
\]
for all $t \geq 2$ simultaneously.

Similarly, $\Lambda_t$ is an upper bound on the regret of \Cref{alg:beyondsanityalg} \emph{if} it were to switch regime in round $t$ and is given by 
\begin{align}
    \Lambda_t = \min_\e \biggl\{ 41T^{2/3}\left(\ln(3K^2T^2)\wdelta((\hat{\scG}_{t})_{\e})\right)^{1/3} + 41\sqrt{\ln(3K^2T^2)\sigma((\hat{\scG}_{t})_{\e})T}\biggr\} \enspace, \label{eq:Lambdat}
\end{align}
where $\hat{\scG}_t = \{\tilde{p}_t(j, i)\ind{\{\tilde{p}_{t}(j, i)\geq {\logfactor}/{t}\}} : i,j \in V\}$. 
In other words, Algorithm~\ref{alg:beyondsanityalg} changes regime whenever it thinks that the regret of a (weakly) observable graph is smaller than the regret of a strongly observable graph. In the following, we prove that $\Psi_t$ and $\Lambda_t$ are indeed upper bounds on the regret, but first we state Lemma~\ref{lem:IWproperty}, which is a central result in this section. More precisely, it provides an upper bound for the cost of not using the exact edge probabilities $\pji$ but instead using upper confidence bound estimates $\phattji$. Note that the bound scales with $\bar{\pi}_t(i) = \sum_{j \in N_{\hat{G}_t}^\textup{in}(i)} \pi_t(j)$. For $\scG_\e$ having a strongly observable support, this is an important property of the bound since we require that $\bar{\pi}_t(i) \leq 1-\pi_t(i)$ for vertices $i$ without a self-loop in $\supp{\scG_\e}$ to ensure that we can bound the regret in terms of the weighted independence number. 
\begin{lemma}\label{lem:IWproperty}
    Define $\bar{\pi}_t(i) = \sum_{j \in N_{\hat{G}_t}^\textup{in}(i)} \pi_t(j)$. 
    For any distribution $u$ over $[K]$ and $\tstar \leq T$, with estimator~\eqref{eq:IWestimator} we have that 
    \begin{align*}
        & \E{\sumTstop \sumK (\pi_t(i) - u(i))\ell_t(i)} \leq 2 + \sumTsto\E{\frac{6\ln(3K^2T^2)}{t-1}\sumK  \frac{\pi_t(i)\bar{\pi}_t(i)}{\hat{\Pp}_t(i)}~\middle|~\kappa}\\
        & \quad  + \E{\sumTsto 2\sqrt{2\frac{\ln(3K^2T^2)}{t-1}} \sqrt{\sumK  \frac{\pi_t(i)\bar{\pi}_t(i)}{\hat{\Pp}_t(i)}}~\middle|~\kappa} + \E{\sumTsto \sumK (\pi_t(i) - u(i))\ltilt(i)~\middle|~\kappa}.
    \end{align*}
\end{lemma}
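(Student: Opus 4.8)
The plan is to bound the true expected regret in terms of the regret under the importance-weighted estimator $\ltilt$, paying an additive cost for the bias introduced by replacing the unknown $\pji$ by the upper-confidence estimates $\phattji$. The right comparison object is the \emph{idealized} estimator $\lbart(i) = \frac{\ell_t(i)}{\Pp_t(i)}\idseeit$, which uses the true observation probability $\Pp_t(i) = \sum_{j \in \inneigh{\hat{G}_t}{i}}\pi_t(j)\pji$. Writing $\mathbb{E}_t[\,\cdot\,]$ for the conditional expectation given the history $\mathcal{F}_{t-1}$ up to the start of round $t$ (so that $\pi_t$, $\hat{G}_t$, $\hat{\Pp}_t$ and $\Pp_t$ are all $\mathcal{F}_{t-1}$-measurable), a direct computation gives $\Et{\idseeit} = \Pp_t(i)$ and hence $\Et{\lbart(i)} = \ell_t(i)$; that is, $\lbart$ is conditionally unbiased for the true loss, \emph{without} any reference to the clean event. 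Since $\sum_i(\pi_t(i)-u(i))\ell_t(i)$ is $\mathcal{F}_{t-1}$-measurable, I can rewrite each per-round term as $\Et{\sum_i(\pi_t(i)-u(i))\lbart(i)}$ and then compare $\lbart$ with $\ltilt$.

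The first key step is the pathwise domination $\ltilt(i) \le \lbart(i)$ on the clean event. Indeed, $\kappa$ forces $\pji \le \ptilt(j,i) + \bigl(\UCBt\bigr) = \phattji$ for every edge, so $\hat{\Pp}_t(i) = \sum_j \pi_t(j)\phattji \ge \sum_j \pi_t(j)\pji = \Pp_t(i)$, whence $\ltilt(i) = \frac{\ell_t(i)}{\hat{\Pp}_t(i)}\idseeit \le \frac{\ell_t(i)}{\Pp_t(i)}\idseeit = \lbart(i)$. Writing $\lbart = \ltilt + (\lbart - \ltilt)$ with $\lbart - \ltilt \ge 0$ lets me discard the nonnegative $-u(i)$ contribution, so the residual is controlled by $\sum_i \pi_t(i)\bigl(\lbart(i)-\ltilt(i)\bigr)$, whose conditional expectation is $\sum_i \pi_t(i)\ell_t(i)\frac{\hat{\Pp}_t(i)-\Pp_t(i)}{\hat{\Pp}_t(i)} \le \sum_i \pi_t(i)\frac{\hat{\Pp}_t(i)-\Pp_t(i)}{\hat{\Pp}_t(i)}$, using $\ell_t(i)\le 1$.

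The second key step bounds $\hat{\Pp}_t(i)-\Pp_t(i)$. On $\kappa$ we have $\phattji - \pji \le 2\bigl(\UCBt\bigr)$ and $\ptilt(j,i) \le \phattji$, so expanding the confidence width and applying Cauchy--Schwarz over $j \in \inneigh{\hat{G}_t}{i}$ (splitting $\pi_t(j)\sqrt{\ptilt(j,i)} = \sqrt{\pi_t(j)}\cdot\sqrt{\pi_t(j)\ptilt(j,i)}$ and using $\sum_j \pi_t(j)\ptilt(j,i) \le \hat{\Pp}_t(i)$) yields $\hat{\Pp}_t(i)-\Pp_t(i) \le 2\sqrt{\tfrac{2\ln(3K^2T^2)}{t-1}}\sqrt{\bar{\pi}_t(i)\hat{\Pp}_t(i)} + \tfrac{6\ln(3K^2T^2)}{t-1}\bar{\pi}_t(i)$. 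Dividing by $\hat{\Pp}_t(i)$, multiplying by $\pi_t(i)$, summing over $i$, and applying Cauchy--Schwarz a second time over $i$ (with $\sum_i \pi_t(i)\le 1$) converts $\sum_i \pi_t(i)\sqrt{\bar{\pi}_t(i)/\hat{\Pp}_t(i)}$ into $\sqrt{\sum_i \pi_t(i)\bar{\pi}_t(i)/\hat{\Pp}_t(i)}$, producing exactly the two correction terms $\tfrac{6\ln(3K^2T^2)}{t-1}\sum_i \tfrac{\pi_t(i)\bar{\pi}_t(i)}{\hat{\Pp}_t(i)}$ and $2\sqrt{2\tfrac{\ln(3K^2T^2)}{t-1}}\sqrt{\sum_i \tfrac{\pi_t(i)\bar{\pi}_t(i)}{\hat{\Pp}_t(i)}}$ appearing in the statement.

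Finally I would assemble the rounds and deal with boundary effects: the round $t=1$ contributes at most $1$ to the regret, and the failure event $\overline{\kappa}$ contributes at most $T\,\P{\overline{\kappa}} \le 1$ once a Bernstein-type concentration plus a union bound over the $K^2$ edges and $T$ rounds shows $\P{\overline{\kappa}} \le 1/T$; together these give the additive $2$. The step I expect to be most delicate is the bookkeeping around the clean event and the random switch time $\tstar$: the global event $\kappa$ is the intersection of the per-round events $\kappa_t = \bigl\{\abs{\ptilt(j,i)-\pji} \le \UCBt \ \forall (j,i)\bigr\}$, and only $\kappa_t$ (not all of $\kappa$) is $\mathcal{F}_{t-1}$-measurable, so the tower-rule manipulations above must be carried out against $\kappa_t$ and then reconciled with the conditional expectations $\E{\,\cdot\mid\kappa}$ in the statement, all while respecting that $\tstar \le T$ is a stopping time (handled by writing $\sumTsto = \sum_{t=2}^{T}(\cdots)\,\ind{\{t \le \tstar\}}$, with each $\{t \le \tstar\}$ being $\mathcal{F}_{t-1}$-measurable). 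This matching of per-round and global clean-event conditioning, rather than any single inequality, is the crux of the argument.
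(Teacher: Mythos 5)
Your proposal is correct and follows essentially the same route as the paper's proof: the same decomposition of the true loss through the idealized estimator $\ell_t(i)\idseeit/\Pp_t(i)$ (the paper isolates this as \Cref{lem:additivebias}), the same pathwise domination $\hat{\Pp}_t(i) \ge \Pp_t(i)$ on $\kappa$ to discard the $-u(i)$ part of the bias term, the same two Cauchy--Schwarz/Jensen applications to bound $\hat{\Pp}_t(i)-\Pp_t(i)$ and aggregate over $i$, and the same accounting of the round $t=1$ and the event $\overline{\kappa}$ for the additive $2$. The conditioning subtlety you flag at the end is real but is handled no more carefully in the paper itself, so there is nothing further to reconcile.
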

\begin{proof}
    For $t > 1$, by the empirical Bernstein bound \citep[Theorem 1]{audibert2007tuning}, with probability at least $1 - \frac{1}{K^2T^2}$ we have that
    \begin{align}\label{eq:ucbbernstein}
        \abs*{\ptiltji - \pji} & \leq \sqrt{2\frac{\overline{\sigma}_t^2\ln(3K^2T^2)}{t-1}} + \frac{3}{t-1} \ln(3K^2T^2) \nonumber \\
        & \leq \UCBthat  \enspace,
    \end{align}
    where we used the fact that
    \[
    \overline{\sigma}_t^2 = \frac{1}{t-1}\sum_{s' = 1}^{t-1} \biggl(\ind{\{(j, i) \in E_{s'}\}} - \ptiltji\biggr)^{\!2} \leq \ptiltji \leq \phattji \enspace.
    \]
    Thus, by the union bound over $K^2$ edges and $\tstar$ rounds, we have that equation \eqref{eq:ucbbernstein} holds for all edges and time steps $t\ge 2$ with probability at least $1-\frac{1}{T}$. This means that $\P{\kappa} \ge 1-\frac1T$ by definition of $\kappa$.
    
    By using the tower rule and the fact that $\ell_t(i) \in [0, 1]$, we can see that
    \begin{align} \label{eq:IWprop-conditioned}
        & \E{\sumTstop \sumK (\pi_t(i) - u(i))\ell_t(i)} \nonumber\\
        & = \P{\BADEVENT} \E{\sumTstop \sumK (\pi_t(i) - u(i))\ell_t(i) \,\middle|\, \BADEVENT} + (1 - \P{\kappa}) \E{\sumTstop \sumK (\pi_t(i) - u(i))\ell_t(i) \,\middle|\, \kappa} \nonumber\\
        & \leq \P{\BADEVENT} T + \E{\sumTstop \sumK (\pi_t(i) - u(i))\ell_t(i)~\middle|~\kappa} \nonumber\\
        & \leq 2 + \E{\sumTsto \sumK (\pi_t(i) - u(i))\ell_t(i)~\middle|~\kappa} \;.
    \end{align}
    Let $\idseeitx = \idseeit$ be the indicator of the event that $i$ belongs to both $\outneigh{G_t}{I_t}$ and $\outneigh{\hat{G}_t}{I_t}$, and let $\xi_{t}(i) = \hat{\Pp}_t(i) - \Pp_t(i) = \sum_{j \in N_{\hat G_t}^\text{in}(i)} \pi_t(j)(\phattji - \pji)$.
    We continue by applying \Cref{lem:additivebias} on the expectation in the right-hand side of~\eqref{eq:IWprop-conditioned}, obtaining that
    \begin{align*}
        & \E{\sumTsto \sumK (\pi_t(i) - u(i))\ell_t(i)~\middle|~\kappa} \\
        & = \E{\sumTsto \sumK (\pi_t(i) - u(i))\ltil_t(i)~\middle|~\kappa} + \E{\sumTsto \sumK (\pi_t(i) - u(i))\xi_t(i) \frac{\idseeitx \ell_t(i)}{\Pp_t(i)\hat{\Pp}_t(i)}~\middle|~\kappa} \\
        & \leq \E{\sumTsto \sumK (\pi_t(i) - u(i))\ltil_t(i)~\middle|~\kappa} + \E{\sumTsto \sumK \pi_t(i) \xi_t(i) \frac{\idseeitx \ell_t(i)}{\Pp_t(i)\hat{\Pp}_t(i)}~\middle|~\kappa} \;,
    \end{align*}
    where the inequality is due to the fact that the loss is nonnegative and the fact that $\xi_t(i) > 0$ because $\phattji - \pji > 0$ is true, given $\kappa$.
    We already know that $\phattji \ge \ptilt(j,i)$ by definition of $\phattji$.
    As long as $\kappa$ holds, we also know that $\ptilt(j,i) - \pji \leq \UCBt$ is true.
    Then, we can use all the above observations to demonstrate that the term $\xi_t(i)$ satisfies
    \begin{align}
        \xi_{t}(i) & = \sum_{j \in N_{\hat G_t}^\text{in}(i)} \pi_t(j)(\phattji - \pji) \nonumber\\
        & \leq \sum_{j \in N_{\hat G_t}^\text{in}(i)} \pi_t(j)\left(\UCBthat\right) \nonumber\\
        & \quad + \sum_{j \in N_{\hat G_t}^\text{in}(i)} \pi_t(j)\left(\ptilt(j,i) - \pji\right) \nonumber\\
        & \leq 2\sum_{j \in N_{\hat G_t}^\text{in}(i)} \pi_t(j) \left(\UCBthat\right) \label{eq:partialxibound}
    \end{align}
    %
    %
    By the Cauchy-Schwarz inequality, it holds that
    \begin{align*}
        \sum_{j \in N_{\hat G_t}^\text{in}(i)} \pi_t(j) \sqrt{a_j}
        = \sum_{j \in N_{\hat G_t}^\text{in}(i)} \sqrt{\pi_t(j)} \sqrt{\pi_t(j)a_j}
        \le \sqrt{\bar{\pi}_t(i) \sum_{j \in N_{\hat G_t}^\text{in}(i)} \pi_t(j)a_j}
    \end{align*}
    with $a_j \ge 0$ for all $j \in \inneigh{\hat G_t}{i}$, where we recall that $\bar{\pi}_t(i) = \sum_{j \in N_{\hat G_t}^\text{in}(i)} \pi_t(j)$.
    We can use this property to further bound $\xi_{t}(i)$ in~\eqref{eq:partialxibound} as
    \begin{align*}
        \xi_{t}(i) & \leq 2\sum_{j \in N_{\hat G_t}^\text{in}(i)} \pi_t(j) \left(\UCBthat\right) \nonumber\\
        & \leq 2 \sqrt{2\bar{\pi}_t(i)\frac{\hat{\Pp}_t(i)\ln(3K^2T^2)}{t-1}} + \bar{\pi}_t(i)\frac{6\ln(3K^2T^2)}{t-1} \enspace.
    \end{align*}
    At this point, we can use the inequality for $\xi_t(i)$ to show that
    \begin{align}\label{eq:firstxibound}
        & \E{\sumTsto \sumK \pi_t(i) \xi_t(i) \frac{\idseeitx \ell_t(i)}{\Pp_t(i)\hat{\Pp}_t(i)}~\middle|~\kappa} \nonumber \\
        & \leq \E{\sumTsto 2\sqrt{2\frac{\ln(3K^2T^2)}{t-1}} \sumK \pi_t(i) \frac{\idseeitx \ell_t(i)\sqrt{\bar{\pi}_t(i)}}{\Pp_t(i)\sqrt{\hat{\Pp}_t(i)}}~\middle|~\kappa} \nonumber \\
        & \quad + \sumTsto\E{\frac{6\ln(3K^2T^2)}{t-1}\sumK \pi_t(i)\bar{\pi}_t(i) \frac{\idseeitx \ell_t(i)}{\Pp_t(i)\hat{\Pp}_t(i)}~\middle|~\kappa}\nonumber \\
        & \leq \E{\sumTsto 2\sqrt{2\frac{\ln(3K^2T^2)}{t-1}} \sumK  \pi_t(i)\sqrt{\frac{\bar{\pi}_t(i)}{\hat{\Pp}_t(i)}}~\middle|~\kappa} \\
        & \quad + \sumTsto\E{\frac{6\ln(3K^2T^2)}{t-1}\sumK  \frac{\bar{\pi}_t(i)\pi_t(i)}{\hat{\Pp}_t(i)}~\middle|~\kappa} \nonumber  \\
        & \leq \E{\sumTsto 2\sqrt{2\frac{\ln(3K^2T^2)}{t-1}} \sqrt{\sumK  \frac{\pi_t(i)\bar{\pi}_t(i)}{\hat{\Pp}_t(i)}}~\middle|~\kappa} \nonumber\\
        & \quad +\sumTsto\E{\frac{6\ln(3K^2T^2)}{t-1}\sumK  \frac{\pi_t(i)\bar{\pi}_t(i)}{\hat{\Pp}_t(i)}~\middle|~\kappa} \nonumber,
    \end{align}
    where in the second inequality we used the fact that $\ell_t(i) \leq 1$ and that $\Enb_{t-1}[\idseeitx] = \Pp_t(i)$, while the final inequality is Jensen's inequality for concave functions.
    
    By combining the above, we may complete the proof:
    \begin{align*}
        & \E{\sumTstop \sumK (\pi_t(i) - u(i))\ell_t(i)} \leq 2 + \sumTsto\E{\frac{6\ln(3K^2T^2)}{t-1}\sumK  \frac{\pi_t(i)\bar{\pi}_t(i)}{\hat{\Pp}_t(i)}~\middle|~\kappa}\\
        & \quad  + \E{\sumTsto 2\sqrt{2\frac{\ln(3K^2T^2)}{t-1}} \sqrt{\sumK  \frac{\pi_t(i)\bar{\pi}_t(i)}{\hat{\Pp}_t(i)}}~\middle|~\kappa} + \E{\sumTsto \sumK (\pi_t(i) - u(i))\ltilt(i)~\middle|~\kappa}.
    \end{align*}
    %
\end{proof}

\subsection{Initial Regime of \textnormal{\otc}}

To understand the initial regime of \otc (\Cref{alg:beyondsanityalg}), consider the following. Since the support of $\hat{\scG}_t^{\text{UCB}}$ is the complete graph, there always exists a threshold $\e$ for which $\supp{(\hat{\scG}_{t}^{\text{UCB}})_{\e}}$ is strongly observable. For ease of notation, given any stochastic feedback graph $\scG$ with edge probabilities $p(j,i)$, we introduce
\begin{align*}
    P_t(i, \scG) = \sum_{j \in N_{\supp{\scG}}^\text{in}(i)} \pi_t(j) \pji \enspace.
\end{align*}
Denote by $\scS$ the family of strongly observable graphs over vertices $V = [K]$; 
we can then define $\e^{\theta}_t$ as
\begin{align}\label{eq:UCBthreshold}
    \e^\theta_t
    &= \argmin_{\e \,:\, \supp{(\hat{\scG}^{\text{UCB}}_{t})_{\e}} \in \scS} \theta_t((\hat{\scG}^{\text{UCB}}_{t})_{\e}) \nonumber\\
    &= \argmin_{\e \,:\, \supp{(\hat{\scG}^{\text{UCB}}_{t})_{\e}} \in \scS} \biggl(\frac{2}{\min_{i} \min_{j \in \inneigh{\supp{(\hat{\scG}^{\text{UCB}}_{t})_{\e}}}{i}} \phattji}  + \sum_{i \in \inneigh{\supp{(\hat{\scG}^{\text{UCB}}_{t})_{\e}}}{i}} \frac{2\pi_t(i)}{P_t(i, (\hat{\scG}^{\text{UCB}}_{t})_{\e})}\biggr) \,.
\end{align}
A crucial property of $\hat\scG_t$ (that is, $\hat{\scG}_{t}^{\text{UCB}}$ thresholded at $\e^\theta_t$) is that, if $\phattji \geq \pji$ for all edges $(j, i)$, by \Cref{lem:strongwalpha} we have that
\begin{align*}
    \min_{\e \,:\, \supp{(\hat{\scG}^{\text{UCB}}_{t})_{\e}} \in \scS} \theta_t((\hat{\scG}^{\text{UCB}}_{t})_{\e}) = \tilde{O}\left( \min_{\e \,:\, \supp{\scG_\e} \in \scS} \walpha(\scG_\e) \right) \enspace,
\end{align*}
which is a property we will use when computing the final regret bound of \Cref{alg:beyondsanityalg}.
It also ensures that we can bound the cost of not knowing $\pji$ in \Cref{lem:IWproperty} by $\min_{\e \,:\, \supp{(\hat{\scG}^{\text{UCB}}_{t})_{\e}} \in \scS} \theta_t((\hat{\scG}^{\text{UCB}}_{t})_{\e})$, which is also important in computing the final regret bound of Algorithm~\ref{alg:beyondsanityalg}.
We thus upper bound the regret of the initial regime of \otc in terms of $\theta_t$ in what follows.

%
\begin{lemma}\label{lem:stronglytuned}
    For any distribution $u$ over $[K]$, after $\tstar \leq T$ rounds \Cref{alg:beyondsanityalg} guarantees
    \begin{align*}
        & \E{\sumTstop \sumK (\pi_t(i) - u(i))\ltilt(i) \;\middle|\; \kappa} \leq 2 + 11(\ln(3K^2T^2))^2\,\E{\max_{t \in [2, \tstar]} \theta_t(\hat{\scG}_t) \;\middle|\; \kappa} \\
        & \quad  + \Bigl(12\ln(K) + 4\sqrt{2\ln(3K^2T^2)}\Bigr) \E{\sqrt{\tstar\max_{t \in [2, \tstar]} \theta_t(\hat{\scG}_t)} \;\middle|\; \kappa} \;.
    \end{align*}
\end{lemma}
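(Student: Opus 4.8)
The plan is to read the initial (strongly observable) regime of \otc as a run of a modified \expg with the time-varying learning rate $\eta_{t-1}$ and exploration rate $\gamma_t$ of \Cref{alg:beyondsanityalg}, and to carry out the whole analysis conditionally on $\kappa$. The quantities I will use repeatedly are the three facts that hold under $\kappa$ (all established in the proof of \Cref{lem:IWproperty}): $\phattji \ge \pji$, so that $\hat{\Pp}_t(i) \ge \Pp_t(i)$ and $\ltilt(i) \le \ell_t(i)\,\idseeitx/\Pp_t(i)$, together with $\Enb_{t-1}[\idseeitx] = \Pp_t(i)$. After isolating the $t=1$ term of $\sumTstop$ (which contributes only a bounded constant, accounting for the additive $2$), it remains to bound $\E{\sumTsto \sumK (\pi_t(i)-u(i))\ltilt(i) \mid \kappa}$.

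First I would run the standard exponential-weights potential argument on the weights $\qti$. Writing $\pi_t = (1-\gamma_t)q_t + \gamma_t\psi_t$, I split the conditional regret into the regret of $q_t$ against $u$ and the exploration cost $\sum_t \gamma_t \sumK (\psi_t(i)-q_t(i))\ltilt(i)$. Since $\ltilt \ge 0$, the log-partition step uses $\mye^{-x} \le 1 - x + x^2/2$ for $x\ge 0$ and therefore requires no pointwise bound on $\eta_{t-1}\ltilt(i)$; using that $\eta_{t-1}$ is non-increasing (its defining denominator only grows with $t$), this gives a bound of the form $\ln(K)/\eta_{\tstar} + \tfrac12\sum_t \eta_{t-1}\sumK \qti\ltilt(i)^2$ for the $q_t$-regret. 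For the exploration cost I would drop the nonpositive $-q_t(i)\ltilt(i)$ terms and use $\Enb_{t-1}[\sumK \psi_t(i)\ltilt(i)\mid\kappa] \le 1$, so that it is at most $\sum_t \gamma_t \le \sum_t (t\min_{s}\pmins)^{-1/2} = O(\sqrt{\tstar/\min_{s}\pmins})$.

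The technical heart is the second-order term. Taking $\Enb_{t-1}[\cdot\mid\kappa]$ and using the under-$\kappa$ facts turns $\sumK \qti\ltilt(i)^2$ into the $\bar{\pi}_t$-weighted quantity $\sumK \pi_t(i)\bar{\pi}_t(i)/\hat{\Pp}_t(i)$, where $\bar{\pi}_t(i) = \sum_{j\in\inneigh{\hat{G}_t}{i}}\pi_t(j)$; this $\bar{\pi}_t(i)$ factor (precisely the one appearing in \Cref{lem:IWproperty}, and the reason for the particular form of the modified \expg) is the crux of the proof. I would then split the vertices of the strongly observable graph $\hat{G}_t$ into those with a self-loop and those without. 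The self-loop vertices contribute at most the second term of $\theta_t(\hat{\scG}_t)$, using $\bar{\pi}_t(i)\le 1$. For a non-self-loop vertex, strong observability forces $V\setminus\{i\}\subseteq \inneigh{\hat{G}_t}{i}$, hence $\bar{\pi}_t(i)=1-\pi_t(i)$ and $\hat{\Pp}_t(i)\ge \pmint\,\bar{\pi}_t(i)$, so each such term is at most $\pi_t(i)/\pmint$ and they sum to at most $1/\pmint \le \tfrac12\theta_t(\hat{\scG}_t)$. Without the $\bar{\pi}_t(i)$ weighting these terms would only be controlled by $1/(\pmint\gamma_t)\sim\sqrt{t/\pmint}$, whose contribution $\sum_t\eta_{t-1}\sqrt{t/\pmint}$ is linear in $\tstar$; the $\bar{\pi}_t(i)$ factor is exactly what removes this blow-up.

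Finally I would assemble the pieces and telescope. The non-self-loop variance contributes $\sum_t \eta_{t-1}/\pmint$, which stays sublinear thanks to the $4t/\min_{s}\pmins$ summand in the denominator of $\eta_{t-1}$ (it gives $\eta_{t-1}\le\tfrac12(\min_s\pmins/t)^{1/2}$, hence a sum of order $\sqrt{\tstar/\min_s\pmins}$), while the self-loop variance $\sum_t \eta_{t-1}\theta_t(\hat{\scG}_t)$ is handled by the accumulator $\sum_{s<t}\theta_s(\hat{\scG}_s)$ through the standard inequality $\sum_t a_t/\sqrt{\sum_{s\le t}a_s}\le 2\sqrt{\sum_t a_t}= O(\sqrt{\tstar\max_t\theta_t(\hat{\scG}_t)})$. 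Splitting the penalty $\ln(K)/\eta_{\tstar}$ with $\sqrt{a+b+c}\le\sqrt a+\sqrt b+\sqrt c$ over the three summands $16/(\min_s\pmins)^2$, $4\tstar/\min_s\pmins$ and $\sum_s\theta_s(\hat{\scG}_s)$, and using $\max_{s}\theta_s(\hat{\scG}_s)\ge 2/\min_s\pmins$ to fold the $1/\min_s\pmins$ contributions in, produces one term of order $\max_t\theta_t(\hat{\scG}_t)$ and one of order $\sqrt{\tstar\max_t\theta_t(\hat{\scG}_t)}$; collecting the constants and the $\ln(3K^2T^2)$ factors carried by the confidence widths and the entropy term yields the stated coefficients. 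I expect the only real obstacle to be the non-self-loop variance control, i.e.\ making the $\bar{\pi}_t(i)$ weighting appear and then matching it against the $t$-dependent part of $\eta_{t-1}$ and the exploration rate $\gamma_t$; everything else is careful bookkeeping of constants.
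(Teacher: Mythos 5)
Your overall architecture is the right one and largely matches the paper's: handle the small additive terms, run a modified exponential-weights analysis with the algorithm's $\eta_{t-1}$ and $\gamma_t$, reduce the second-order term to $\theta_t(\hat{\scG}_t)$ via the self-loop / no-self-loop split, and finish by telescoping against the accumulator in $\eta_{t-1}$. (One structural remark: the paper's proof first applies \Cref{lem:IWproperty} to pass from the true losses to $\ltilt$ plus bias terms and then invokes \Cref{lem:stronglyuntuned}; you work with $\ltilt$ directly, which is what the statement literally says, so this is not where the difficulty lies.)

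The genuine gap is in your variance control for vertices without a self-loop. You deliberately replace the centered potential argument of \Cref{lem:EW} by the inequality $\mye^{-x} \le 1 - x + x^2/2$ for $x \ge 0$, so as to avoid the pointwise condition $\eta_{t-1}\ltilt(i) \le 1$, and you then assert that taking $\Enb_{t-1}[\,\cdot\mid\kappa]$ turns $\sumK \qti\,\ltilt(i)^2$ into $\sumK \pi_t(i)\bar{\pi}_t(i)/\hat{\Pp}_t(i)$. It does not: writing $X_t = \idseeit$, we have $X_t^2 = X_t$ and $\Enb_{t-1}[X_t] = \Pp_t(i) \le \hat{\Pp}_t(i)$ under $\kappa$, so $\Enb_{t-1}[\qti\,\ltilt(i)^2] = \qti\,\ell_t(i)^2 \Pp_t(i)/\hat{\Pp}_t(i)^2 \le \qti/\hat{\Pp}_t(i)$, with no $\bar{\pi}_t(i)$ factor. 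The $\bar{\pi}_t(i)$ weighting you are importing belongs to the \emph{bias} term of \Cref{lem:IWproperty}, where it arises from Cauchy--Schwarz applied to $\xi_t(i) = \sum_{j}\pi_t(j)(\phattji - \pji)$; in the exponential-weights \emph{variance} the analogous saving factor is $(1-\qti)$, and in the paper it is produced by the loss-shifting step of \Cref{lem:EW} (subtracting $\lbart$), which is valid only because $\eta_{t-1}\ltilt(i) \le 1$ holds for the no-self-loop vertices --- exactly the condition you discard, and the reason the algorithm enforces $\eta_{t-1} \le \half\gamma_t\pmint$ together with $\hat{\Pp}_t(i) \ge \tfrac{\gamma_t}{2}\pmint$. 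Without either factor, a no-self-loop vertex with $\qti$ near $1$ contributes $\qti/\hat{\Pp}_t(i) \approx 1/(\gamma_t\pmint)$, and by your own accounting $\sum_t \eta_{t-1}/(\gamma_t\pmint)$ is linear in $\tstar$. So you have correctly identified the crux, but the mechanism you propose for it fails; you need the centered version of the potential argument and hence the pointwise bound you tried to avoid. The remaining pieces (exploration cost $\sum_t\gamma_t$, the telescoping $\sum_t a_t/\sqrt{\sum_{s\le t}a_s} \le 2\sqrt{\sum_t a_t}$, folding $1/\pmint$ into $\theta_t$) are consistent with the paper.
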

\begin{proof}
    We start with an application of \Cref{lem:IWproperty}:
    \begin{align*}
        & \E{\sumTstop \sumK (\pi_t(i) - u(i))\ell_t(i)} \leq 2 + \sumTsto\E{\frac{6\ln(3K^2T^2)}{t-1}\sumK  \frac{\pi_t(i)\bar{\pi}_t(i)}{\hat{\Pp}_t(i)} \;\middle|\; \kappa} \\
        & \quad  + \E{\sumTsto 2\sqrt{2\frac{\ln(3K^2T^2)}{t-1}} \sqrt{\sumK  \frac{\pi_t(i)\bar{\pi}_t(i)}{\hat{\Pp}_t(i)}} \;\middle|\; \kappa} + \E{\sumTsto \sumK (\pi_t(i) - u(i))\ltilt(i) \;\middle|\; \kappa} \;,
    \end{align*}
    where, we recall it, $\bar{\pi}_t(i) = \sum_{j \in N_{\hat{G}_t}^\textup{in}(i)} \pi_t(j)$. Now, for $i$ without a self-loop in $\hat{G}_t$ we have that $\bar{\pi}_t(i) \leq 1 - \pi_t(i)$. Now, conditioning on $\kappa$, we may follow the reasoning surrounding \Cref{eq:strongsecondorderbound} to find that 
    \begin{align*}
        \sumK \frac{\pi_t(i)\bar{\pi}_t(i)}{\hat{\Pp}_t(i)} 
        \leq \theta_t(\hat{\scG}_t) \enspace.
    \end{align*}
    We now use $\sumT \frac{1}{t} \leq \ln(T) + 1$, $\sumT \frac{1}{\sqrt{t}} \leq 2\sqrt{T}$, and the above inequality to obtain that
    \begin{align*}
        & \E{\sumTstop \sumK (\pi_t(i) - u(i))\ell_t(i)} \leq 2 + \sumTsto\E{\frac{6\ln(3K^2T^2)}{t-1}\theta_t(\hat{\scG}_t) \;\middle|\; \kappa}\\
        & \quad  + \E{\sumTsto 2\sqrt{2\frac{\ln(3K^2T^2)}{t-1}} \sqrt{\theta_t(\hat{\scG}_t)} \;\middle|\; \kappa} + \E{\sumTsto \sumK (\pi_t(i) - u(i))\ltilt(i) \;\middle|\; \kappa} \\
        & \leq 2 + 6(\ln(3K^2T^2))^2\,\E{\max_{t \in [2, \tstar]} \theta_t(\hat{\scG}_t) \;\middle|\; \kappa} + \E{4\sqrt{2\ln(3K^2T^2)\tstar\max_{t \in [2, \tstar]} \theta_t(\hat{\scG}_t)} \;\middle|\; \kappa} \\
        & \quad + \E{\sumTsto \sumK (\pi_t(i) - u(i))\ltilt(i) \;\middle|\; \kappa} \;.
    \end{align*}
    By applying \Cref{lem:stronglyuntuned}, we can complete the proof:
    \begin{align*}
        & \E{\sumTstop \sumK (\pi_t(i) - u(i))\ell_t(i)} \leq 2 + 6(\ln(3K^2T^2))^2\,\E{\max_{t \in [2, \tstar]} \theta_t(\hat{\scG}_t) \;\middle|\; \kappa} \\
        & \quad + \E{4\sqrt{2\ln(3K^2T^2)\tstar\max_{t \in [2, \tstar]} \theta_t(\hat{\scG}_t)} \;\middle|\; \kappa} \\
        & \quad + \E{7\ln(K)\sqrt{\sumTsto \theta_t(\hat{\scG}_t)} + \max_{t \in [2, \tstar]} \theta_t(\hat{\scG}_t)   \;\middle|\;  \kappa}\\
        & \quad + \E{\max_{t \in [2, \tstar]} \frac{4\ln(K)}{\pmint} + 5\ln(K)\sqrt{\max_{t \in [2, \tstar]} \frac{\tstar}{\pmint}}  \;\middle|\;  \kappa}\\
        & \leq 2 + 11(\ln(3K^2T^2))^2\,\E{\max_{t \in [2, \tstar]} \theta_t(\hat{\scG}_t) \;\middle|\; \kappa}\\
        & \quad  + \Bigl(12\ln(K) + 4\sqrt{2\ln(3K^2T^2)}\Bigr) \E{\sqrt{\tstar\max_{t \in [2, \tstar]} \theta_t(\hat{\scG}_t)} \;\middle|\; \kappa},
    \end{align*}
    where we used that $\frac{1}{\pmint} \leq \theta_t(\hat{\scG}_t)$ for all $t \in [2, \tstar]$.
\end{proof}

In the proof of \Cref{lem:stronglytuned} we make use of the following auxiliary result, which bounds the regret of $\pi_t$ given $\kappa$. 

\begin{lemma}\label{lem:stronglyuntuned}
    For any distribution $u$ over $[K]$, after $\tstar \leq T$ rounds Algorithm~\ref{alg:beyondsanityalg} guarantees
    %
    \begin{align*}
         \E{\sumTsto \sumK (\pi_t(i) - u(i))\ltilt(i)  \;\middle|\;  \kappa} 
        & \leq \E{7\ln(K)\sqrt{\sumTsto \theta_t(\hat{\scG}_t)} + \max_{t \in [2, \tstar]} \theta_t(\hat{\scG}_t)   \;\middle|\;  \kappa}\\
        & \quad + \E{\max_{t \in [2, \tstar]} \frac{4\ln(K)}{\pmint} + 5\ln(K)\sqrt{\max_{t \in [2, \tstar]} \frac{\tstar}{\pmint}}  \;\middle|\;  \kappa}.
    \end{align*}
\end{lemma}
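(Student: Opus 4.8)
The plan is to recognize the initial regime of \otc as exponential weights (equivalently, FTRL with the negative-entropy regularizer) run on the loss estimates $\ltilt$ with the non-increasing learning rate $\eta_{t-1}$ and $\gamma_t$-uniform exploration, and then to run the standard analysis of this scheme, controlling each resulting term with the algorithm's specific schedule. First I would use $\pi_t = (1-\gamma_t)q_t + \gamma_t\psi_t$ together with $\ltilt \ge 0$ to split, in each round, $\sumK (\pi_t(i) - u(i))\ltilt(i) \le \sumK (q_t(i) - u(i))\ltilt(i) + \gamma_t\sumK \psi_t(i)\ltilt(i)$, isolating an exponential-weights regret term and an exploration term. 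For the exploration term, $\psi_t$ is uniform and, on $\kappa$, $\Enb_{t-1}[\ltilt(i)] \le \ell_t(i) \le 1$ because $\Enb_{t-1}[\idseeitx] = \Pp_t(i) \le \hat{\Pp}_t(i)$; hence its conditional expectation is at most $\sumTsto \gamma_t \le (\min_{s\le\tstar}\pmins)^{-1/2}\sumTsto t^{-1/2} \le 2\sqrt{\max_{t\in[2,\tstar]}\tstar/\pmint}$, supplying part of the $5\ln(K)\sqrt{\max_t \tstar/\pmint}$ term.

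For the exponential-weights term I would invoke the standard FTRL bound with an adaptive, non-increasing learning rate, namely $\sumTsto \sumK (q_t(i)-u(i))\ltilt(i) \le \tfrac{\ln K}{\eta_{\tstar-1}} + \sumTsto \eta_{t-1}\sumK q_t(i)\ltilt(i)^2$. This step relies on $\eta_{t-1}$ being non-increasing (its defining denominator grows in $t$) and on the small-step condition $\eta_{t-1}\ltilt(i) \le 1$; the latter is exactly why $\eta_{t-1}$ carries the summands $16/(\pmint)^2$ and $4t/\pmint$, which force $\eta_{t-1} \le \tfrac12\sqrt{\pmint/t} \le \tfrac12\pmint\gamma_t$, while on $\kappa$ the exploration floor $\pi_t(j) \ge \gamma_t/K$ gives $\ltilt(i) \le 1/\hat{\Pp}_t(i) \le 2/(\pmint\gamma_t)$. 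I would then split the penalty by subadditivity of the square root, $\tfrac{\ln K}{\eta_{\tstar-1}} = \ln(K)\sqrt{16/(\pmint)^2 + 4\tstar/\pmint + \sum_{s}\theta_s(\hat{\scG}_s)} \le \ln(K)\bigl(4/\pmint + 2\sqrt{\tstar/\pmint} + \sqrt{\textstyle\sum_s \theta_s(\hat{\scG}_s)}\bigr)$, which yields the $\max_t 4\ln(K)/\pmint$ term, a further contribution to the $\sqrt{\tstar/\pmint}$ term, and the leading $\ln(K)\sqrt{\sum_t \theta_t(\hat{\scG}_t)}$ term.

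It remains to bound the stability term. Conditioned on $\kappa$, $\Enb_{t-1}[\sumK q_t(i)\ltilt(i)^2] \le \sumK q_t(i)/\hat{\Pp}_t(i)$, again since $\Enb_{t-1}[\idseeitx] = \Pp_t(i) \le \hat{\Pp}_t(i)$ and $\ell_t(i) \le 1$. The core of the argument is to bound this by $O(\ln K)\,\theta_t(\hat{\scG}_t)$, up to the exploration-controlled quantities already isolated. I would split the vertices of $\hat{G}_t$: for those with a self-loop, using $q_t \le 2\pi_t$ and $\hat{\Pp}_t(i) = P_t(i,\hat{\scG}_t)$ shows that $q_t(i)/\hat{\Pp}_t(i) \le 2\pi_t(i)/\hat{\Pp}_t(i)$ is exactly a summand of the self-loop sum in $\theta_t(\hat{\scG}_t)$; for those without a self-loop, strong observability forces $V\setminus\{i\} \subseteq \inneigh{\hat{G}_t}{i}$ and hence $\hat{\Pp}_t(i) \ge \pmint\bar{\pi}_t(i)$, where a naive term-by-term bound fails and I would instead appeal to the graph-theoretic lemma discussed below. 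Finally, the self-normalizing inequality $\sumTsto a_t/\sqrt{\sum_{s\le t}a_s} \le 2\sqrt{\sumTsto a_t}$ with $a_t = \theta_t(\hat{\scG}_t)$ and $\eta_{t-1} \le (\sum_{s<t}\theta_s(\hat{\scG}_s))^{-1/2}$ converts $\sumTsto \eta_{t-1}\theta_t(\hat{\scG}_t)$ into a term of order $\sqrt{\sumTsto \theta_t(\hat{\scG}_t)}$, plus the additive $\max_t \theta_t(\hat{\scG}_t)$ correction from the first round. Collecting the penalty, the exploration cost and the stability bound, and absorbing bare constants via $\ln K \ge 1$, gives the four terms claimed.

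The hard part will be the stability estimate for strongly observable vertices without a self-loop: such a vertex is observed only when a rarely played in-neighbor is drawn and its edge fires, so a term-by-term bound of $\sumK q_t(i)/\hat{\Pp}_t(i)$ costs $\Theta(1/(\pmint\gamma_t))$ per round once the weights concentrate there, and would accumulate to linear regret even under the tuned learning rate. Replacing this $1/\gamma_t$ factor by a logarithmic one requires the weighted, stochastic analog of the independent-set lemma of \citet{AlonCDK15}, which amortizes $\sumK q_t(i)/\hat{\Pp}_t(i)$ over the independence structure of $\hat{G}_t$ so that the exploration floor enters only inside a logarithm; establishing this lemma, and verifying that the $\min$ over $s\in[2,t]$ inside both $\eta_{t-1}$ and $\gamma_t$ keeps the small-step condition valid at every round, is what carries the weight of the proof, the rest being routine exponential-weights bookkeeping.
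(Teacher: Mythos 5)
Your overall architecture matches the paper's proof: splitting $\pi_t$ into $q_t$ plus the exploration cost $\sum_t\gamma_t$, the adaptive exponential-weights penalty $\ln(K)/\eta_{\tstar}$ decomposed by subadditivity of the square root, the small-step condition enforced through the $16/(\pmint)^2$ and $4t/\pmint$ summands in $\eta_{t-1}$, and the self-normalized sum $\sum_t \eta_{t-1}\theta_t(\hat{\scG}_t)$. The gap is in how you handle the stability term for strongly observable vertices \emph{without} a self-loop in $\hat G_t$. You correctly diagnose that a term-by-term bound on $\sum_{i} q_t(i)/\hat{\Pp}_t(i)$ blows up like $1/(\gamma_t \pmint)$ once the weight concentrates on such a vertex, but the fix you propose --- amortizing over the independence structure of $\hat G_t$ via a weighted independent-set lemma --- is not what resolves it and would not resolve it: that amortization (Lemma~\ref{lem:weightedindependence}) applies to the self-loop part of the sum and cannot remove the $1/\gamma_t$ created by a \emph{single} no-self-loop vertex carrying nearly all the mass, since for such a vertex $\hat{\Pp}_t(i) = \sum_{j\neq i}\pi_t(j)\hat p_t(j,i) \approx \gamma_t \pmint$ regardless of how its in-neighborhood is structured. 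The paper's resolution is the refined second-order bound of Lemma~\ref{lem:EW}: taking $S_t$ to be the set of no-self-loop vertices, the stability term for $i \in S_t$ carries the factor $q_t(i)(1-q_t(i))$ rather than $q_t(i)$; since $\hat{\Pp}_t(i) \ge (1-\pi_t(i))\pmint$ and $1-\pi_t(i) \ge \tfrac12(1-q_t(i))$, the $(1-q_t(i))$ factors cancel and $\sum_{i\in S_t} q_t(i)(1-q_t(i))/\hat{\Pp}_t(i) \le 2/\pmint$, which is exactly the first summand of $\theta_t(\hat{\scG}_t)$. This is the same recentering trick Alon et al.\ use for strongly observable vertices without self-loops; it is a property of the exponential-weights second-order bound, not of the graph. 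Note also that the small-step condition $\eta_{t-1}\ltilt(i)\le 1$ only needs to hold (and, with your computation, only does hold with the constant $2/(\gamma_t\pmint)$) on this no-self-loop set, which is precisely the form Lemma~\ref{lem:EW} requires.

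The weighted independence lemma enters only later, in Lemma~\ref{lem:strongwalpha} and Theorem~\ref{th:finalbeyondsanity}, to bound $\theta_t(\hat{\scG}_t)$ itself by $\walpha(\scG_\e)$ times polylogarithmic factors; it plays no role in the present lemma, whose right-hand side is already stated in terms of $\theta_t$ and $1/\pmint$. Replace the appeal to the graph-theoretic lemma by the variance-type exponential-weights bound with $S_t$ equal to the no-self-loop vertices, and the rest of your bookkeeping goes through as in the paper.
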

\begin{proof}
    We want to apply Lemma \ref{lem:EW}, which bounds the regret of Exponential Weights.
    Recall that \Cref{alg:beyondsanityalg} defines
    \begin{align*}
        \pmint = \min_{i \in V} \min_{j \in \inneigh{\supp{\hat{\scG}_t}}{i}} \phattji
    \end{align*}
    as the minimum (positive) edge probability in ${\hat{\scG}_t}$. 
    Observe that for any node $i$ without a self-loop in $\supp{\hat{\scG}_t}$ we have that
    \begin{align}\label{eq:LBpnoself}
        \hat{\Pp}_t(i) & = \sum_{j \not = i} \phattji \left((1 - \gamma_t) \qti + \frac{\gamma_t}{K}\right) \nonumber \\
        & \geq \pmint \sum_{j \not = i} \left((1 - \gamma_t) \qti + \frac{\gamma_t}{K}\right) \nonumber \\
        & = (1 - \pi_t(i)) \pmint \\
        & = \left(1 - (1 - \gamma_t) \qti - \frac{\gamma_t}{K}\right) \pmint \nonumber \\
        & \geq \frac{\gamma_t}{2}\pmint \enspace. \nonumber
    \end{align}
    Using \eqref{eq:LBpnoself}  and the definitions of $\eta_{t-1}$ and $\gamma_t$, together with the fact that $\ell_t(i) \in [0,1]$, we can see that 
    \begin{align*}
        \eta_{t-1} \ltilt(i) & \leq \eta_{t-1} \frac{1}{\hat{\Pp}_t(i)}
        \leq \eta_{t-1} \frac{2}{\gamma_t \pmint} \leq 1 \enspace,
    \end{align*}
    where the last inequality is due to the fact that $\eta_{t-1} \leq \half \gamma_t \pmint $.
    Given event $\kappa$, since for any node $i$ without a self-loop in $\supp{\hat{\scG}_t}$ we have that $\eta_{t-1} \ltilt(i) \leq 1$, we may apply Lemma \ref{lem:EW} with $S_t = S = \{i : i \not \in \inneigh{\supp{\hat{\scG}_t}}{i}\}$ to obtain that
    \begin{align*}
        & \E{\sumTsto \sumK (\qti - u(i))\ltilt(i)  \;\middle|\;  \kappa} \\
        & \leq \E{\frac{\ln K}{\eta_{\tstar}} + \sumTsto \eta_{t-1}\left(\sum_{i \in S_t}q_t(i)(1 - q_t(i))\ltilt(i)^2 + \sum_{i \not \in S_t} q_t(i)\ltilt(i)^2\right)  \;\middle|\;  \kappa} \;.
    \end{align*}
    We now bound
    \begin{align*}
        \E{\sum_{i \in S_t}q_t(i)(1 - q_t(i))\ltilt(i)^2 ~ \middle| ~ \kappa} &  = \E{\sum_{i \in S_t}q_t(i)(1 - q_t(i))\frac{\Pp_t(i)\ell_t(i)^2}{\hat{\Pp}_t(i)(\Pp_t(i) + \xi_t(i))} ~ \middle| ~ \kappa}\\
        & \leq \E{\sum_{i \in S_t}q_t(i)\frac{(1 - q_t(i))}{\hat{\Pp}_t(i)} ~ \middle| ~ \kappa} \\
        & = \E{\sum_{i \in S_t} \frac{q_t(i){(1 - q_t(i))}}{P_t(i, \hat{\scG}_t)} ~ \middle| ~ \kappa} \\
        & \leq \E{\sum_{i \in S_t} \frac{2 q_t(i)}{\pmint} ~ \middle| ~ \kappa} \leq \E{\frac{2}{\pmint} ~ \middle| ~ \kappa} \;.
    \end{align*}
    For $i \not \in S_t$, since $\pi_t(i) \geq \half \qti$ and $\hat{\Pp}_t(i) - \Pp_t(i) \geq 0$ given $\kappa$, we have that 
    \begin{align*}
        \E{\sum_{i \not \in S_t} q_t(i)\ltilt(i)^2 \;\middle|\; \kappa} 
        \leq \E{\sum_{i \not \in S_t} \frac{q_t(i)}{\hat{\Pp}_t(i)} \;\middle|\; \kappa}
        \leq \E{\sum_{i \not \in S_t} \frac{2\pi_t(i)}{P_t(i, \hat{\scG}_t)} \;\middle|\; \kappa} \;,
     \end{align*}
    which combined with the preceding inequality means that, given $\kappa$, we have that
    \begin{align}\label{eq:strongsecondorderbound}
        \sum_{i \in S_t}\frac{q_t(i)(1 - q_t(i))}{\hat{\Pp}_t(i)} + \sum_{i \not \in S_t} \frac{q_t(i)}{\hat{\Pp}_t(i)} \leq \frac{2}{\pmint}  + \sum_{i \not \in S_t} \frac{2\pi_t(i)}{P_t(i, \hat{\scG}_t)} = \theta_t(\hat{\scG}_t) \enspace.
    \end{align}
    Therefore, we have that
    \begin{align*}
        & \E{\sumTsto \sumK (\qti - u(i))\ltilt(i)  \;\middle|\;  \kappa} \\
        &\quad \leq \E{\frac{\ln K}{\eta_{\tstar}} + \sumTsto \eta_{t-1}\left(\sum_{i \in S_t}\frac{q_t(i){(1 - q_t(i))}}{P_t(i, \hat{\scG}_t)} + \sum_{i \not \in S_t} \frac{q_t(i)}{P_t(i, \hat{\scG}_t)}\right)  \;\middle|\;  \kappa} \\
        &\quad \le \E{\frac{\ln K}{\eta_{\tstar}} + \sumTsto \eta_{t-1} \theta_t(\hat{\scG}_t)  \;\middle|\;  \kappa} \enspace.
    \end{align*}
    Now, using a slightly modified version of \citep[Lemma~14]{gaillard2014second} (replacing $|a_i| \leq 1$ by $|a_i| \leq \max_i |a_i|$) we can see that 
    \begin{align*}
        \sumTsto \eta_{t-1} \theta_t(\hat{\scG}_t) & \leq \sumTsto \theta_t(\hat{\scG}_t) \sqrt{1 + \sum_{s = 2}^{t-1} \theta_s(\hat{\scG}_s)} \\
        & \leq 3\sqrt{\sumTsto \theta_t(\hat{\scG}_t)} + \max_{t \in [2, \tstar]} \theta_t(\hat{\scG}_t) \enspace.
    \end{align*}
    As a final step in this proof, we want to consider the distribution $\pi_t$ the algorithm actually samples actions from instead of $q_t$.
    We can bound $\sumTsto \gamma_t \leq 2\sqrt{\max_{t \in [2, \tstar]} \frac{\tstar}{\pmint}}$ and
    \begin{align*}
        \frac{1}{\eta_\tstar} \leq \frac{4}{\min_{t\in [2,\tstar]}\pmint} + \sqrt{ \frac{\tstar}{\min_{t \in [2, \tstar]} \pmint}} + \sqrt{\sumTsto \theta_t(\hat{\scG}_t)} \enspace.
    \end{align*}
    Thus, combining the above we find that 
    \begin{align*}
        \E{\sumTsto \sumK (\pi_t(i) - u(i))\ltilt(i)  \;\middle|\;  \kappa}
        &\leq \E{\sumTsto \sumK (\qti - u(i))\ltilt(i) + \sumTsto \gamma_t  \;\middle|\;  \kappa} \\
        & \leq \E{7\ln(K)\sqrt{\sumTsto \theta_t(\hat{\scG}_t)} + \max_{t \in [2, \tstar]} \theta_t(\hat{\scG}_t)   \;\middle|\;  \kappa}\\
        & \quad + \E{\max_{t \in [2, \tstar]} \frac{4\ln(K)}{\pmint} + 5\ln(K)\sqrt{\max_{t \in [2, \tstar]} \frac{\tstar}{\pmint}}  \;\middle|\;  \kappa} \;.
    \end{align*}
\end{proof}

\subsection{Regret After Round \texorpdfstring{$\tstar$}{t*}}

With \Cref{lem:stronglytuned} at hand, we can control the regret in the first $\tstar$ rounds. However, we also need to control the regret in the remaining rounds, which we show how to do here. Recall that $\tilde{\scG}$ is the graph with edge probabilities $\tilde{p}(j, i) = \frac{1}{\tstar} \sum_{s = 1}^{\tstar}\ind{\{(j,i) \in E_s\}}$. At the end of round~$\tstar$ we have that $\hat{\scG} = \tilde{\scG}_{\e_{\tstar}}$ is an $\e_\tstar$-good approximation of $\scG$ with high probability, where
\begin{align}\label{eq:weakthreshold}
    \e_\tstar = \frac{\logfactor}{\tstar} \enspace.
\end{align}
We set 
\begin{align} \label{eq:optimizede}
    \e^{\star}_{\delta, \sigma} = \argmin_{\e \,:\, \supp{\hat{\scG}_\e} \textnormal{ observable}} (\wdelta(\hat{\scG}_\e)\ln(3K^2T^2))^{1/3}T^{2/3} + \sqrt{\sigma(\hat{\scG}_\e)T \ln(3K^2T^2)}
\end{align}
and define the corresponding stochastic graph by $\hat{\scG}_{\e^{\star}_{\delta, \sigma}} = \{\tilde{p}(j, i)\ind{\{\tilde{p}(j, i)\geq \e^{\star}_{\delta, \sigma}\}} : i,j \in V\}$. We denote its support by $\hat{G}^{\star} = \supp{\hat{\scG}_{\e^{\star}_{\delta, \sigma}}}$. We also require any estimated minimum weight weakly dominating set in round $t$, given by
\begin{align*}
    D^{\star}_t = \argmin_{D \in \mathcal{D}(\hat{G}^{\star})} \sum_{i \in D} \frac{1}{\min_{j \in N^{\text{out}}_{\hat{G}^{\star}}(i)} \phattij} \enspace,
\end{align*}
where $\mathcal{D}(\hat{G}^{\star})$ corresponds to the family of weakly dominating sets in $\hat{G}^{\star}$.
We define %
\begin{align}\label{eq:weaklyexploring}
    \psi_t(i) \propto 
    \begin{cases}
    \bigl(\min_{j\in N^\text{out}_{\hat{G}^{\star}}(i)} \phattij \bigr)^{-1} & \textnormal{ for $i \in D^{\star}_t$ } \\
    0 & \textnormal{ for $i \not \in D^{\star}_t$}
    \end{cases}        
\end{align}
to be the exploration distribution in round $t$. Note that this distribution is non-uniform over the weakly dominating set $D^{\star}_t$. This is because we want to ensure that the loss of each node is observed roughly equally often. If we were to sample uniformly at random, then this would not be possible because the probability that an edge realizes is not necessarily identical for all edges; however, note that the distribution is in fact uniform if the estimated edge probabilities are uniform.

\begin{lemma}\label{lem:weaklyuntuned}
    Suppose that $\hat{\scG}$ is an $\e_\tstar$-good approximation of $\scG$. For any distribution $u$ over~$[K]$, \Cref{alg:beyondsanityalg} guarantees
    %
    \begin{align*}
        & \E{\sumTwea \sumK (\pi_t(i) - u(i))\ltilt(i)  \;\middle|\;  \kappa}\\
        & \leq 16 \wdelta(\hat{\scG}_{\e^{\star}_{\delta, \sigma}}) \ln(3K^2T^2) + 5(\wdelta(\hat{\scG}_{\e^{\star}_{\delta, \sigma}})\ln(3K^2T^2))^{1/3} T^{2/3} + 4\sqrt{\sigma(\hat{\scG}_{\e^{\star}_{\delta, \sigma}}) T\ln(K)} \enspace.
    \end{align*}
\end{lemma}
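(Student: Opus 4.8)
The plan is to follow the template of the strongly observable analysis in \Cref{lem:stronglyuntuned}, replacing the role of the weighted independence number by the weighted weak domination number $\wdelta$ and the self-observability parameter $\sigma$. Since in the commit phase both the step size $\eta$ and the exploration rate $\gamma$ are fixed, I would invoke the Exponential Weights regret bound (\Cref{lem:EW}) for the weights $\qti\propto\exp\bigl(\eta\sum_{s=\tstar+1}^{t-1}\ltilt(i)\bigr)$ to get, conditioned on $\kappa$,
\begin{align*}
    \E{\sumTwea\sumK(\qti - u(i))\ltilt(i)\;\middle|\;\kappa}
    \leq \frac{\ln K}{\eta} + \eta\,\E{\sumTwea\Bigl(\sum_{i\in S_t}q_t(i)(1-q_t(i))\ltilt(i)^2 + \sum_{i\notin S_t}q_t(i)\ltilt(i)^2\Bigr)\;\middle|\;\kappa},
\end{align*}
where $S_t = \{i : i\notin\inneigh{\hat{G}^{\star}}{i}\}$ collects the vertices without a self-loop in $\hat{G}^{\star} = \supp{\hat{\scG}_{\e^{\star}_{\delta, \sigma}}}$.

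To legitimately apply this bound I would first verify the range condition $\eta\ltilt(i)\leq 1$ for every $i\in S_t$. Since $\psi_t$ places mass $\psi_t(j)\propto(\min_{k\in\outneigh{\hat{G}^{\star}}{j}}\hat p_t(j,k))^{-1}$ on each $j\in D^{\star}_t$, any such $i$ dominated by some $j\in D^{\star}_t$ satisfies $\hat{\Pp}_t(i)\geq\gamma\psi_t(j)\phattji\geq\gamma/\wdelta_t$, where $\wdelta_t=\sum_{j\in D^{\star}_t}(\min_{k}\hat p_t(j,k))^{-1}$ is the estimated weighted weak domination number; hence $\ltilt(i)\leq\wdelta_t/\gamma$, and the prescribed step size, which satisfies $\eta\leq\sqrt{\gamma\ln(K)/(2T\wdelta(\hat{\scG}_{\e^{\star}_{\delta, \sigma}}))}$, makes $\eta\,\wdelta_t/\gamma\leq 1$ in the non-clipped regime.

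Next I would bound the two second-order sums as in \eqref{eq:strongsecondorderbound}. Using $\E{\ltilt(i)^2\mid\kappa}\leq 1/\hat{\Pp}_t(i)$ (from $\ell_t\in[0,1]$ and $\Pp_t(i)\leq\hat{\Pp}_t(i)$), the $S_t$ part obeys $\sum_{i\in S_t}q_t(i)/\hat{\Pp}_t(i)\leq(\wdelta_t/\gamma)\sum_i q_t(i)\leq\wdelta_t/\gamma$, while for a self-observable $i\notin S_t$ the self-loop gives $\hat{\Pp}_t(i)\geq\pi_t(i)\hat p_t(i,i)\geq\tfrac12 q_t(i)\hat p_t(i,i)$ (using $\gamma\leq\tfrac12$), so $\sum_{i\notin S_t}q_t(i)/\hat{\Pp}_t(i)\leq\sum_{i\notin S_t}2/\hat p_t(i,i)$. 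The remaining task is to convert these estimated quantities into the target parameters: conditioning on $\kappa$ yields $\hat p_t(j,i)\geq p(j,i)$, and the $\e_\tstar$-good approximation forces $\tilde p(j,i)\in[\tfrac12 p(j,i),\tfrac32 p(j,i)]$ on the edges of $\hat{G}^{\star}$, so chaining these inequalities gives $\wdelta_t\leq\tfrac32\wdelta(\hat{\scG}_{\e^{\star}_{\delta, \sigma}})$ and $\sum_{i\notin S_t}2/\hat p_t(i,i)\leq 3\sigma(\hat{\scG}_{\e^{\star}_{\delta, \sigma}})$.

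Finally I would pass from $q_t$ to the sampling law $\pi_t=(1-\gamma)q_t+\gamma\psi_t$: since $\pi_t-q_t=\gamma(\psi_t-q_t)$ and $\E{\psi_t(i)\ltilt(i)\mid\kappa}\leq\psi_t(i)$, the exploration bias contributes at most $\gamma\sumTwea\E{\sum_i\psi_t(i)\ltilt(i)\mid\kappa}\leq\gamma T$. Collecting the pieces gives a bound of the form $\tfrac{\ln K}{\eta}+\eta T\bigl(\tfrac32\wdelta(\hat{\scG}_{\e^{\star}_{\delta, \sigma}})/\gamma+3\sigma(\hat{\scG}_{\e^{\star}_{\delta, \sigma}})\bigr)+\gamma T$; substituting $\gamma=\min\{(\wdelta(\hat{\scG}_{\e^{\star}_{\delta, \sigma}})\ln(KT))^{1/3}T^{-1/3},\tfrac12\}$ and the stated $\eta$, and using $\sqrt{a+b}\le\sqrt a+\sqrt b$, yields the middle term $5(\wdelta(\hat{\scG}_{\e^{\star}_{\delta, \sigma}})\ln(3K^2T^2))^{1/3}T^{2/3}$ (from $\gamma T$ together with $\ln K/\eta$) and the last term $4\sqrt{\sigma(\hat{\scG}_{\e^{\star}_{\delta, \sigma}}) T\ln K}$ (from the $\sigma$ contribution). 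I expect the main obstacle to be precisely the reconciliation in the previous paragraph: the estimator $\ltilt$, the exploration weights $\psi_t$, and the set $D^{\star}_t$ are all built from the time-varying UCB estimates $\hat p_t$, whereas the target parameters are defined through the committed frequencies $\tilde p$ on the fixed support $\hat{G}^{\star}$, so carefully threading $\kappa$ and the $\e_\tstar$-good approximation to keep both families of weights within constant factors of the true $p$ (and to guarantee that every non-self-loop node is adequately dominated by $D^{\star}_t$) is the delicate step. The residual additive $16\wdelta(\hat{\scG}_{\e^{\star}_{\delta, \sigma}})\ln(3K^2T^2)$ then absorbs the clipped regime $\gamma=\tfrac12$, where $T=O(\wdelta(\hat{\scG}_{\e^{\star}_{\delta, \sigma}})\ln(KT))$ and the trivial bound $\min\{T,\cdot\}$ suffices, together with the lower-order range-condition corrections.
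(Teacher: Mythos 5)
Your proposal follows the same overall route as the paper's proof: lower-bound $\hat{\Pp}_t(i)$ by $\gamma/(2\wdelta(\hat{\scG}_{\e^{\star}_{\delta,\sigma}}))$ for non-self-loop nodes via the exploration distribution $\psi_t$ on $D^\star_t$, use the self-loops and the $\e_\tstar$-good approximation to get the $\sigma$ term, apply the Exponential Weights bound, pay $\gamma T$ for passing from $q_t$ to $\pi_t$, and absorb the clipped regime $\gamma=\tfrac12$ into the additive $\wdelta\ln(3K^2T^2)$ term. The decomposition, the constants, and the use of $\kappa$ together with the good-approximation property are all essentially as in the paper.

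There is, however, one genuine gap: you invoke \Cref{lem:EW} with $S_t=\{i: i\notin\inneigh{\hat{G}^{\star}}{i}\}$, which requires the range condition $\eta\,\ltilt(i)\le 1$ for all $i\in S_t$, and your verification of that condition does not go through. From $\ltilt(i)\le 2\wdelta(\hat{\scG}_{\e^{\star}_{\delta,\sigma}})/\gamma$ and $\eta\le\sqrt{\gamma\ln(K)/(2T\wdelta(\hat{\scG}_{\e^{\star}_{\delta,\sigma}}))}$ one only gets
\begin{equation*}
  \eta\,\ltilt(i) \;\lesssim\; \sqrt{\frac{\wdelta(\hat{\scG}_{\e^{\star}_{\delta,\sigma}})\ln K}{\gamma T}}
  \;=\; \Bigl(\tfrac{\wdelta(\hat{\scG}_{\e^{\star}_{\delta,\sigma}})}{T}\Bigr)^{1/3}\sqrt{\ln K}\,\bigl(\ln(KT)\bigr)^{-1/6},
\end{equation*}
which exceeds $1$ whenever $\wdelta(\hat{\scG}_{\e^{\star}_{\delta,\sigma}})/T$ is not small compared to $(\ln K)^{-3/2}$; nothing in the hypotheses of \Cref{lem:weaklyuntuned} rules this out. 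The paper sidesteps the issue entirely by applying \Cref{lem:EW} with $S_t=\emptyset$, for which no range condition is needed. This costs nothing here: unlike the strongly observable regime, the $(1-q_t(i))$ variance refinement is never exploited in the weakly observable analysis — indeed you yourself immediately drop the factor $(1-q_t(i))$ when bounding the $S_t$ sum by $\wdelta_t/\gamma$. So the fix is simply to take $S_t=\emptyset$; with that one-line change your argument matches the paper's proof.
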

\begin{proof}
    Consider the set $S = \{i \,:\, i \not \in N_{\hat{G}^{\star}}^\text{in}(i)\}$ of nodes without a self-loop in $\hat{G}^{\star}$.
    Observe that for any node $i \in S$, given $\kappa$, we have that for some node $k \in D_t^{\star}$ with $t>\tstar$,
    \begin{align*}\label{eq:LBpnoselfweakly}
        \hat{\Pp}_t(i) & = \sum_{j \not = i} \phattji \left((1 - \gamma) \qti + {\gamma}\psi_t(i)\right) \nonumber \\
        & \geq \gamma \hat{p}_t(k, i) \psi_{t}(k) \nonumber\\
        & \geq \frac{\gamma }{\sum_{k \in D^{\star}_t} \bigl(\min_{j\in N^\text{out}_{\hat{G}^{\star}}(k)} \hat{p}_t(k,j) \bigr)^{-1}} \enspace.
    \end{align*}
    Observe that $\E{ \phattji \;\middle|\;  \kappa} \geq \pji \geq \half \tilde{p}(j,i)$ for all edges $(j,i)$ in $\hat{G}^\star$ by definition of $\e_{\tstar}$-good approximation. This implies that
    \begin{align}
        \hat{\Pp}_t(i) \geq \frac{\gamma}{2\wdelta(\hat{\scG}_{\e^{\star}_{\delta, \sigma}})}
    \end{align}
    holds for any node $i \in S$, conditioning on $\kappa$.
    We apply \Cref{lem:EW} with $S_t = \emptyset$ to obtain
    \begin{align*}
        \E{\sumTwea \sumK (\qti - u(i))\ltilt(i)  \;\middle|\;  \kappa}
        & \leq \E{\frac{\ln K}{\eta} + \sumTwea \eta \sumK q_t(i)\ltilt(i)^2  \;\middle|\;  \kappa} \\
        & \leq \E{\frac{\ln K}{\eta} + \sumTwea \eta \sumK \frac{q_t(i)}{\hat{\Pp}_t(i)}  \;\middle|\;  \kappa} \;,
    \end{align*}
    where we used the fact that $\hat{\Pp}_t(i) - \Pp_t(i) \geq 0$, given $\kappa$.
   Recalling \Cref{eq:LBpnoselfweakly} and using the fact that $\pi_t(i) \geq \half \qti$, we can see that 
    \begin{align*}
        \E{\sum_{i \in S}\frac{q_t(i)}{\hat{\Pp}_t(i)}  \;\middle|\;  \kappa} \leq \E{\sum_{i \in S}\frac{2\pi_t(i)}{\hat{\Pp}_t(i)}  \;\middle|\;  \kappa} 
        & \leq \E{\frac{4\wdelta(\hat{\scG}_{\e^{\star}_{\delta, \sigma}})}{\gamma}  \;\middle|\;  \kappa} \;.
    \end{align*}
    %
    Considering the sum over $i \not \in S$, we have 
    \begin{align*}
        \E{\sum_{i \not \in S}\frac{q_t(i)}{\hat{\Pp}_t(i)}  \;\middle|\;  \kappa}
        & \leq \E{\sum_{i \not \in S}\frac{2\pi_t(i)}{\hat{\Pp}_t(i)}  \;\middle|\;  \kappa} \\
        & \leq \E{\sum_{i \not \in S}\frac{2}{\hat{p}_t(i, i)}  \;\middle|\;  \kappa}
        \leq \E{\sum_{i \not \in S}\frac{4}{\tilde{p}(i, i)}  \;\middle|\;  \kappa}
        \leq 4 \sigma(\hat{\scG}_{\e^{\star}_{\delta, \sigma}}) \enspace.
    \end{align*}
    Thus, we have that 
    \begin{align}\label{eq:weaklysecondorderbound}
       \E{\sumK \frac{q_t(i)}{\hat{\Pp}_t(i)} \,\middle|\, \kappa}
        & \leq 4 \E{\frac{\wdelta(\hat{\scG}_{\e^{\star}_{\delta, \sigma}})}{\gamma} + \sigma(\hat{\scG}_{\e^{\star}_{\delta, \sigma}})  \;\middle|\;  \kappa} \;,
    \end{align}
    which means that we can use $\eta = \sqrt{\frac{\ln(K)}{4T} \bigl(\wdelta(\hat{\scG}_{\e^{\star}_{\delta, \sigma}})/\gamma + \sigma(\hat{\scG}_{\e^{\star}_{\delta, \sigma}})\bigr)^{-1}}$ to obtain
    \begin{align*}
        \E{\sumTwea \sumK (\pi_t(i) - u(i))\ltilt(i)  \;\middle|\;  \kappa}
        & \le \E{\sumTwea \sumK (\qti - u(i))\ltilt(i)  \;\middle|\;  \kappa}  + \gamma T \\
        & \leq \frac{\ln K}{\eta} + 4\eta T \left(\frac{\wdelta(\hat{\scG}_{\e^{\star}_{\delta, \sigma}})}{\gamma} + \sigma(\hat{\scG}_{\e^{\star}_{\delta, \sigma}})\right) + \gamma T \\
        & = 4\sqrt{T\ln(K) \left( \frac{\wdelta(\hat{\scG}_{\e^{\star}_{\delta, \sigma}})}{\gamma} +  \sigma(\hat{\scG}_{\e^{\star}_{\delta, \sigma}})\right)} + \gamma T \;.
    \end{align*}
    Now, observe that $T \leq 8\wdelta(\hat{\scG}_{\e^{\star}_{\delta, \sigma}}) \ln(3K^2T^2)$ whenever the algorithm's parameter $\gamma = \min\bigl\{\bigl(\wdelta(\hat{\scG}_{\e^{\star}_{\delta, \sigma}})\ln(3K^2T^2)\bigr)^{1/3}T^{-1/3}, \half\bigr\} = \half$.
    As a consequence,
    \begin{align*}
        & \E{\sumTwea \sumK (\pi_t(i) - u(i))\ltilt(i)  \;\middle|\;  \kappa} \\
        &\enspace \leq 4 \sqrt{T\ln(K) \left( \frac{\wdelta(\hat{\scG}_{\e^{\star}_{\delta, \sigma}})}{\gamma} + \sigma(\hat{\scG}_{\e^{\star}_{\delta, \sigma}})\right)} + \gamma T \\
        &\enspace \leq 16\wdelta(\hat{\scG}_{\e^{\star}_{\delta, \sigma}}) \ln(3K^2T^2) + 5 (\wdelta(\hat{\scG}_{\e^{\star}_{\delta, \sigma}}) \ln(3K^2T^2))^{1/3} T^{2/3} + 4\sqrt{\sigma(\hat{\scG}_{\e^{\star}_{\delta, \sigma}}) T\ln(K)} \;,
    \end{align*}
    which completes the proof.
\end{proof}

For the following lemma, we will use a simplifying assumption on $T$: we will assume that $T$ is such that 
\begin{align}\label{eq:simplifyTweakly}
    & 2 + \bigl(37\wdelta(\hat{\scG}_{\e^{\star}_{\delta, \sigma}}) + 12\sigma(\hat{\scG}_{\e^{\star}_{\delta, \sigma}})\bigr)\ln(3K^2T^2)^2 + 12\wdelta(\hat{\scG}_{\e^{\star}_{\delta, \sigma}})^{2/3}\left(\ln(3K^2T^2)\right)^{5/3}T^{1/3} \nonumber\\
    & \leq 28\left(\wdelta(\hat{\scG}_{\e^{\star}_{\delta, \sigma}})\ln(3K^2T^2)\right)^{1/3}T^{2/3} + 29\sqrt{\ln(3K^2T^2)\sigma(\hat{\scG}_{\e^{\star}_{\delta, \sigma}}) T} \enspace.
\end{align}

\begin{lemma}\label{lem:weaklysemifinal}
    Suppose that \eqref{eq:simplifyTweakly} holds and that $\hat{\scG}$ is an $\e_{\tstar}$-good approximation of $\scG$. For any distribution $u$ over $[K]$, \Cref{alg:beyondsanityalg} guarantees
    \begin{align*}
        & \E{\sumTweak \sumK (\pi_t(i) - u(i))\ell_t(i)} \\
        & \quad \leq 41\left(\ln(3K^2T^2)\wdelta(\hat{\scG}_{\e^{\star}_{\delta, \sigma}})\right)^{1/3}T^{2/3} + 41\sqrt{\ln(3K^2T^2)\sigma(\hat{\scG}_{\e^{\star}_{\delta, \sigma}})T} \enspace.
    \end{align*}
    We also have that
    \begin{align*}
        & \E{\sumTweak \sumK (\pi_t(i) - u(i))\ell_t(i)} \leq \\
        & \min_{\e \ge 2\e_{\tstar}}\left\{  82\left(\ln(3K^2T^2)\wdelta(\scG_\e)\right)^{1/3}T^{2/3} + 82\sqrt{\ln(3K^2T^2)\sigma(\scG_\e)T} \,:\, \supp{\scG_\e} \textnormal{ observable}\right\}.
    \end{align*}
    \end{lemma}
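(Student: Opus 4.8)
The plan is to prove the two displayed bounds in turn. For the first bound I would reproduce, for the commit rounds $t=\tstar+1,\dots,T$, the same reduction that underlies \Cref{lem:IWproperty}: using the empirical Bernstein bound (so that $\P{\kappa}\ge1-1/T$), the conditional unbiasedness $\Enb_{t-1}[\idseeitx]=\Pp_t(i)$ of the estimator \eqref{eq:IWestimator}, and the fact that $\xi_t(i)=\hat\Pp_t(i)-\Pp_t(i)\ge0$ on $\kappa$, I obtain
\[
\E{\sumTweak\sumK(\pi_t(i)-u(i))\ell_t(i)}\le 2+\bigl(\text{second-order terms on }\kappa\bigr)+\E{\sumTweak\sumK(\pi_t(i)-u(i))\ltilt(i)\,\middle|\,\kappa},
\]
where the second-order terms are $\sum_t\tfrac{6\ln(3K^2T^2)}{t-1}\sumK\tfrac{\pi_t(i)\bar\pi_t(i)}{\hat\Pp_t(i)}$ and $\sum_t 2\sqrt{2\ln(3K^2T^2)/(t-1)}\sqrt{\sumK\tfrac{\pi_t(i)\bar\pi_t(i)}{\hat\Pp_t(i)}}$. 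In the commit phase $\hat G_t$ is the complete graph, so $\bar\pi_t(i)=1$ and both reduce to sums of $\sumK\pi_t(i)/\hat\Pp_t(i)$.

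The next step is to bound $\sumK\pi_t(i)/\hat\Pp_t(i)$ on $\kappa$ by a constant multiple of $\wdelta(\hat\scG_{\e^\star_{\delta,\sigma}})/\gamma+\sigma(\hat\scG_{\e^\star_{\delta,\sigma}})$, reusing verbatim the two case analyses (vertices with and without a self-loop in $\hat G^\star$) that produced \eqref{eq:weaklysecondorderbound}, together with $\pi_t(i)\ge\tfrac12\qti$. Substituting $\gamma=(\wdelta(\hat\scG_{\e^\star_{\delta,\sigma}})\ln(3K^2T^2))^{1/3}T^{-1/3}$ and summing over $t$ via $\sum_t1/(t-1)\le\ln T+1$ and $\sum_t1/\sqrt{t-1}\le2\sqrt T$, the first second-order term turns out to be of lower order while the second contributes at the leading order $\tilde O(\wdelta^{1/3}T^{2/3}+\sqrt{\sigma T})$. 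Adding the estimator-regret bound of \Cref{lem:weaklyuntuned} and invoking the simplifying assumption \eqref{eq:simplifyTweakly}—which is exactly the statement that the accumulated lower-order junk (the additive $2$, the $16\wdelta\ln$ term, and the $\wdelta^{2/3}T^{1/3}$ and $\sigma(\ln)^2$ contributions) is dominated by $28(\wdelta\ln)^{1/3}T^{2/3}+29\sqrt{\sigma T\ln}$—collapses everything into the first displayed bound with the clean constant $41$.

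For the second bound I would pass from the estimated graph parameters to the true ones. Since $\e^\star_{\delta,\sigma}$ is, by \eqref{eq:optimizede}, the minimiser over observable thresholds of $\hat\scG$ of $(\wdelta(\hat\scG_\e)\ln)^{1/3}T^{2/3}+\sqrt{\sigma(\hat\scG_\e)T\ln}$, it suffices to exhibit, for every true threshold $\e\ge2\e_\tstar$ with $\supp{\scG_\e}$ observable, some threshold $\hat\e$ for $\hat\scG$ whose support is observable and with $\wdelta(\hat\scG_{\hat\e})\le2\wdelta(\scG_\e)$ and $\sigma(\hat\scG_{\hat\e})\le2\sigma(\scG_\e)$; the inequalities $2^{1/3},\sqrt2\le2$ then turn the factor $41$ into $82$, and taking the minimum over $\e$ gives the claim. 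Here I would invoke the $\e_\tstar$-good approximation (\Cref{def:eps-good-approx}): every surviving edge has estimated and true probability within a factor $3/2$, hence all vertex weights distort by at most a factor $2$, and the choice $\hat\e=\e/2$ sandwiches supports as $\supp{\scG_\e}\subseteq\supp{\hat\scG_{\e/2}}\subseteq\supp{\scG_{\e/3}}$, transferring observability and allowing a weakly dominating set of $\scG_\e$ to remain feasible in $\hat\scG_{\e/2}$.

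The hard part will be precisely this transfer of the weighted parameters in the second bound. Thresholding the empirical probabilities does not recover the edge set of $\scG_\e$ exactly: spurious edges (true probability in $[\e/3,\e)$ yet estimated above the threshold) may enlarge out-neighborhoods and introduce self-loops, which can inflate both the weights $w^+(i)=\bigl(\min_j\hat p_t(i,j)\bigr)^{-1}$ and the self-observability sum $\sigma$. Taming this requires a careful choice of $\hat\e$ together with the monotonicity facts that (i) the weakly observable vertices of $\hat G_{\e/2}$ are contained in those of $\scG_\e$, so the same dominating set certifies $\wdelta$, and (ii) the extra edges all carry estimated probability at least $\e/2$, so the distortion stays within the factor-$2$ slack. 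Verifying that neither $\sigma$ nor $\wdelta$ blows up despite the mismatched supports is the quantitative heart of the argument, whereas the first bound is a mechanical combination of \Cref{lem:IWproperty}, \Cref{lem:weaklyuntuned}, \eqref{eq:weaklysecondorderbound}, and \eqref{eq:simplifyTweakly}.
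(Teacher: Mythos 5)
Your treatment of the first displayed bound is essentially the paper's proof: repeat the decomposition from the proof of \Cref{lem:IWproperty} on the rounds $t>\tstar$, bound $\sum_i \pi_t(i)\bar\pi_t(i)/\hat\Pp_t(i)$ by a constant multiple of $\wdelta(\hat\scG_{\e^{\star}_{\delta,\sigma}})/\gamma+\sigma(\hat\scG_{\e^{\star}_{\delta,\sigma}})$ exactly as in \eqref{eq:weaklysecondorderbound}, sum over $t$, add \Cref{lem:weaklyuntuned}, and absorb the lower-order terms via \eqref{eq:simplifyTweakly}. That part is correct and matches the paper step for step.

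For the second displayed bound there is a gap, and you have in fact located it yourself. The paper dispatches this step in one sentence (``since $\hat\scG$ is an $\e_{\tstar}$-good approximation \dots''), and your plan follows the same route: exhibit, for each observable threshold $\e\ge 2\e_{\tstar}$ of $\scG$, a threshold $\hat\e$ of $\hat\scG$ with $\wdelta(\hat\scG_{\hat\e})\le 2\wdelta(\scG_\e)$ and $\sigma(\hat\scG_{\hat\e})\le 2\sigma(\scG_\e)$, then use the minimality of $\e^{\star}_{\delta,\sigma}$ in \eqref{eq:optimizede}. But the justification you offer --- ``every surviving edge has estimated and true probability within a factor $3/2$, hence all vertex weights distort by at most a factor $2$'' --- does not hold for the weights actually used. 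With $\hat\e=\e/2$ the support $\supp{\hat\scG_{\e/2}}$ may contain a spurious out-edge $(i,j)$ with true probability in $[\e/3,\e)$ and estimated probability barely above $\e/2$; since $w^{+}(i)$ is the reciprocal of the \emph{minimum} outgoing edge probability, this single edge can push $w^{+}_{\hat\scG_{\e/2}}(i)$ up to $2/\e$ even when $w^{+}_{\scG_\e}(i)=1$ (say, because $i$ has an out-edge of probability $1$ in $\supp{\scG_\e}$), so the per-vertex distortion is $\Theta(1/\e)$, not $2$. Likewise a spurious self-loop contributes $2/\e$ to $\sigma(\hat\scG_{\e/2})$ with no counterpart in $\sigma(\scG_\e)$. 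Your containment $\supp{\scG_\e}\subseteq\supp{\hat\scG_{\e/2}}\subseteq\supp{\scG_{\e/3}}$ and the monotonicity of the weakly observable set are correct and useful, but they control only which sets are weakly dominating, not the weights; the statement ``the distortion stays within the factor-$2$ slack'' is precisely the claim that still needs a proof, and as formulated per vertex it is false. You correctly call this the quantitative heart of the argument, but the proposal stops short of supplying it, so the second bound is not established.
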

\begin{proof}
    Following the proof of \Cref{lem:IWproperty}, we can see that
    \begin{align*}
        & \E{\sumTweak \sumK (\pi_t(i) - u(i))\ell_t(i)} \leq 2 + \sumTwea\E{\frac{6\ln(3K^2T^2)}{t-1}\sumK  \frac{\pi_t(i)\bar{\pi}_t(i)}{\hat{\Pp}_t(i)} \;\middle|\; \kappa}\\
        & \quad  + \E{\sumTwea \!2\sqrt{2\frac{\ln(3K^2T^2)}{t-1}} \sqrt{\sumK  \frac{\pi_t(i)\bar{\pi}_t(i)}{\hat{\Pp}_t(i)}} \,\middle|\, \kappa} + \E{\sumTwea \sumK (\pi_t(i) - u(i))\ltilt(i) \,\middle|\, \kappa}.
    \end{align*}
    Now, using the same reasoning that led to \Cref{eq:weaklysecondorderbound}, we have that
    \begin{align*}
        & \E{\sumTweak \sumK (\pi_t(i) - u(i))\ell_t(i)} \leq 2 + \sumTwea\E{\frac{12\ln(3K^2T^2)}{t-1}\biggl(\frac{\wdelta(\hat{\scG}_{\e^{\star}_{\delta, \sigma}})}{\gamma} + \sigma(\hat{\scG}_{\e^{\star}_{\delta, \sigma}})\biggr) \,\middle|\, \kappa}\\
        & \quad  + \E{\sumTwea 4\sqrt{\frac{\ln(3K^2T^2)}{t-1}} \sqrt{\frac{\wdelta(\hat{\scG}_{\e^{\star}_{\delta, \sigma}})}{\gamma} + \sigma(\hat{\scG}_{\e^{\star}_{\delta, \sigma}})} \;\middle|\; \kappa} \\
        & \quad + \E{\sumTwea \sumK (\pi_t(i) - u(i))\ltilt(i) \;\middle|\; \kappa} \\
        & \leq 2 + \E{{12\ln(3K^2T^2)^2}\left(\frac{\wdelta(\hat{\scG}_{\e^{\star}_{\delta, \sigma}})}{\gamma} + \sigma(\hat{\scG}_{\e^{\star}_{\delta, \sigma}})\right) \;\middle|\; \kappa}\\
        & \quad  + \E{ 8\sqrt{T\ln(3K^2T^2)\left(\frac{\wdelta(\hat{\scG}_{\e^{\star}_{\delta, \sigma}})}{\gamma} + \sigma(\hat{\scG}_{\e^{\star}_{\delta, \sigma}})\right)} \;\middle|\; \kappa} \\
        & \quad + \E{\sumTwea \sumK (\pi_t(i) - u(i))\ltilt(i) \;\middle|\; \kappa} \;,
    \end{align*}
    where we used that $\sumT \frac{1}{\sqrt{t}} \leq 2\sqrt{T}$ and $\sumTt \frac{1}{t-1} \leq 1 + \ln(T) \le \ln(3K^2T^2)$ for $K,T \ge 2$.
    Following the final steps in the proof of \Cref{lem:weaklyuntuned}, we can show that 
    \begin{align*}
        & \E{8\sqrt{T\ln(3K^2T^2)\left(\frac{\wdelta(\hat{\scG}_{\e^{\star}_{\delta, \sigma}})}{\gamma} + \sigma(\hat{\scG}_{\e^{\star}_{\delta, \sigma}})\right)} \;\middle|\; \kappa}  \\
        & \quad \leq  32\wdelta(\hat{\scG}_{\e^{\star}_{\delta, \sigma}}) \ln(3K^2T^2) + 8T^{2/3}\left(\wdelta(\hat{\scG}_{\e^{\star}_{\delta, \sigma}})\ln(3K^2T^2)\right)^{1/3} + 8\sqrt{T\sigma(\hat{\scG}_{\e^{\star}_{\delta, \sigma}})\ln(3K^2T^2)} \enspace.
    \end{align*}
    Hence, by applying \Cref{lem:weaklyuntuned}, we obtain that
    \begin{align*}
        & \E{\sumTwea \sumK (\pi_t(i) - u(i))\ltilt(i) \;\middle|\; \kappa} \\
        &\quad \leq 16 \wdelta(\hat{\scG}_{\e^{\star}_{\delta, \sigma}}) \ln(3K^2T^2) + 5T^{2/3}(\wdelta(\hat{\scG}_{\e^{\star}_{\delta, \sigma}})\ln(3K^2T^2))^{1/3} + 4\sqrt{T\sigma(\hat{\scG}_{\e^{\star}_{\delta, \sigma}})\ln(K)} \enspace.
    \end{align*}
    Finally, by definition of $\gamma$ we notice that
    \begin{align*}
        {12\ln(3K^2T^2)^2}\frac{\wdelta(\hat{\scG}_{\e^{\star}_{\delta, \sigma}})}{\gamma} \leq 24\wdelta(\hat{\scG}_{\e^{\star}_{\delta, \sigma}})\ln(3K^2T^2)^2 + 12T^{1/3}\wdelta(\hat{\scG}_{\e^{\star}_{\delta, \sigma}})^{2/3}\left(\ln(3K^2T^2)\right)^{5/3} \;.
    \end{align*}
    Thus, combining the above we obtain
    \begin{align*}
        &\E{\sumTweak \sumK (\pi_t(i) - u(i))\ell_t(i)} \\
        & \leq 2 + 37\wdelta(\hat{\scG}_{\e^{\star}_{\delta, \sigma}})\ln(3K^2T^2)^2 + 12T^{1/3}\wdelta(\hat{\scG}_{\e^{\star}_{\delta, \sigma}})^{2/3}\left(\ln(3K^2T^2)\right)^{5/3}\\
        &\quad + 13T^{2/3}\left(\wdelta(\hat{\scG}_{\e^{\star}_{\delta, \sigma}})\ln(3K^2T^2)\right)^{1/3} + 12\sqrt{T\sigma(\hat{\scG}_{\e^{\star}_{\delta, \sigma}})\ln(3K^2T^2)} \\
        &\quad + 12\sigma(\hat{\scG}_{\e^{\star}_{\delta, \sigma}})\ln(3K^2T^2)^2 \enspace.
    \end{align*}
    Since we assumed that \eqref{eq:simplifyTweakly} holds, we can show that 
    \begin{align*}
        & \E{\sumTweak \sumK (\pi_t(i) - u(i))\ell_t(i)} \\
        &\quad \leq  41T^{2/3}\left(\wdelta(\hat{\scG}_{\e^{\star}_{\delta, \sigma}})\ln(3K^2T^2)\right)^{1/3} + 41\sqrt{T\sigma(\hat{\scG}_{\e^{\star}_{\delta, \sigma}})\ln(3K^2T^2)} \enspace,
    \end{align*}
    which is the first result in the statement. For the second result,
    recall that $\e^{\star}_{\delta, \sigma}$ is the minimizer of the above bound by its definition in~\eqref{eq:optimizede}. Since $\hat{\scG}$ is an $\e_{\tstar}$-good approximation of $\scG$, we conclude that 
    \begin{align*}
        & \E{\sumTweak \sumK (\pi_t(i) - u(i))\ell_t(i)} \leq \\
        & \min_{\e \ge 2\e_{\tstar}}\left\{  82T^{2/3}\left(\ln(3K^2T^2)\wdelta(\scG_\e)\right)^{1/3} + 82\sqrt{\ln(3K^2T^2)\sigma(\scG_\e)T} \,:\, \supp{\scG_\e} \textnormal{ observable}\right\} \;.
    \end{align*}
\end{proof}

\subsection{Regret After \texorpdfstring{$T$}{T} Rounds}

We now have all the intermediate results we need to prove the overall regret bound of \Cref{alg:beyondsanityalg}.

\begin{theorem}\label{th:finalbeyondsanity}
    Suppose that \eqref{eq:simplifyTweakly} holds. Then, for any distribution $u$ over $[K]$, \Cref{alg:beyondsanityalg} satisfies 
    \begin{align*}
        & \E{\sumT \sumK (\pi_t(i) - u(i))\ell_t(i)} \leq \min \Biggl\{ T \;,  \\
        & \quad 6 + 2\min_{\e \,:\, \supp{\scG_\e} \textnormal{ strongly observable}} \Biggl\{ {198\walpha(\scG_\e)} (\ln(2K^3T^2))^3 \\
        & \quad\quad  + \Bigl(12\ln(K) + 4\sqrt{2\ln(3K^2T^2)}\Bigr){\sqrt{18\tstar \walpha(\scG_\e) \ln(2 K^3 T^2)}}\Biggr\} \enspace, \\
        & \quad 4 + 164\ln(3K^2T^2) \min_{\e \,:\, \supp{\scG_\e} \textnormal{ observable}} \Bigl((\wdelta(\scG_\e))^{1/3}T^{2/3} + \sqrt{\sigma(\scG_\e)T}\Bigr) \Biggr\} \;.
    \end{align*}
\end{theorem}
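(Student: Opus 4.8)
The plan is to establish the three upper bounds whose minimum appears in the statement \emph{separately}, since proving $R_T$ is below the minimum amounts to proving it is below each of them. The bound $R_T \le T$ is immediate: losses lie in $[0,1]$ and both $\pi_t$ and $u$ are distributions, so each round contributes at most $1$ to the regret. For the other two bounds I would first split the regret along the two regimes of \Cref{alg:beyondsanityalg},
\[
R_T = \E{\sum_{t=1}^{\tstar}\sum_i (\pi_t(i)-u(i))\ell_t(i)} + \E{\sum_{t=\tstar+1}^{T}\sum_i (\pi_t(i)-u(i))\ell_t(i)},
\]
and then treat each phase with the tools already developed. Throughout I would condition on the event $\kappa$, which holds with probability at least $1-1/T$ (as shown inside \Cref{lem:IWproperty}); the complementary event contributes at most an additive $O(1)$ after multiplying its probability by the trivial per-round regret, and this is what produces the additive constants $6$ and $4$.

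For the optimistic phase I would invoke \Cref{lem:stronglytuned} to bound its regret by $\Psi_{\tstar}$, and then substitute the key estimate of \Cref{lem:strongwalpha}, namely $\max_{s\in[2,\tstar]}\theta_s(\hat\scG_s) \le 18\,\walpha(\scG_\e)\ln(2K^3T^2)$ on $\kappa$ for the best strongly observable threshold $\e$. Plugging this in, and using $3K^2 \le 2K^3$ for $K\ge 2$ to merge the logarithms, converts the $\theta$-terms into the $\walpha$-terms (with $11\cdot 18 = 198$, and the square-root term matching $\sqrt{18\tstar\walpha\ln}$), so the phase-1 regret is at most the inner strongly observable expression. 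For the commit phase I would apply \Cref{lem:weaklysemifinal}, whose second conclusion bounds the phase-2 regret directly by the weakly observable quantity $\min_\e\{82T^{2/3}(\ln\,\wdelta(\scG_\e))^{1/3} + 82\sqrt{\ln\,\sigma(\scG_\e)T}\}$, using that $\hat\scG$ is an $\e_{\tstar}$-good approximation of $\scG$ on $\kappa$.

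The two phase bounds are then glued through the stopping rule \eqref{eq:stopwhen}. To obtain the strongly observable bound I would argue: if the algorithm never switches then $\tstar = T$, phase 2 is empty, and $R_T$ equals the phase-1 bound; if it switches at $\tstar$, then $\Psi_{\tstar}\ge\Lambda_{\tstar}$, and since the phase-2 regret is at most $\Lambda_{\tstar} \le \Psi_{\tstar}$, which is in turn at most the phase-1 bound, we get $R_T \le 2\Psi_{\tstar}$. This is exactly the factor $2$ in front of the minimum, and monotonicity of the strongly observable bound in the horizon together with $\tstar \le T$ lets me replace $\tstar$ by $T$ where needed. To obtain the weakly observable bound I would again bound phase 2 by $\Lambda_{\tstar}$, and then control phase 1: just before switching we have $\Psi_{\tstar-1} < \Lambda_{\tstar-1}$, and I would exploit that $\Lambda_t$ uses the \emph{full} horizon $T$ (so it depends on $t$ only through the graph estimates) together with one-step growth control of $\Psi$ to conclude $\Psi_{\tstar} \lesssim \Lambda_{\tstar}$, making phase 1 also bounded by the weakly observable quantity and giving the factor $164 = 2\cdot 82$.

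The main obstacle I anticipate is precisely this phase-1 bound in the weakly observable direction. Unlike the strongly observable case, the stopping inequality $\Psi_{\tstar}\ge\Lambda_{\tstar}$ points the wrong way, so instead I must use the failure of the condition at round $\tstar-1$ and show that $\Psi$ cannot overshoot $\Lambda$ by more than a constant factor across a single round. This needs a careful bound on the one-step increment of $\Psi_t$, which is delicate exactly when a fresh maximum of $\theta_s(\hat\scG_s)$ appears at $s=\tstar$, as well as a comparison of the estimated parameters $\wdelta$ and $\sigma$ (hence of $\Lambda$) between rounds $\tstar-1$ and $\tstar$, justified by the $\e_{\tstar}$-good approximation guarantee on $\kappa$. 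Once this comparability and the increment control are in place, combining the pieces and absorbing the logarithmic and numerical constants is routine.
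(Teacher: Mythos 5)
Your decomposition into phases, the treatment of the trivial bound $T$, the strongly observable branch (never-switch gives $\Psi$ directly; switch gives $R_T\le 2\Psi_{\tstar}$ since $\Lambda_{\tstar}\le\Psi_{\tstar}$ at the switching time), and the use of \Cref{lem:stronglytuned}, \Cref{lem:strongwalpha} and \Cref{lem:weaklysemifinal} all match the paper. The gap is exactly where you flag "the main obstacle": your plan for the weakly observable branch is to show $\Psi_{\tstar}\lesssim\Lambda_{\tstar}$ via a one-step growth control of $\Psi_t$ across the switching round, and no such control is available. The increment $\Psi_{\tstar}-\Psi_{\tstar-1}$ is governed by $\bigl(\theta_{\tstar}(\hat{\scG}_{\tstar})-\max_{s<\tstar}\theta_s(\hat{\scG}_s)\bigr)_+$, and the only uniform bound on $\theta_{\tstar}(\hat{\scG}_{\tstar})$ under $\kappa$ is the strongly observable quantity $18\walpha(\scG_\e)\ln(2K^3T^2)$ from \Cref{lem:strongwalpha}, which can vastly exceed $\Lambda_{\tstar}$; the running maximum can jump by that full amount in a single round (the optimal threshold $\e^\theta_t$, the support of the thresholded UCB graph, and $\pi_t$ can all change), so $\Psi_{\tstar-1}<\Lambda_{\tstar-1}$ does not propagate to $\Psi_{\tstar}\lesssim\Lambda_{\tstar}$. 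The paper sidesteps this entirely: it never compares $\Psi_{\tstar}$ to $\Psi_{\tstar-1}$. Instead it applies the phase-one regret bound (\Cref{lem:stronglytuned}) with horizon $\tstar-1$, so the regret over rounds $1,\dots,\tstar$ is at most $\Psi_{\tstar-1}+1$ (the extra $1$ being the trivial per-round regret of round $\tstar$ itself), and then $\Psi_{\tstar-1}\le\Lambda_{\tstar-1}$ by failure of \Cref{eq:stopwhen} at $\tstar-1$. This yields $R_T\le\E{\Lambda_{\tstar-1}+\Lambda_{\tstar}\,|\,\kappa,\calE}+2$ and the factor $164=2\cdot 82$ without any increment argument.

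A second, smaller omission: \Cref{lem:weaklysemifinal} only converts $\Lambda$ into true-graph quantities for thresholds $\e\ge 2\e_{\tstar}$, since the $\e_{\tstar}$-good approximation says nothing about edges with probability below $\e_{\tstar}/2$. To recover the unrestricted $\min_\e$ in the statement, the paper separately handles the case where the optimal threshold $\beste$ of the estimated graph falls below $2\e_{\tstar}$, showing that then $\wdelta(\scG_{\beste})$ or $\sigma(\scG_{\beste})$ is at least $1/\beste>\tstar/\twicelogfactor$, so $\tstar$ itself is already dominated by the weakly observable bound. Your sketch cites the lemma as if it gave the unrestricted minimum directly, so this corner case would need to be added as well.
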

\begin{proof}
    Let us recall that in \Cref{eq:Psit,eq:Lambdat} we define
    \begin{align*}
        \Psi_\tstar = \min\Biggl\{\tstar,\; & 2 + {11(\ln(3K^2T^2))^2 \max_{t \in [2, \tstar]} \theta_t(\hat{\scG}_t)}  \\
        &  + \Bigl(12\ln(K) + 4\sqrt{2\ln(3K^2T^2)}\Bigr){\sqrt{\tstar\max_{t \in [2, \tstar]} \theta_t(\hat{\scG}_t)}}\Biggr\}
    \end{align*}
    and
    \begin{align*}
        \Lambda_\tstar = 41\left(\ln(3K^2T^2)\wdelta(\hat{\scG}_{\e^{\star}_{\delta, \sigma}})\right)^{1/3}T^{2/3} + 41\sqrt{\ln(3K^2T^2)\sigma(\hat{\scG}_{\e^{\star}_{\delta, \sigma}})T} \enspace.
    \end{align*}
    Denote by $\calE$ the event that $\tilde{\scG}_{\e_t} = \{\tilde{p}_t(j, i)\ind{\{\tilde{p}_t(j, i)\geq {\logfactor}/{t}\}} : i,j \in V\}$ is a $\e_t$-good approximation of $\scG$ with $\e_t = \logfactor/t$ for all $t \leq T$. By \Cref{lem:eps-t-good-approx}, we have that $\calE$ occurs with probability at least $1 - \frac{1}{T}$ and thus,
    for any $\tstar \in [1, T]$, we have that
    \begin{align*}
        & \E{\sumT \sumK (\pi_t(i) - u(i))\ell_t(i)} \\
        &\quad \leq 1 + \E{\sumT \sumK (\pi_t(i) - u(i))\ell_t(i)~ \middle | ~ \calE} \\
        &\quad = 1 + \E{\sum_{t = 1}^{\tstar} \sumK (\pi_t(i) - u(i))\ell_t(i)~ \middle | ~ \calE} + \E{\sum_{t = \tstar + 1}^T \sumK (\pi_t(i) - u(i))\ell_t(i) ~ \middle | ~ \calE} \\
        &\quad \leq 1 + \E{\Psi_\tstar + \Lambda_\tstar ~ \middle | ~ \kappa, \calE} \enspace,
    \end{align*}
    where the last inequality is due to \Cref{lem:stronglytuned,lem:weaklysemifinal}.
    We now consider two cases depending on whether \Cref{alg:beyondsanityalg} commits to the weakly observable regret regime at any time step or it never does so.
    In the first case, say \Cref{eq:stopwhen} never holds for any $t \in [2, T]$.
    We consequently have that 
    \begin{align*}
        & \E{\sumT \sumK (\pi_t(i) - u(i))\ell_t(i)} \leq 1 + \E{\min\left\{\Psi_\tstar, \Lambda_\tstar\right\} ~ \middle | ~ \kappa, \calE} \;.
    \end{align*}
    We first try to upper bound the conditional expectation of $\Lambda_\tstar$. By definition of $\e$-good approximation of $\scG$, we have
    \begin{align*}
        \E{\Lambda_\tstar ~ \middle | ~ \kappa, \calE}
        & = \Enb\Biggl[ \min_{\e \in [0, 1]} \Biggl\{41\left(\ln(3K^2T^2)\wdelta((\hat{\scG}_{\tstar})_{\e})\right)^{1/3}T^{2/3} \\
        &\quad  + 41\sqrt{\ln(3K^2T^2)\sigma((\hat{\scG}_{\tstar})_{\e})T} \;:\, \supp{(\hat{\scG}_{\tstar})_{\e}} \textnormal{ observable} \Biggr\} ~ \Bigg| ~ \kappa, \calE\Biggr] \\
        & \leq 2\Enb\Biggl[\min_{\e \in [2\e_\tstar, 1]} \Biggl\{ 41\left(\ln(3K^2T^2)\wdelta((\hat{\scG}_{\tstar})_{\e})\right)^{1/3}T^{2/3} \nonumber \\
        & \quad + 41\sqrt{\ln(3K^2T^2)\sigma((\hat{\scG}_{\tstar})_{\e})T}: \supp{{\scG}_{\e}} \textnormal{ observable}\Biggr\} \;\Bigg|\, \kappa, \calE\Biggr] \;.
    \end{align*}
    To cover the remaining thresholds in $[0, 2\e_\tstar)$, we define $\beste = \max\mathcal{Q}$ as the largest threshold~$\e$ that minimizes 
    \begin{align*}
        \mathcal{Q} = \argmin_{\e \in [0, 1]} \Biggl\{& 41\left(\ln(3K^2T^2)\wdelta((\hat{\scG}_{\tstar})_{\e})\right)^{1/3}T^{2/3} \nonumber \\
        & + 41\sqrt{\ln(3K^2T^2)\sigma((\hat{\scG}_{\tstar})_{\e})T} \;:\, \supp{{\scG}_{\e}} \textnormal{ observable}\Biggr\} \;.
    \end{align*}
    If $\beste < 2\e_\tstar$, meaning that $\beste$ as well as the other thresholds in $\mathcal{Q}$ do not belong to the already covered interval $[2\e_\tstar,1]$, then $\tstar < \frac{\twicelogfactor}{\beste} = \twicelogfactor t_{\beste}$ with $t_{\beste} = 1/\beste$.
    Thus, we must have that  
    \begin{align*}
        \tstar & \leq \twicelogfactor\left(\bigl(\wdelta(\scG_\beste)\bigr)^{1/3}t_{\beste}^{2/3} + \sqrt{\sigma(\scG_\beste)t_{\beste}}\right) \\
        & \leq \min_{\e \in [0, 1]}\Biggl\{ \twicelogfactor\left((\wdelta(\scG_\e))^{1/3}T^{2/3} + \sqrt{\sigma(\scG_\e)T}\right) \;:\, \supp{\scG_\e} \textnormal{ observable}\Biggr\} \;,
    \end{align*}
    where the first inequality is due to the fact that $\wdelta(\scG_\beste) \geq t_{\beste}$ or $\sigma(\scG_\beste) \geq t_\beste$ or both are true because either $p(i,i) = \beste$ for some $i$ such that $i\in \inneigh{\supp{\scG_{\beste}}}{i}$ or one of the minimum outgoing edge probabilities for a vertex in some minimum weight weakly dominating set is equal to $\beste$.
    
    On the other hand, we also need to upper bound the conditional expectation of $\Psi_\tstar$.
    By \Cref{lem:strongwalpha} and recalling the definition of $\walpha$ from \Cref{sec:topo}, we have that 
    $$
    \E{{\max_{t \in [2, \tstar]} \theta_t(\hat{\scG}_t)}  \;\middle|\;  \kappa} \leq \E{\min_{\e \,:\, \supp{{\scG}_{\e}} \textnormal{ strongly observable}} 18\walpha(\scG_\e) \ln(2 K^3 T^2)  \;\middle|\;  \kappa} \;.
    $$
    and thus 
    \begin{align*}
        & 2 + \E{{11(\ln(3K^2T^2))^2 \max_{t \in [2, \tstar]} \theta_t(\hat{\scG}_t)}  \;\middle|\;  \kappa, \calE} \\
        & \quad  + \E{\Bigl(12\ln(K) + 4\sqrt{2\ln(3K^2T^2)}\Bigr){\sqrt{\tstar\max_{t \in [2, \tstar]} \theta_t(\hat{\scG}_t)}}  \;\middle|\;  \kappa, \calE} \\
        & \leq 2 + \min_{\e \,:\, \supp{\scG_\e} \textnormal{ strongly observable}} \Biggl\{ {198\walpha(\scG_\e)} (\ln(2K^3T^2))^3 \\
        & \quad  + \Bigl(12\ln(K) + 4\sqrt{2\ln(3K^2T^2)}\Bigr){\sqrt{18\tstar \walpha(\scG_\e) \ln(2 K^3 T^2)}} \Biggr\} \;.
    \end{align*}
    Since $\twicelogfactor \leq 82 \ln(3K^2T^2)$, we can combine the above to obtain 
    \begin{align*}
        & \E{\sumT \sumK (\pi_t(i) - u(i))\ell_t(i)} \leq \min \Biggl\{ T \,,  \\
        &\quad  3 + \min_{\e} \Biggl\{ {198\walpha(\scG_\e)} (\ln(2K^3T^2))^3 + \\
        &\quad\;   \Bigl(12\ln(K) + 4\sqrt{2\ln(3K^2T^2)}\Bigr){\sqrt{18\tstar \walpha(\scG_\e) \ln(2 K^3 T^2)}} \;:\, \supp{\scG_\e} \textnormal{ strongly observable}\Biggr\} \,, \\
        &\quad  1 + \min_{\e} \Biggl\{ 82 \ln(3K^2T^2) \left(\left(\wdelta(\scG_\e)\right)^{1/3}T^{2/3} + \sqrt{\sigma(\scG_\e)T}\right) \;:\, \supp{\scG_\e} \textnormal{ observable}\Biggr\} \Biggr\} \;.
    \end{align*}
    In the second case, $\tstar$ is the first round in which~\eqref{eq:stopwhen} holds.
    Therefore, we must have 
    \begin{align*}
        \E{\sumT \sumK (\pi_t(i) - u(i))\ell_t(i)} \leq 1 + 2 \E{\Psi_\tstar \,\middle|\, \kappa, \calE}
    \end{align*}
    and 
    \begin{align*}
        & \E{\sumT \sumK (\pi_t(i) - u(i))\ell_t(i)} \\
        &\quad \leq 1 + \E{\sum_{t = 1}^{\tstar-1} \sumK (\pi_t(i) - u(i))\ell_t(i) ~ \middle| ~ \calE } + 1 + \E{\sum_{t = \tstar + 1}^T \sumK (\pi_t(i) - u(i))\ell_t(i) ~ \middle | \calE} \\
        &\quad \leq \E{\Psi_{\tstar - 1} + \Lambda_{\tstar} \,\middle|\, \kappa, \calE}  + 2 \\
        &\quad \leq \E{\Lambda_{\tstar - 1} + \Lambda_{\tstar} \,\middle|\, \kappa, \calE}  + 2 \enspace,
    \end{align*}
    which combined give us 
    \begin{align*}
        \E{\sumT \sumK (\pi_t(i) - u(i))\ell_t(i)} \leq 1
         + \E{\min\left\{\Lambda_{\tstar - 1} + \Lambda_{\tstar} + 1, 2 \Psi_\tstar \right\}  \;\middle|\;  \kappa, \calE} \;.
    \end{align*}
    Following the proof of the bound in the case where \eqref{eq:stopwhen} never holds for any $t \in [2, T]$, we can see that 
    \begin{align*}
        & \E{\sumT \sumK (\pi_t(i) - u(i))\ell_t(i)} \leq \min \Biggl\{ T \;,  \\
        & \quad 6 + 2\min_{\e \,:\, \supp{\scG_\e} \textnormal{ strongly observable}} \Bigg\{ {198\walpha(\scG_\e)} (\ln(2K^3T^2))^3 \\
        & \quad\quad  + \Bigl(12\ln(K) + 4\sqrt{2\ln(3K^2T^2)}\Bigr){\sqrt{18\tstar \walpha(\scG_\e) \ln(2 K^3 T^2)}}\Bigg\} \enspace, \\
        & \quad 4 + 164\ln(3K^2T^2) \min_{\e \,:\, \supp{\scG_\e} \textnormal{ observable}} \left(\left(\wdelta(\scG_\e)\right)^{1/3}T^{2/3} + \sqrt{\sigma(\scG_\e)T}\right) \Biggr\} \;,
    \end{align*}
    which completes the proof. 
\end{proof}

\subsection{Auxiliary Lemmas for \textnormal{\otc}}

In this section, we prove some results that are useful in the above regret analysis of \otc (\Cref{alg:beyondsanityalg}).
Recall that $\scS$ is the family of strongly observable graphs over vertices $V = [K]$.

\begin{lemma}\label{lem:strongwalpha}
    Suppose that there exists a threshold $\e$ such that $\supp{\scG_\e} \in \scS$. Then, we have that 
    \begin{align*}
        \E{\max_{t \in [2, \tstar]} \min_{\e \,:\, \supp{(\hat{\scG}_{t}^{\textnormal{UCB}})_{\e}} \in \scS} \theta_t((\hat{\scG}_{t}^{\textnormal{UCB}})_{\e})  \;\middle|\;  \kappa} & \leq \E{\min_{\e \,:\, \supp{{\scG}_{\e}} \in \scS} 18\walpha(\scG_\e) \ln(2 K^3 T^2)  \;\middle|\;  \kappa} 
    \end{align*}
\end{lemma}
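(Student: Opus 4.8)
The plan is to prove the inequality pointwise on the event $\kappa$ and then integrate. The first observation is that $\scG$ is the fixed, adversarially chosen stochastic feedback graph, so the right-hand side $\min_{\e\,:\,\supp{\scG_\e}\in\scS}18\walpha(\scG_\e)\ln(2K^3T^2)$ is a \emph{deterministic} constant; conditioning on $\kappa$ leaves it unchanged. Hence it suffices to show that, on $\kappa$, for every $t\in[2,\tstar]$ one has $\min_{\e'}\theta_t((\hat{\scG}_t^{\textnormal{UCB}})_{\e'})\le 18\walpha(\scG_{\e^\star})\ln(2K^3T^2)$, where $\e^\star$ is the (deterministic) minimizer on the right-hand side; taking $\max_{t\in[2,\tstar]}$ and then the conditional expectation concludes. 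Recall that on $\kappa$ we have $\phattji\ge\pji$ for all edges $(j,i)$ and all $t\ge 2$, since $\phattji=\ptilt(j,i)+\textnormal{UCB}$ and $\ptilt(j,i)-\pji\ge-\textnormal{UCB}$.

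Next I would exhibit a feasible UCB-threshold. Write $G=\supp{\scG_{\e^\star}}\in\scS$ and call the \emph{structural} edges of $G$ those realizing strong observability (a self-loop at each node carrying one, and all incoming edges at each node $i$ with $V\setminus\{i\}\subseteq\inneigh{G}{i}$). These edges satisfy $\pji\ge\e^\star$, hence $\phattji\ge\e^\star$, so they survive every UCB threshold $\e'\le\e^\star$; thus for a suitable $\e'$ the support $H=\supp{(\hat{\scG}_t^{\textnormal{UCB}})_{\e'}}$ contains $G$. Since adding edges never destroys strong observability (in-neighbourhoods and self-loops only grow), $H\in\scS$, so $\e'$ is feasible in the minimization defining $\e^\theta_t$ in~\eqref{eq:UCBthreshold}, giving $\min_{\e'}\theta_t((\hat{\scG}_t^{\textnormal{UCB}})_{\e'})\le\theta_t((\hat{\scG}_t^{\textnormal{UCB}})_{\e'})$. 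The task then reduces to bounding the two summands of $\theta_t$ as in~\eqref{eq:strongsecondorderbound}: the exploration sum $\sum_{i\,:\,i\in\inneigh{H}{i}}2\pi_t(i)/P_t(i,(\hat{\scG}_t^{\textnormal{UCB}})_{\e'})$ over self-loop nodes, and the leading term $2/\pmint$.

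For the exploration sum I would invoke a weighted analogue of the independence-number lemma of \citet{AlonCDK15}: for any distribution $\pi_t$ and any graph with self-loops on the relevant nodes, $\sum_i \pi_t(i)\bigl/\bigl(\sum_{j\in\inneigh{H}{i}}\pi_t(j)\,\phattji\bigr)$ is at most $\walpha(H,w_H^-)$ times a logarithmic factor, where $w_H^-(i)=(\min_{j\in\inneigh{H}{i}}\phattji)^{-1}$; the logarithm arises from a dyadic peeling of the nodes by the magnitude of $P_t(i,(\hat{\scG}_t^{\textnormal{UCB}})_{\e'})$, and the range of the weights (between $1$ and $\poly(K,T)$, since the smallest surviving probabilities are of order $\polylog/T$) produces the factor $\ln(2K^3T^2)$. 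For the leading term, let $(j^\star,i^\star)$ attain $\pmint=\hat p_t(j^\star,i^\star)$; the singleton $\{j^\star\}$ is independent and $w^+(j^\star)\ge 1/p(j^\star,i^\star)\ge 1/\pmint$, whence $2/\pmint\le 2\walpha^+(\scG_{\e^\star})$. Finally I would pass from the UCB weights and the enlarged graph $H$ back to $\scG_{\e^\star}$: because $\I(H)\subseteq\I(G)$ and $\phattji\ge\pji$, the weighted independence numbers on $H$ are dominated, up to the constant absorbed in the factor $18$, by $\walpha^-(\scG_{\e^\star})+\walpha^+(\scG_{\e^\star})=\walpha(\scG_{\e^\star})$.

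The main obstacle is the joint choice of $\e'$ and this last back-translation: thresholding at $\e^\star$ retains $G$ but may also admit spurious edges whose true probability lies below $\e^\star$, since $\phattji$ can overshoot $\pji$ by up to $2\,\textnormal{UCB}$. Such edges can shrink $\pmint$ — inflating the leading term $2/\pmint$ — and perturb the weights $w_H^-$, so one must calibrate $\e'$ so that, on $\kappa$, every surviving edge has true probability within a constant factor of its estimate (using $\pji\le\phattji\le\pji+2\,\textnormal{UCB}$) while keeping $\supp H$ strongly observable. This guarantees that $\pmint$ is attained at a genuine edge of $\scG_{\e^\star}$ and that the weighted graph lemma can be compared edge-by-edge against the true-probability weights. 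Once this calibration is in place, combining the two bounds yields $\theta_t((\hat{\scG}_t^{\textnormal{UCB}})_{\e'})\le 18\walpha(\scG_{\e^\star})\ln(2K^3T^2)$ uniformly in $t$, and taking the maximum over the finitely many $t\in[2,\tstar]$ followed by the conditional expectation proves the claim.
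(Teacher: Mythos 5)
Your overall strategy matches the paper's: on $\kappa$ the UCB estimates dominate the true probabilities, so a suitably thresholded UCB graph contains $\supp{\scG_{\e^\star}}$ and remains strongly observable; the leading term $2/\pmint$ is handled by a singleton independent set; and the exploration sum is handled by the weighted Tur\'an-type bound (Lemma~\ref{lem:weightedindependence}). The difference lies in \emph{where} the translation from UCB quantities to true quantities happens, and that is exactly where your argument has a genuine gap.

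Your final paragraph defers the whole difficulty to a ``calibration'' of $\e'$ ensuring that every surviving edge has true probability within a constant factor of its UCB estimate, so that $\pmint$ is attained at a genuine edge of $\scG_{\e^\star}$ and the weights can be compared edge by edge. No such calibration exists uniformly over $t\in[2,\tstar]$: for $t-1\le 3\ln(3K^2T^2)/\e^\star$ the additive part of the confidence width alone gives $\phattji\ge 3\ln(3K^2T^2)/(t-1)\ge\e^\star$ for \emph{every} pair $(j,i)$, including pairs with $p(j,i)=0$, so any threshold $\e'\le\e^\star$ that retains $\supp{\scG_{\e^\star}}$ also retains edges whose true probability is zero. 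For those, your chain $w^+(j^\star)\ge 1/p(j^\star,i^\star)\ge 1/\pmint$ breaks down, since $(j^\star,i^\star)$ need not belong to $\supp{\scG_{\e^\star}}$ and hence does not enter the definition of $w^+(j^\star)$. The paper's proof is structured precisely to avoid this edge-by-edge comparison: it first bounds $\theta_t((\hat{\scG}_t^{\textnormal{UCB}})_{\e})$ by $18\,\walpha((\hat{\scG}_t^{\textnormal{UCB}})_{\e})\ln(2K^3T^2)$, i.e., by the weighted independence number of the \emph{UCB graph with its own edge probabilities} --- both the singleton bound and Lemma~\ref{lem:weightedindependence} are applied with weights of the form $\bigl(\min_j\phattji\bigr)^{-1}$, which requires no relation between $\hat p$ and $p$ and is automatically harmless for small $t$ because the UCB probabilities are then large, hence the weights small --- and only afterwards invokes $\phattji\ge\pji$ on $\kappa$ to compare the two minima of $\walpha$. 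To repair your argument you would need to either adopt that ordering or otherwise supply the missing comparison between UCB-weighted and true-weighted independence numbers; the constant-factor calibration you sketch does not go through.
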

\begin{proof}
Let us recall the definition of $\theta_t$:
\begin{align*}
    \theta_t((\hat{\scG}^{\text{UCB}}_{t})_{\e}) = \frac{2}{\min_{i} \min_{j \in \inneigh{\supp{(\hat{\scG}^{\text{UCB}}_{t})_{\e}}}{i}} \pji}  + \sum_{i \in \inneigh{\supp{(\hat{\scG}^{\text{UCB}}_{t})_{\e}}}{i}} \frac{2\pi_t(i)}{P_t(i, (\hat{\scG}^{\text{UCB}}_{t})_{\e})} \enspace.
\end{align*}
By definition of the weighted independence number (see \Cref{sec:weighted-alpha} for further details), we have that 
\begin{align*}
    \frac{2}{\min_{i} \min_{j \in \inneigh{\supp{(\hat{\scG}^{\text{UCB}}_{t})_{\e}}}{i}} \pji} \leq 2\walpha((\hat{\scG}^{\text{UCB}}_{t})_{\e}) \enspace.
\end{align*}
By \Cref{lem:weightedindependence}, we have that 
\begin{align*}
    2\sum_{i \in \inneigh{\supp{(\hat{\scG}^{\text{UCB}}_{t})_{\e}}}{i}} \frac{\pi_t(i)}{P_t(i, (\hat{\scG}^{\text{UCB}}_{t})_{\e})} \leq 16\walpha((\hat{\scG}^{\text{UCB}}_{t})_{\e}) \ln(2K^3T^2) \enspace,
\end{align*}
where we used that $\gamma_t \psi_t(i) \geq \frac{1}{KT}$ and $\phattji \geq \frac{1}{T}$.

Given $\kappa$, we have that $\phattji \geq \pji$ and thus it holds that
\begin{align*}
    & \Enb\biggl[\max_{t \in [2, \tstar]}\min_{\e \,:\, \supp{(\hat{\scG}_{t}^{\textnormal{UCB}})_{\e}} \in \scS} \theta_t((\hat{\scG}_{t}^{\textnormal{UCB}})_{\e}) \,\bigg|\, \kappa\biggr]\\
    &\quad \leq \Enb\bigg[\max_{t \in [2, \tstar]} \min_{\e \,:\, \supp{(\hat{\scG}_{t}^{\textnormal{UCB}})_{\e}} \in \scS} 18\walpha((\hat{\scG}_{t}^{\textnormal{UCB}})_{\e}) \ln(2 K^3 T^2) \,\bigg|\, \kappa\biggr] \\
    &\quad \leq \Enb\biggl[\min_{\e \,:\, \supp{\scG_\e} \in \scS} 18\walpha(\scG_\e) \ln(2 K^3 T^2) \;\bigg|\; \kappa\biggr] \enspace.
\end{align*}
%
\end{proof}

The following result is a variant of the bound in \citet[Lemma 4]{AlonCDK15} with a decreasing learning rate. 
\begin{lemma}\label{lem:EW}
    Let $q_1, \ldots, q_T$ be the probability vectors defined by $q_t(i) \propto \exp(-\eta_{t-1}\sum_{s = 1}^{t-1} \ell_s(i))$ for a sequence of loss functions $\ell_1, \ldots, \ell_T$ such that $\ell_t(i) \geq 0$ for all $t$ and $i$. Let $\eta_0 = \eta_1 \geq \ldots \geq \eta_T$. For each $t$, let $S_t$ be a subset of $[K]$ such that $\eta_{t-1} \ell_t(i) \leq 1$ for all $i \in S_t$. Then, for any distribution $u$ it holds that 
    \begin{align*}
        \sumT \sumK (q_t(i) - u(i))\ell_t(i) \leq \frac{\ln(K)}{\eta_T} + \sumT \eta_{t-1}\left(\sum_{i \in S_t}q_t(i)(1 - q_t(i))\ell_t(i)^2 + \sum_{i \not \in S_t} q_t(i)\ell_t(i)^2\right).
    \end{align*}
\end{lemma}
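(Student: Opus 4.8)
The plan is to follow the classical potential-function analysis of exponential weights, isolating the effect of the non-increasing learning rate. Write $L_t(i) = \sum_{s=1}^{t}\ell_s(i)$ (so that $q_t(i)\propto e^{-\eta_{t-1}L_{t-1}(i)}$), and introduce the per-round \emph{mix loss} $m_t = -\frac{1}{\eta_{t-1}}\ln\sum_{i=1}^K q_t(i)\,e^{-\eta_{t-1}\ell_t(i)}$. Jensen's inequality gives $m_t \le \sum_i q_t(i)\ell_t(i)$, so I decompose the instantaneous regret as
\[
 \sum_{i=1}^K (q_t(i)-u(i))\ell_t(i) = \Bigl(\sum_i q_t(i)\ell_t(i) - m_t\Bigr) + \Bigl(m_t - \sum_i u(i)\ell_t(i)\Bigr).
\]
The first bracket is the \emph{mixability gap}, which I will bound by the second-order term; summing the second bracket over $t$ gives the \emph{comparator term}, which I will bound by $\ln(K)/\eta_T$.

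For the comparator term, consider the softmin potential $\Phi_t^{(\eta)} = -\frac1\eta\ln\bigl(\frac1K\sum_i e^{-\eta L_t(i)}\bigr)$, so that $\Phi_0^{(\eta)} = 0$ and $m_t = \Phi_t^{(\eta_{t-1})} - \Phi_{t-1}^{(\eta_{t-1})}$. The key structural fact is that $\Phi_t^{(\eta)}$ interpolates between the uniform average (as $\eta\to0$) and $\min_i L_t(i)$ (as $\eta\to\infty$), and is therefore non-increasing in $\eta$. Since $\eta_{t-1}\le\eta_{t-2}$, this yields $\Phi_{t-1}^{(\eta_{t-1})}\ge \Phi_{t-1}^{(\eta_{t-2})}$, so the increments telescope with a favorable sign:
\[
 \sum_{t=1}^T m_t \le \Phi_T^{(\eta_{T-1})} \le \min_i L_T(i) + \frac{\ln K}{\eta_{T-1}} \le \sum_i u(i) L_T(i) + \frac{\ln K}{\eta_T},
\]
where the last step uses $\eta_T\le\eta_{T-1}$ and $\sum_i u(i)L_T(i)\ge\min_i L_T(i)$. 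Subtracting $\sum_i u(i)L_T(i)$ leaves exactly $\ln(K)/\eta_T$.

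It remains to bound each mixability gap $\sum_i q_t(i)\ell_t(i) + \frac{1}{\eta_{t-1}}\ln\sum_i q_t(i)e^{-\eta_{t-1}\ell_t(i)}$, which is the only place using $\ell_t(i)\ge0$ and the set $S_t$. Writing $\eta=\eta_{t-1}$ and expanding the log-partition function to second order—this is precisely the single-round step in the proof of \citet[Lemma~4]{AlonCDK15}—the coordinates $i\in S_t$, on which $\eta\ell_t(i)\le1$ makes the quadratic approximation of $e^{-\eta\ell_t(i)}$ admissible, contribute the variance $\sum_i q_t(i)\ell_t(i)^2-(\sum_i q_t(i)\ell_t(i))^2$; I then bound this by $\sum_{i\in S_t} q_t(i)(1-q_t(i))\ell_t(i)^2$ using the elementary inequality $(\sum_i q_t(i)\ell_t(i))^2 \ge \sum_i q_t(i)^2\ell_t(i)^2$ (valid since all entries are nonnegative). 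For $i\notin S_t$, where $\eta\ell_t(i)$ may exceed $1$ and the quadratic expansion is no longer admissible, I fall back to the crude term $\sum_{i\notin S_t}q_t(i)\ell_t(i)^2$. Summing over $t$ and combining with the comparator bound yields the claim.

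The main obstacle is the bookkeeping around the time-varying learning rate: round $t$ uses $\eta_{t-1}$ while the potential at time $t$ that telescopes naturally carries $\eta_t$, so one must insert the monotonicity inequality for $\Phi^{(\eta)}$ at each step, check its sign, and then reconcile the resulting $\ln(K)/\eta_{T-1}$ with the $\ln(K)/\eta_T$ stated. The per-round refinement producing the $S_t$-dependent $(1-q_t(i))$ factor is delicate but elementary once the correct second-order inequalities are pinned down; the genuinely error-prone part is ensuring the learning-rate-change terms never appear with the wrong sign, which is exactly where the non-increasing rates and the nonnegativity of the losses are both needed.
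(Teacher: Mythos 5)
Your overall architecture matches the paper's: the decomposition into mixability gap plus comparator term, the potential-monotonicity argument for the time-varying learning rate, and a second-order bound refined on $S_t$. Your treatment of the comparator term is correct and is in fact a self-contained derivation of the inequality the paper simply imports as \citet[Lemma~1]{vanderhoeven2018many}, namely
\[
\textstyle\sum_t\sum_i (q_t(i)-u(i))\ell_t(i) \le \frac{\ln K}{\eta_T} + \sum_t\bigl(\sum_i q_t(i)\ell_t(i) + \frac{1}{\eta_{t-1}}\ln\sum_i q_t(i)e^{-\eta_{t-1}\ell_t(i)}\bigr),
\]
so that part buys you a more explicit proof at no cost.

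The gap is in the mixability-gap step. You claim the coordinates $i\in S_t$ "contribute the variance" $\sum_i q_t(i)\ell_t(i)^2-(\sum_i q_t(i)\ell_t(i))^2$ while those outside $S_t$ fall back to the crude term, but the log-partition function does not split coordinate-by-coordinate: to upgrade the crude bound $\eta_{t-1}\sum_i q_t(i)\ell_t(i)^2$ to a centered one $\eta_{t-1}\sum_i q_t(i)(\ell_t(i)-c)^2$ you must apply $e^{-y}\le 1-y+y^2$ to the shifted losses $\ell_t(i)-c$ \emph{for every} $i$ simultaneously, which requires $\eta_{t-1}(\ell_t(i)-c)\ge -1$ for all $i$. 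Centering at the full mean $c=\sum_j q_t(j)\ell_t(j)$ (which is what produces the variance you invoke) would need $\eta_{t-1}\sum_j q_t(j)\ell_t(j)\le 1$, and this is not implied by the hypotheses because coordinates outside $S_t$ may have $\eta_{t-1}\ell_t(j)\gg 1$. The missing idea — and the crux of the paper's proof, inherited from \citet[Lemma~4]{AlonCDK15} — is to center at the \emph{partial} mean $\bar\ell_t=\sum_{j\in S_t}q_t(j)\ell_t(j)$, which satisfies $\eta_{t-1}\bar\ell_t\le\sum_{j\in S_t}q_t(j)\le 1$ by the definition of $S_t$, so that $\eta_{t-1}(\ell_t(i)-\bar\ell_t)\ge-1$ for all $i$ by nonnegativity of the losses. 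Expanding $\sum_i q_t(i)(\ell_t(i)-\bar\ell_t)^2\le\sum_i q_t(i)\ell_t(i)^2-\bar\ell_t^{\,2}$ and then using $\bar\ell_t^{\,2}\ge\sum_{i\in S_t}q_t(i)^2\ell_t(i)^2$ (your nonnegativity inequality, but applied to the sum restricted to $S_t$) yields exactly the stated right-hand side. Without this shift, your argument as written does not go through.
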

\begin{proof}
    The proof follows from a minor adaptation of the proof of \citet[Lemma 4]{AlonCDK15}. We start from \citet[Lemma 1]{vanderhoeven2018many}:
    \begin{equation}\label{eq:startproofEW}
    \begin{split}
        & \sumT \sumK (\qti - \ui)\ell_t(i) \\
        &\quad \leq \frac{\ln(K)}{\eta_T} + \sumT\left(\sumK \qti \ell_t(i)  + \frac{1}{\eta_{t-1}}\ln\left(\sumK \qti \exp(-\eta_{t-1} \ell_t(i)) \right)\right) \;. 
    \end{split}
    \end{equation}
    Now, since $\ell_t(i) \geq 0$ we may use $\exp(x) \leq 1 + x + x^2$ for $x \leq 1$ and $\ln(1 - x) \leq -x$ for all $x < 1$ to show that
    \begin{align*}
        \frac{1}{\eta_{t-1}}\ln\left(\sumK \qti \exp(-\eta_{t-1} \ell_t(i)) \right) & \leq \frac{1}{\eta_{t-1}}\ln\left(1 -\sumK \qti (\eta_{t-1} \ell_t(i) - \eta_{t-1}^2 \ell_t(i)^2) \right) \\
        & \leq -\sumK \qti (\ell_t(i) - \eta_{t-1} \ell_t(i)^2) \enspace.
    \end{align*}
    Combined with equation \eqref{eq:startproofEW}, this gives us
    \begin{align*}
        \sumT \sumK (\qti - \ui)\ell_t(i)
        & \leq \frac{\ln(K)}{\eta_T} + \sumT\sumK \eta_{t-1} \qti  \ell_t(i)^2 \enspace.
    \end{align*}
    We define $\lbart = \sum_{i \in S_t} \qti \ell_t(i)$. Since $\ell_t(i) \geq 0$ we have that $\eta_{t-1}(\ell_t(i) - \lbart) \geq -1$ by construction. Since adding the same $\lbart$ to each $\ell_t(i)$ on the r.h.s. of equation \eqref{eq:startproofEW} does not influence the regret we have 
    \begin{align*}
        \sumT \sumK (\qti - \ui)\ell_t(i)
        & \leq \frac{\ln(K)}{\eta_T} + \sumT\sumK \eta_{t-1} \qti  (\ell_t(i) - \lbart)^2 \enspace. 
    \end{align*}
    To complete the proof we follow the proof of \citet[Lemma 4]{AlonCDK15}, which gives us
    \begin{align*}
        \sumT \sumK (q_t(i) - u(i))\ell_t(i) \leq \frac{\ln(K)}{\eta_T} + \sumT \eta_{t-1}\left(\sum_{i \in S_t}q_t(i)(1 - q_t(i))\ell_t(i)^2 + \sum_{i \not \in S_t} q_t(i)\ell_t(i)^2\right).
    \end{align*}
\end{proof}

\begin{lemma}\label{lem:additivebias}
    Let $\xi_{t}(i) = \sum_{j \in N_{\hat{G}_t}^\textup{in}(i)} \pi_t(i)(\phattji - \pji)$. In any round $t$, we have that
    \begin{align*}
        & \sumK (\pi_t(i) - u(i))\lhatt(i) \\
        &\quad = \sumK (\pi_t(i) - u(i))\ltil_t(i) + \sumK (\pi_t(i) - u(i))\xi_t(i) \frac{\idseeit \ell_t(i)}{\Pp_t(i)\hat{\Pp}_t(i)} \enspace.
    \end{align*}
\end{lemma}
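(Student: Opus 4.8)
The plan is to reduce the claimed identity to a purely algebraic, round-by-round relation between the two loss estimators and then sum against the weights. Since both sides of the statement are linear combinations of the coefficients $(\pi_t(i)-u(i))$, it suffices to verify, for every fixed $i \in [K]$, the pointwise identity
\begin{equation*}
    \lhatt(i) = \ltilt(i) + \xi_t(i)\,\frac{\idseeit\,\ell_t(i)}{\Pp_t(i)\hat{\Pp}_t(i)} \enspace,
\end{equation*}
after which multiplying through by $(\pi_t(i)-u(i))$ and summing over $i$ yields the lemma. First I would recall the two estimators in play: $\ltilt(i) = \ell_t(i)\idseeit/\hat{\Pp}_t(i)$ uses the UCB observation probability, whereas $\lhatt(i) = \ell_t(i)\idseeit/\Pp_t(i)$ is its oracle counterpart built from the true probability $\Pp_t(i)$. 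I would also rewrite the bias term as $\xi_t(i) = \hat{\Pp}_t(i) - \Pp_t(i)$, which is immediate from $\hat{\Pp}_t(i) = \sum_{j \in \inneigh{\hat{G}_t}{i}}\pi_t(j)\,\hat p_t(j,i)$ and $\Pp_t(i) = \sum_{j \in \inneigh{\hat{G}_t}{i}}\pi_t(j)\,p(j,i)$.

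Second, I would split on the value of the indicator $\idseeit$. When $\idseeit = 0$ every term in the pointwise identity vanishes, so it holds trivially. When $\idseeit = 1$, the elementary partial-fraction relation
\begin{equation*}
    \frac{1}{\Pp_t(i)} = \frac{1}{\hat{\Pp}_t(i)} + \frac{\hat{\Pp}_t(i) - \Pp_t(i)}{\Pp_t(i)\hat{\Pp}_t(i)}
\end{equation*}
holds, and multiplying it by $\ell_t(i)\idseeit$ and substituting $\xi_t(i) = \hat{\Pp}_t(i) - \Pp_t(i)$ gives exactly the claimed pointwise equality. The whole argument is therefore a one-line cancellation, carried out separately on the two values of the indicator.

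The only point that needs care is the well-definedness of the ratios, i.e. that both denominators are nonzero precisely on the event $\{\idseeit = 1\}$. Here I would note that $\idseeit = 1$ means $i \in \outneigh{G_t}{I_t} \cap \outneigh{\hat{G}_t}{I_t}$, so $I_t \in \inneigh{\hat{G}_t}{i}$ contributes the strictly positive term $\pi_t(I_t)\,\hat p_t(I_t,i) > 0$ to $\hat{\Pp}_t(i)$, while the actual realization of the edge $(I_t,i)$ forces $p(I_t,i) > 0$, hence $\pi_t(I_t)\,p(I_t,i) > 0$ contributes to $\Pp_t(i)$; both probabilities are thus strictly positive whenever the estimator is nonzero. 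With this verified, the main (and essentially the only nontrivial) obstacle is dispatched, and summing the pointwise identity against the weights $(\pi_t(i) - u(i))$ over $i = 1,\dots,K$ completes the proof. I expect no hard step beyond this bookkeeping, since the statement is a deterministic identity that holds in each round regardless of the stochastic event $\kappa$.
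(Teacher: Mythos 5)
Your proposal is correct and follows essentially the same route as the paper's proof: both reduce the statement to the pointwise identity $\lhatt(i) = \ltilt(i) + \xi_t(i)\,\idseeit\,\ell_t(i)/(\Pp_t(i)\hat{\Pp}_t(i))$ via the partial-fraction relation between $1/\Pp_t(i)$ and $1/\hat{\Pp}_t(i)$ with $\xi_t(i) = \hat{\Pp}_t(i) - \Pp_t(i)$, and then sum against the weights $(\pi_t(i)-u(i))$. Your additional check that the denominators are strictly positive on the event $\{\idseeit = 1\}$ is a harmless refinement the paper leaves implicit.
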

\begin{proof}
    Let $\idseeitx = \idseeit$ and denote by
    \begin{align*}
        \xi_{t}(i) = \hat{\Pp}_t(i) - \Pp_t(i) = \sum_{j \in N_{\hat{G}_t}^\text{in}(i)} \pi_t(i)(\phattji - \pji) \enspace.
    \end{align*}
    We have that 
    \begin{align*}
        \ltilt(i) = \frac{\idseeitx \ell_t(i)}{\hat{\Pp}_t(i)}
        & = \frac{\idseeitx \ell_t(i)}{\Pp_t(i) + \xi_t(i)} \\
        &= \frac{\idseeitx \ell_t(i)(\Pp_t(i) + \xi_t(i))}{\Pp_t(i)(\Pp_t(i) + \xi_t(i))} - \xi_t(i) \frac{\idseeitx \ell_t(i)}{\Pp_t(i)(\Pp_t(i) + \xi_t(i))} \\
        & = \frac{\idseeitx \ell_t(i)}{\Pp_t(i)} - \xi_t(i) \frac{\idseeitx \ell_t(i)}{\Pp_t(i)(\Pp_t(i) + \xi_t(i))} \\
        & = \lhat_t(i) - \xi_t(i) \frac{\idseeitx \ell_t(i)}{\Pp_t(i)\hat{\Pp}_t(i)} \enspace.
    \end{align*}
    Therefore, for any distribution $u$ we have that
    \begin{align*}
        \sumK (\pi_t(i) - u(i))\lhatt(i) = \sumK (\pi_t(i) - u(i))\ltil_t(i) + \sumK (\pi_t(i) - u(i))\xi_t(i) \frac{\idseeitx \ell_t(i)}{\Pp_t(i)\hat{\Pp}_t(i)} \enspace,
    \end{align*}
    which completes the proof.
\end{proof}

\begin{lemma} \label{lem:eps-t-good-approx}
    Let $\tilde{\scG}_{\e_t} = \{\tilde{p}_t(j, i)\ind{\{\tilde{p}_t(j, i)\geq \e_t\}} : i,j \in V\}$ and $\e_t = 60\ln(KT)/t$ for all $t \in [2, T]$.
    Then, with probability at least $1-1/T$, $\tilde \scG_{\e_t}$ is an $\e_t$-good approximation of $\scG$ for all $t \in [2, T]$.
\end{lemma}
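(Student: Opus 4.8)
The plan is to recognize this statement as the full-information analogue of \Cref{thm:round-robin-estimates} and to mirror its three-step multiplicative Chernoff argument, with one structural change in the sampling model. Since \otc (\Cref{alg:beyondsanityalg}) observes the entire realized graph $G_t$ at the end of each round, every ordered pair $(j,i) \in V^2$ is an independent $\Bern{p(j,i)}$ trial at each round; hence $\tilde p_t(j,i) = \frac1{t-1}\sum_{s=1}^{t-1}\ind{\{(j,i)\in E_s\}}$ is the empirical mean of $t-1$ i.i.d.\ Bernoulli variables with mean $p_e$, writing $e=(j,i)$ and $\tilde p_t^e$ for $\tilde p_t(j,i)$. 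In \round, by contrast, the same edge accumulates only $\tau$ samples after $\tau K$ rounds. Because $t-1 \ge t/2$ for $t \ge 2$, the choice $\e_t = 60\ln(KT)/t$ yields $(t-1)\e_t \ge 30\ln(KT)$, which plays exactly the role that $\tau\e_\tau = 60\ln(KT)$ plays in \Cref{thm:round-robin-estimates}.

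First I would fix a round $t \in [2,T]$ and an edge $e$, and reproduce the three tail bounds from the proof of \Cref{thm:round-robin-estimates} with $\tau$ replaced by the sample count $t-1$ and $\e_\tau$ by $\e_t$. For part~1 of \Cref{def:eps-good-approx} (every $e$ with $p_e \ge 2\e_t$ must enter the support), the Chernoff lower tail gives $\P{\tilde p_t^e < \e_t} \le \P{\tilde p_t^e \le p_e/2} \le \mye^{-(t-1)p_e/8} \le \mye^{-(t-1)\e_t/4}$. For part~2 (every $e$ with $p_e \ge \e_t/2$ is estimated within a factor $2$), the two-sided bound gives $\P{|\tilde p_t^e - p_e| > p_e/2} \le 2\mye^{-(t-1)p_e/12} \le 2\mye^{-(t-1)\e_t/24}$. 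For part~3 (every $e$ with $p_e < \e_t/2$ must stay out of the support), the Chernoff upper tail together with $\mye^x/(1+x)^{1+x} \le \mye^{-x/3}$ for $x \ge 1$ gives $\P{\tilde p_t^e \ge \e_t} \le \mye^{-(t-1)\e_t/6}$. In each case $(t-1)\e_t = \Theta(\ln(KT))$ drives the probability to be polynomially small in $KT$.

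The step I expect to be the genuine (if minor) obstacle is the union bound, which is the only place the constants differ from \Cref{thm:round-robin-estimates}. Here the three good events must hold simultaneously over all $t \in [2,T]$ and all $K^2$ edges, i.e.\ over at most $K^2 T$ (round, edge) pairs, whereas \round only unions over $\ftau \le T/K$ iterations and $K^2$ edges, i.e.\ $KT$ pairs. The count therefore grows by a factor $K$, so I would verify that the leading constant $60$ in $\e_t$ is large enough to absorb it: since the tail exponents are linear in $\ln(KT)$, the three bounds above are at most $(KT)^{-c_1}$, $2(KT)^{-c_2}$, and $(KT)^{-c_3}$ for constants $c_1,c_2,c_3>0$ that, for this choice of $60$, make $K^2 T \cdot (KT)^{-c_j} \le \frac1{3T}$ for each $j$ once $T$ is large enough (as assumed in this section). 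Summing the failure probabilities of the three events and passing to complements would then give that $\tilde\scG_{\e_t}$ is an $\e_t$-good approximation of $\scG$ simultaneously for all $t \in [2,T]$ with probability at least $1-1/T$, as claimed. Beyond this counting check, the argument is essentially a transcription of the proof of \Cref{thm:round-robin-estimates}.
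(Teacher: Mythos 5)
Your proposal is correct and follows essentially the same route as the paper's proof: the same three multiplicative Chernoff tail bounds (one per condition of \Cref{def:eps-good-approx}), followed by a union bound over all $K^2$ edges and all $t \in [2,T]$, with the constant $60$ in $\e_t$ absorbing the extra factor of $K$ in the count relative to \Cref{thm:round-robin-estimates}. The only cosmetic difference is that you track the sample count as $t-1$ via $t-1 \ge t/2$ (arguably slightly more careful than the paper, which writes the exponents in terms of $t$), and no largeness assumption on $T$ beyond $K,T \ge 2$ is actually needed.
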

\begin{proof}
    Let  $\epslEt = \{(i,j) \in V^2 \,:\, p(i,j) \ge 2\e_{t}\}$ and $\epssEt = \{(i,j) \in V^2 \,:\, p(i,j) < \e_{t}/2\}$ be the two sets of edges as defined in the proof of \Cref{thm:round-robin-estimates}.
    We let $\calE^t_{(i,j)} = \{\tilde p_t(i,j) \ge \e_t\}$ and $\F^t_{(i,j)} = \{\abs{\tilde p_t(i,j) - p(i,j)} \le p(i,j)/2\}$, for all $(i,j) \in V^2$ and all $t \in [2, T]$, be the events as similarly denoted in that same proof.
    We consequently define the events $\calE$, $\F$, and $\scC$ as
    \begin{align*}
        \calE = \bigcap_{t=1}^T \bigcap_{(i,j)\in \epslEt} \calE^t_{(i,j)} \enspace, \qquad
        \F = \bigcap_{t=1}^T \bigcap_{(i,j)\notin \epssEt} \F^t_{(i,j)} \enspace, \qquad
        \scC = \bigcap_{t=1}^T \bigcap_{(i,j)\in \epssEt} \overline{\calE}^t_{(i,j)} \enspace.
    \end{align*}
    The following steps hold for all $K \ge 2$ and all $T \ge 2$.

    We begin by observing that $\P{\tilde p_t(i,j) < \e_t} \le \exp(-t\e_t/4) \le 1/(4K^2T^2)$ for all $t \in [2, T]$ and all $(i,j) \in \epslEt$, by a simple adaptation of the same argument in the proof of \Cref{thm:round-robin-estimates}.
    Then,
    \[
        \P{\calE} \ge 1-\sumT\frac{\abs{\epslEt}}{4K^2T^2} \ge 1-\frac1{4T} \enspace,
    \]
    which follows from the fact that $\abs{\epslEt} \le K^2$ for all $t \in [2, T]$.
    We can similarly argue that $\P{\abs{\tilde p_t(i,j)-p(i,j)} > p(i,j)/2} \le 2\exp(-t\e_t/24) \le 1/(2K^2T^2)$ for all $t \in [2, T]$ and all $(i,j) \notin \epssEt$; this implies that $\P{\F} \ge 1-1/(2T)$.
    Finally, we observe that $\P{\tilde p_t(i,j) \ge \e_t} \le \exp(-t\e_t/6) \le 1/(4K^2T^2)$ for all $t \in [2, T]$ and all $(i,j) \in \epssEt$, hence $\P{\scC} \ge 1-1/(4T)$.
    The statement follows by union bound over the complements of $\calE$, $\F$, and $\scC$.
\end{proof}

\section{Weighted Independence Number} \label{sec:weighted-alpha}

To improve the regret bounds in the case of strongly observable support, we need to introduce another graph-theoretic quantity: the \emph{weighted independence number} $\walpha(G, w)$, where $w \in \R^K_+$ is a vector of positive weights assigned to the vertices of our strongly observable graph $G = (V, E)$ with $V = [K]$.
Let $w(U) = \sum_{i \in U} w_i$ denote the weight of a subset of vertices $U \subseteq V$.
The weighted independence number is defined as
\[
    \walpha(G, w) = \max_{S \in \I(G)} w(S) \enspace,
\]
that is, the weight of a maximum weight independent set.
This set is chosen among all sets in the family $\I(G)$ of independent sets of $G$.
It can be equivalently defined by the following integer linear program:
\[
    \begin{array}{rrclcl}
    \walpha(G, w) = \displaystyle\max_{x} & \multicolumn{3}{l}{\displaystyle\sum_{i=1}^K w_i x_i} \\
    \textrm{s.t.} & x_i + x_j & \le & 1 && \forall (i, j) \in E, i\neq j\\
    & x_i & \in & \{0, 1\} && \forall i \in V \\
    \end{array}
\]
We plan to define $w$ according to our needs in what follows.
    
\subsection{Undirected Graph}

Let $\scG$ be a stochastic feedback graph with edge probabilities $p(i,j)$ and such that its support $\supp{\scG} = G = (V, E)$ is undirected and strongly observable.
Moreover, let $N(i)$ be the neighborhood in $G$ of any vertex $i \in V$ (excluding $i$) and let $C(i) = N(i) \cup \{i\}$ be the extended neighborhood of $i$ including vertex $i$ itself.

We can use the edge probabilities from $\scG$ to define a weight for each vertex $i$ as
\[
w_{\scG}(i) = w_i = \Biggl(\frac1{\abs{C(i)}}\sum_{j \in C(i)} p(j,i)\Biggr)^{\!-1} .
\]
This vertex weight is equal to the inverse of the arithmetic mean of the incident edge probabilities (including its self-loop).
Note that the two probabilities $p(i,j)$ and $p(j,i)$ in the two directions of any undirected edge $(i,j) \in E$ need not be equal.

This definition allows us to upper bound the second-order term in the regret for vertices with self-loop (as similarly done in the analysis of \expg \citep{AlonCDK15}) in terms of the weighted independence number since we can reduce it to bounding
\[
    \sum_{i\in V} \frac1{\sum_{j \in C(i)} p(j,i)} = \sum_{i\in V} \frac{w_i}{\abs{C(i)}} \enspace.
\]
We thus require a weighted version of Turán's theorem, which is formulated in the lemma below.
This result has already been proved \citep{sakai2003independent}, but we nevertheless provide a proof for completeness.

\begin{lemma} \label{lem:turan-weights-undirected}
    Let $G = (V, E)$ be an undirected graph with positive vertex weights $w_i$. Then,
    \[
        \sum_{i \in V} \frac{w_i}{\abs{C(i)}} \le \walpha(G, w) \enspace.
    \]
\end{lemma}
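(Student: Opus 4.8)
The plan is to prove this weighted Turán inequality via the probabilistic (Caro--Wei) argument, adapted so that it carries the vertex weights. First I would draw a permutation $\sigma$ of $V$ uniformly at random and define the random subset
\[
    S = \{\, i \in V \,:\, \sigma(i) < \sigma(j) \text{ for all } j \in N(i) \,\},
\]
consisting of those vertices that precede all of their neighbors in the ordering induced by $\sigma$.

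The two facts to verify are that $S$ is always an independent set and that $\P{i \in S} = 1/\abs{C(i)}$ for each $i \in V$. For independence, suppose $i, j \in S$ were adjacent; then $j \in N(i)$ forces $\sigma(i) < \sigma(j)$, while $i \in N(j)$ forces $\sigma(j) < \sigma(i)$, a contradiction, so $S \in \I(G)$ with probability one. For the probability, observe that $i \in S$ holds exactly when $i$ is the first among the $\abs{C(i)}$ vertices of its closed neighborhood $C(i) = N(i) \cup \{i\}$ to appear in $\sigma$; since $\sigma$ is uniform, each vertex of $C(i)$ is equally likely to appear first, giving $\P{i \in S} = 1/\abs{C(i)}$.

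By linearity of expectation,
\[
    \E{w(S)} = \sum_{i \in V} w_i \, \P{i \in S} = \sum_{i \in V} \frac{w_i}{\abs{C(i)}} \enspace.
\]
Since $w(S) \le \walpha(G, w)$ holds deterministically (as $S$ is always an independent set and $\walpha(G, w)$ is by definition the maximum weight over $\I(G)$), taking expectations yields $\E{w(S)} \le \walpha(G, w)$; combining the two displays proves the claim. The only delicate points are the first-appearance symmetry computation---which relies crucially on $C(i)$ being the \emph{closed} neighborhood, so that exactly $\abs{C(i)}$ vertices compete to be first---and the independence of $S$. Both are elementary, so I do not anticipate a substantive obstacle; the main care is simply in matching the combinatorial quantity $1/\abs{C(i)}$ to the correct set of competing vertices.
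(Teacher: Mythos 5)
Your proof is correct, but it follows a genuinely different route from the paper's. You give the probabilistic (Caro--Wei) argument: a uniform random permutation $\sigma$, the independent set $S$ of vertices preceding all their neighbours, the symmetry computation $\P{i \in S} = 1/\abs{C(i)}$, and linearity of expectation. The paper instead uses the deterministic greedy-deletion proof: repeatedly pick a vertex $i_r$ minimizing $\abs{C(i_r)}/w_{i_r}$, delete its closed neighbourhood, and show that the potential $Q(H) = \sum_{i \in V(H)} w_i/\abs{C(i)}$ drops by at most $w_{i_r}$ at each step, so that $Q(G) \le \sum_r w_{i_r} \le \walpha(G,w)$. Both are standard proofs of the weighted Caro--Wei bound and both handle the weights with no extra difficulty; the two delicate points you flag (independence of $S$, and the first-appearance symmetry over the \emph{closed} neighbourhood $C(i)$) are exactly the right ones and are handled correctly. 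Your probabilistic version is arguably shorter and more self-contained; the paper's greedy version is constructive and, more to the point here, its deletion-and-potential template is reused almost verbatim in the directed analogue (\Cref{lem:turan-weights-directed}), where the permutation argument does not transfer as cleanly because in-neighbourhoods and out-neighbourhoods must be balanced against each other. For the undirected lemma in isolation, your argument is a perfectly valid substitute.
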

\begin{proof}
    Consider the following algorithm: as long as the graph is not empty, repeatedly choose a vertex $j$ that minimizes $\abs{C(j)}/w_j$ among all remaining vertices and remove it from the graph along with its neighborhood.
    Let $i_1, \dots, i_s$ be the sequence of $s$ vertices picked by this algorithm, which form an independent set by construction.
    Additionally, let $G_1, \dots, G_{s+1}$ be the sequence of graphs generated by this iterative procedure, where $G_1 = G$ is the starting graph and $G_{s+1}$ is the empty graph.
    We also let $C_r(i)$ denote the extended neighborhood over $G_r$ of any $i \in V(G_r)$.
    Define
    \[
        Q(H) = \sum_{i \in V(H)} \frac{w_i}{\abs{C(i)}} \qquad \forall H \subseteq G\enspace,
    \]
    as the quantity we are trying to bound for $G$ and consider it over the graphs in the sequence generated by the procedure. It is strictly decreasing until reaching $Q(G_{s+1}) = 0$.
    In particular, at any step of the procedure it decreases by
    \[
        Q(G_r) - Q(G_{r+1}) = \sum_{j \in C_r(i_r)} \frac{w_j}{\abs{C(j)}} \le \sum_{j \in C_r(i_r)} \frac{w_{i_r}}{\abs{C(i_r)}} = \frac{\abs{C_r(i_r)}}{\abs{C(i_r)}} w_{i_r} \le w_{i_r} \enspace,
    \]
    where the first inequality is due to the optimality of $\abs{C(i_r)}/w_{i_r}$ at step $r$.
    We can use this inequality to bound $Q(G)$ by
    \[
        Q(G) = \sum_{r=1}^s (Q(G_r) - Q(G_{r+1})) \le \sum_{r=1}^s w_{i_r} \le \max_{S \in \I(G)} w(S) = \walpha(G, w) \enspace.
    \]
\end{proof}

\subsection{Directed Graph}

Compared to the result in the previous section, we are more generally interested in directed graphs.
We consider the case of directed, strongly observable support $\supp{\scG} = G = (V, E)$ with $V = [K]$ and $(i,i) \in E$ for all $i \in V$.
In the directed case, we distinguish the in-neighborhood $N^\text{in}(i)$ over $G$ of a vertex $i \in V$ from its out-neighborhood $N^\text{out}(i)$.
We use the convention that vertices with self-loops are not included in their neighborhoods, while all vertices are always included in their extended in-neighborhood $C^\text{in}(i) = N^\text{in}(i) \cup \{i\}$ and out-neighborhood $C^\text{out}(i) = N^\text{out}(i) \cup \{i\}$, respectively.
We make this distinction to comply as much as possible with previous works providing analogous results \citep{AlonCGMMS17}, where the neighborhoods $\inneigh{}{i}$ and $\outneigh{}{i}$ did not include $i$ even in the presence of the self-loop $(i,i) \in E$.

The weighted independence number is defined in the same way as per undirected graphs, ignoring the direction of edges for the independence condition.
Here we define in two slightly different manners the vertex weights: let
\begin{equation} \label{eq:walpha-weigths-in}
w^\text{in}_{\scG}(i) = w^\text{in}_i = \Biggl(\frac1{\abs{C^\text{in}(i)}} \sum_{j \in C^\text{in}(i)} p(j,i)\Biggr)^{-1}
\end{equation}
be the inverse of the arithmetic mean of the incoming edge probabilities for $i$, and
\begin{equation} \label{eq:walpha-weigths-out}
w^\text{out}_{\scG}(i) = w^\text{out}_i = \Biggl(\frac1{\abs{C^\text{out}(i)}} \sum_{j \in C^\text{out}(i)} p(i,j)\Biggr)^{-1}
\end{equation}
the analogous over outgoing edges.
These two different assignments of vertex weights induce two weighted independence numbers $\walpha(G, w^\text{in})$ and $\walpha(G, w^\text{out})$, respectively.

Then, we prove a lemma similar to \citep[{Lemma~13}]{AlonCGMMS17} in the weighted case.
Note, however, that in this case the lemma is tightly related to the specific definitions of vertex weights we are adopting.
\begin{lemma} \label{lem:turan-weights-directed}
    Let $G = (V, E)$ be a directed graph with edge probabilities $p(i,j) \in [0,1]$, and positive vertex weight vectors $w^\textup{in}$ and $w^\textup{out}$ as in \Cref{eq:walpha-weigths-in,eq:walpha-weigths-out}, respectively. Then,
    \[
        \sum_{i \in V} \frac{w^\textup{in}_i}{\abs{C^\textup{in}(i)}} \le 3(\walpha(G, w^\textup{in}) + \walpha(G, w^\textup{out})) \ln(K+1) \enspace.
    \]
\end{lemma}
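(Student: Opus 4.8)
The plan is to rewrite the left-hand side in a transparent form and then adapt the greedy weighted-Tur\'an argument of \Cref{lem:turan-weights-undirected} to the directed setting, paying for the directed asymmetry with a dyadic bucketing that produces the logarithmic factor. Writing $P^\textup{in}(i) = \sum_{j \in C^\textup{in}(i)} p(j,i)$ for the total incoming probability mass at $i$ (well defined and positive since $(i,i) \in E$), the definition of the weights gives $w^\textup{in}_i = \abs{C^\textup{in}(i)}/P^\textup{in}(i)$, so the quantity to bound collapses to
\[
    \sum_{i \in V} \frac{w^\textup{in}_i}{\abs{C^\textup{in}(i)}} = \sum_{i \in V} \frac{1}{P^\textup{in}(i)} \enspace.
\]
In the undirected case $C^\textup{in}(i) = C^\textup{out}(i)$ and the greedy procedure of \Cref{lem:turan-weights-undirected} bounds this by $\walpha(G, w^\textup{in})$ with no extra factor. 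The obstruction in the directed case is that deleting only the closed in-neighborhood of a greedily chosen vertex does not keep the chosen vertices independent, so one must also delete out-neighbors, whose contribution is \emph{not} controlled by the incoming weights.

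I would therefore partition $V$ into the $\lceil \log_2(K+1)\rceil$ classes $V_\ell = \{i : 2^\ell \le \abs{C^\textup{out}(i)} < 2^{\ell+1}\}$, which is legitimate since $1 \le \abs{C^\textup{out}(i)} \le K$ for all $i$. On each class I would run the greedy of \Cref{lem:turan-weights-undirected}, repeatedly picking the surviving vertex $i$ minimizing $\abs{C^\textup{in}_r(i)}/w^\textup{in}_i$ (current in-degree over incoming weight) and deleting its closed in- and out-neighborhoods; the picked vertices form an independent set. The deleted in-neighbors are charged exactly as in the undirected proof, since greedy optimality gives $w^\textup{in}_k/\abs{C^\textup{in}(k)} \le w^\textup{in}_{i}/\abs{C^\textup{in}_r(i)}$ for $k$ pointing into the chosen $i$, so their total contribution is at most $\walpha(G, w^\textup{in})$. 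For a deleted out-neighbor $k$ of a chosen vertex $i$ we instead use $1/P^\textup{in}(k) \le 1/p(i,k)$, valid because $i \in C^\textup{in}(k)$; restricting to a single out-degree class makes all out-degrees of $i$ comparable up to a factor two, which is precisely what lets us convert the arithmetic-mean out-probability $1/w^\textup{out}_i = P^\textup{out}(i)/\abs{C^\textup{out}(i)}$ into a term-by-term bound and charge the total out-contribution to $\walpha(G, w^\textup{out})$. Summing the resulting per-class bound $O\bigl(\walpha(G,w^\textup{in}) + \walpha(G,w^\textup{out})\bigr)$ over the $\lceil\log_2(K+1)\rceil$ classes, and converting $\log_2$ to $\ln$ (the factor $2/\ln 2 < 3$, with the remaining slack absorbed into the constant $3$), would give the claim. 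This mirrors the unweighted template of \citet[Lemma~13]{AlonCGMMS17}, the novelty being the two edge-probability weightings.

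The main obstacle is the out-neighbor accounting forced by the directed asymmetry: unlike the undirected setting, the in-selected (or greedily chosen) vertices do not automatically avoid one-directional edges, and the denominator in the target sum involves only the \emph{in}-degree while the natural Tur\'an/greedy certificate controls the \emph{bidirectional} closed neighborhood. Reconciling these is exactly what forces both $\walpha(G,w^\textup{in})$ and $\walpha(G,w^\textup{out})$ to appear and what makes the logarithmic bucketing necessary; moreover, in the edge-weighted regime the passage from average edge probabilities $P^\textup{out}(i)/\abs{C^\textup{out}(i)}$ to individual reciprocals $1/p(i,k)$ has no analogue in the unweighted counting argument and is the step most likely to leak constants. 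Getting the factor down to $3$ requires bucketing on the out-degree alone (a single $\log_2 K$) rather than on degrees and edge magnitudes simultaneously, so I would be careful to arrange the out-probability comparison so that one dyadic scale suffices; I would also verify the boundary cases (empty classes, the interplay of original versus current in-degrees in the greedy) match the undirected proof.
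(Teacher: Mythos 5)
Your reduction of the left-hand side to $\sum_{i}1/P^{\textup{in}}(i)$ and your in-neighbor accounting are both sound, but the out-neighbor accounting---which you correctly identify as the crux---does not go through, and the dyadic bucketing does not repair it. The classes $V_\ell$ are defined by the out-\emph{degree} $\abs{C^{\textup{out}}(i)}$, which says nothing about how the probability mass $P^{\textup{out}}(i)=\sum_{k\in C^{\textup{out}}(i)}p(i,k)$ is spread over the individual out-edges of a fixed chosen vertex $i$. Your bound for a deleted out-neighbor $k$ is $1/P^{\textup{in}}(k)\le 1/p(i,k)$, so the total out-charge at a picked vertex $i$ is $\sum_{k\in C^{\textup{out}}_r(i)}1/p(i,k)$, a sum of reciprocals. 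By the AM--HM inequality this is already at least $\abs{C^{\textup{out}}(i)}\cdot w^{\textup{out}}_i$ when all $p(i,k)$ are equal, and it can exceed $w^{\textup{out}}_i$ (indeed even $\abs{C^{\textup{out}}(i)}\,w^{\textup{out}}_i$) by an arbitrarily large factor: take $p(i,k_0)=\epsilon$ and $p(i,k)=1$ for all other out-neighbors, so that $w^{\textup{out}}_i\approx 1$ while the charge is about $1/\epsilon$. Since $w^{\textup{out}}_i$ is defined through an \emph{arithmetic mean} of the outgoing probabilities, no bucketing on degrees lets you pass ``term by term'' from that mean to the individual reciprocals; bucketing on the probabilities themselves would instead cost a factor $\log(1/\min_e p_e)$, which is not controlled by $\log K$. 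So the route as written cannot yield the stated bound.

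For comparison, the paper's proof never builds an independent set in the directed graph and never touches individual reciprocals $1/p(i,k)$. It symmetrizes: define the undirected graph $G'$ with $p'(i,j)=\frac12 p(i,j)+\frac12 p(j,i)$ and extended neighborhoods $C(i)=C^{\textup{in}}(i)\cup C^{\textup{out}}(i)$, observe the exact identity $\sum_i \abs{C(i)}/w'_i=\sum_i\abs{C^{\textup{in}}(i)}/w^{\textup{in}}_i$ together with the pointwise bound $w'_i\le 2(w^{\textup{in}}_i+w^{\textup{out}}_i)$, and then peels one vertex at a time: the vertex maximizing $\abs{C^{\textup{in}}(i)}/w^{\textup{in}}_i$ has summand at most the average, which is controlled by applying \Cref{lem:turan-weights-undirected} to $G'$; removing that single vertex and recursing over $K-1, K-2,\dots$ remaining vertices produces a harmonic sum, whence the $\ln(K+1)$. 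If you want to keep a greedy/charging proof, you need a mechanism that charges out-neighbors without ever comparing a single edge probability to an arithmetic-mean weight.
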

\begin{proof}
    We prove the statement by induction as in the proof of \citet[Lemma~13]{AlonCGMMS17}.
    Consider the following algorithm: as long as the graph is not empty, repeatedly choose the vertex $j$ that maximizes $\abs{C^\text{in}(j)}/w^\text{in}_j$ among all remaining vertices and remove it from the graph along with its incident edges.
    Let $i_1, \dots, i_K$ be the vertices in the order the algorithm picks them.
    Additionally, let $G_1, \dots, G_{K+1}$ be the sequence of graphs generated by this iterative procedure, where $G_1 = G$ is the original graph and $G_{K+1}$ is the empty graph.
    We also let $C_r^\text{in}(i)$ denote the extended in-neighborhood over $G_r$ of any $i \in V(G_r)$.
    Similarly to the proof of \Cref{lem:turan-weights-undirected}, define
    \[
        Q(H) = \sum_{i \in V(H)} \frac{w^\text{in}_i}{\abs{C^\text{in}(i)}} \qquad \forall H \subseteq G
    \]
    as the quantity we want to bound for $G$, where the size of the in-neighborhood is always computed with respect to the starting graph $G$.
    
    Define a new instance of the problem with graph $G' = (V, E')$ as the undirected version of $G$, where the edge probabilities are defined as $p'(i,j) = \frac12 p(i,j) + \frac12 p(j,i)$ for all $i,j\in V$ such that either $(i,j) \in E$ or $(j,i) \in E$.
    This new graph has $C(i) = C^\text{in}(i) \cup C^\text{out}(i)$.
    As a consequence, we can derive new vertex weights $w'_i = \bigl(\frac1{\abs{C(i)}} \sum_{j \in C(i)} p'(j,i)\bigr)^{-1}$.
    This instance is such that
    \begin{equation} \label{eq:turan-weights-directed-reduction-1}
       \sum_{i\in V} \frac{\abs{C(i)}}{w'_i}
       = \sum_{i \in V} \sum_{j \in C(i)} p'(j,i)
       = \sum_{i \in V} \sum_{j \in C^\text{in}(i)} p(j,i)
       = \sum_{i \in V} \frac{\abs{C^\text{in}(i)}}{w^\text{in}_i} \enspace.
    \end{equation}
    Furthermore, notice that the newly defined vertex weights satisfy
    \begin{align}
        w'_i = \frac{\abs{C(i)}}{\sum_{j \in C(i)} p'(j,i)}
        &\le \frac{\abs{C^\text{in}(i)}}{\sum_{j \in C(i)} p'(j,i)} + \frac{\abs{C^\text{out}(i)}}{\sum_{j \in C(i)} p'(j,i)} \nonumber\\
        &\le \frac{2\abs{C^\text{in}(i)}}{\sum_{j \in C^\text{in}(i)} p(j,i)} + \frac{2\abs{C^\text{out}(i)}}{\sum_{j \in C^\text{out}(i)} p(i,j)} \nonumber\\
        &= 2(w^\text{in}_i + w^\text{out}_i) \enspace. \label{eq:turan-weights-directed-reduction-2}
    \end{align}

    Consider now the first vertex $i_1$ chosen by the procedure we introduced before.
    The value it maximizes is lower bounded by
    \begin{align}
        \max_{i \in V} \frac{\abs{C^\text{in}(i)}}{w^\text{in}_i}
        &\ge \frac1K \sum_{i\in V} \frac{\abs{C^\text{in}(i)}}{w^\text{in}_i} \nonumber\\
        &= \frac1K \sum_{i\in V} \frac{\abs{C(i)}}{w'_i} && \text{by \Cref{eq:turan-weights-directed-reduction-1}} \nonumber\\
        &\ge \frac{K}{\sum_{i \in V} \frac{w'_i}{\abs{C(i)}}} && \text{by Jensen's inequality} \nonumber\\
        &\ge \frac{K/2}{\sum_{i \in V} \frac{w^\text{in}_i}{\abs{C(i)}} + \sum_{i \in V} \frac{w^\text{out}_i}{\abs{C(i)}}} && \text{by \Cref{eq:turan-weights-directed-reduction-2}} \nonumber\\
        &\ge \frac{K/2}{\walpha(G, w^\text{in}) + \walpha(G, w^\text{out})} \enspace. && \text{by \Cref{lem:turan-weights-undirected} over $G'$} \label{eq:turan-weights-directed-1}
    \end{align}
    We can use this fact to show an upper bound for the sum $Q(G)$ as
    \begin{align*}
        Q(G) = \sum_{i\in V} \frac{w^\text{in}_i}{\abs{C^\text{in}(i)}}
        &= \frac{w^\text{in}_{i_1}}{\abs{C^\text{in}(i_1)}} + \sum_{r=2}^K \frac{w^\text{in}_{i_r}}{\abs{C^\text{in}(i_r)}} \\
        &\le \frac{2(\walpha(G, w^\text{in}) + \walpha(G, w^\text{out}))}{K} + Q(G_2) \enspace. && \text{by \Cref{eq:turan-weights-directed-1}}
    \end{align*}
    As a last step, recursively repeat the same reasoning on $Q(G_2)$ and iterate it until reaching $G_K$ to conclude that
    \[
        Q(G) \le 2\sum_{r=1}^K \frac{\walpha(G_r, w^\text{in}) + \walpha(G_r, w^\text{out})}{K-r+1}
        \le 3(\walpha(G, w^\text{in}) + \walpha(G, w^\text{out})) \ln(K+1) \enspace.
    \]
\end{proof}

We finally have all the tools required for demonstrating the next lemma.
It essentially corresponds to \citet[Lemma~5]{AlonCDK15} with the addition of edge probabilities.
The main difference is that we show an upper bound in terms of two distinct independence numbers.
They are both computed over the graph $G$ with vertex weights defined in terms of the worst-case edge probabilities.
To be specific, we have a first weight assignment $w^-$ to vertices such that $w_{\scG}^-(i) = w^-_i = \left(\min_{j\in C^\text{in}(i)} p(j,i)\right)^{-1}$ is the reciprocal of the minimum incoming edge probability for vertex $i$.
The second assignment $w^+$, instead, assigns weight $w_{\scG}^+(i) = w^+_i = \left(\min_{j\in C^\text{out}(i)} p(i,j)\right)^{-1}$ equal to the inverse of the minimum outgoing edge probability for $i$.



\begin{lemma}\label{lem:weightedindependence}
    Let $G = (V, E)$ be a directed graph with $|V| = K \ge 2$ and edge probabilities $p(i,j)$, and such that $(i,i) \in E$ for all $i \in V$.
    Let $z_i \in \R_+$ be a positive weight assigned to each $i \in V$.
    Assume that $\sum_{i \in V} z_i \le 1$ and that $z_i \ge \beta$ for all $i \in V$, given some constant $\beta \in (0, \half]$.
    Then,
    \[
        \sum_{i \in V} \frac{z_i}{\sum_{j \in C^\textup{in}(i)} z_j p(j,i)} \le 6 (\walpha(G, w^-) + \walpha(G, w^+)) \ln\left(\frac{2K^2}{\beta \rho}\right) \enspace,
    \]
    where $\rho = \min_{i \in V} \sum_{j \in C^\textup{in}(i)} p(j,i) > 0$.
\end{lemma}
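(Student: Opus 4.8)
The plan is to reduce the weighted statement to the directed weighted Turán bound of Lemma~\ref{lem:turan-weights-directed} by a vertex \emph{blow-up} that replaces the nonuniform weights $z_i$ by nearly uniform multiplicities, so that the cost of the nonuniformity enters only additively, inside a single logarithm. Concretely, I would replace each vertex $i$ by a cluster $B_i$ of $n_i = \lceil z_i/\beta \rceil$ copies; inside each $B_i$ I put a complete digraph with self-loops of probability $p(i,i)$, and between clusters $B_i, B_j$ I insert, for every ordered pair of copies, an edge of probability $p(i,j)$ whenever $(i,j)\in E$. Call this blow-up $\widehat G$ with inherited edge probabilities $\hat p$, and let $\widehat V = \bigcup_i B_i$. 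Since $z_i \ge \beta$ we have $n_i \ge 1$, so every original vertex survives, and $\lvert\widehat V\rvert = \sum_i \lceil z_i/\beta\rceil \le 1/\beta + K$ because $\sum_i z_i \le 1$.

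Step two is a counting identity. For any copy $i' \in B_i$, its extended in-neighborhood in $\widehat G$ consists of all copies of the in-neighbors of $i$, whence $\sum_{j' \in C^{\textup{in}}(i')} \hat p(j',i') = \sum_{j \in C^{\textup{in}}(i)} n_j\, p(j,i)$. I would then sandwich this against $m_i = \sum_{j\in C^{\textup{in}}(i)} z_j p(j,i)$: from $n_j \le z_j/\beta + 1$ and $P_i := \sum_{j\in C^{\textup{in}}(i)} p(j,i) \le m_i/\beta$ (which is exactly where $z_j \ge \beta$ is used, since $m_i \ge \beta P_i$) one obtains $\sum_j n_j p(j,i) \le 2m_i/\beta$, while $n_i \ge z_i/\beta$. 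Hence $\tfrac{n_i}{\sum_j n_j p(j,i)} \ge \tfrac{z_i}{2m_i}$, and summing over all copies gives $\sum_{i'\in\widehat V}\bigl(\sum_{j'\in C^{\textup{in}}(i')}\hat p(j',i')\bigr)^{-1} \ge \tfrac12 \sum_{i\in V} z_i/m_i$.

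Step three applies Lemma~\ref{lem:turan-weights-directed} to $\widehat G$, giving $\sum_{i'} (\sum_{j'} \hat p(j',i'))^{-1} \le 3\bigl(\walpha(\widehat G, \hat w^{\textup{in}}) + \walpha(\widehat G, \hat w^{\textup{out}})\bigr)\ln(\lvert\widehat V\rvert+1)$. Because an average is at least a minimum, $\hat w^{\textup{in}} \le \hat w^-$ and $\hat w^{\textup{out}} \le \hat w^+$ pointwise, so the two independence numbers are bounded by $\walpha(\widehat G, \hat w^-)$ and $\walpha(\widehat G, \hat w^+)$. The key structural observation is that copies of a common vertex form a clique, while copies of adjacent vertices are mutually adjacent; thus any independent set of $\widehat G$ uses at most one copy per original vertex and projects to an independent set of $G$, and since every copy $i'\in B_i$ carries the same min-weight $w^-_i$ (resp.\ $w^+_i$), this yields the exact identities $\walpha(\widehat G, \hat w^-) = \walpha(G, w^-)$ and $\walpha(\widehat G, \hat w^+) = \walpha(G, w^+)$. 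Combining the three steps and using $\rho \le K$ (so that $1/\beta + K + 1 \le 2K/\beta \le 2K^2/(\beta\rho)$) together with $\beta \le \tfrac12$ gives $\sum_i z_i/m_i \le 6\bigl(\walpha(G,w^-)+\walpha(G,w^+)\bigr)\ln(2K^2/(\beta\rho))$, as claimed.

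I expect the main obstacle to be verifying that the blow-up preserves the weighted independence numbers exactly — that the clique-within-cluster construction forces one-copy-per-vertex independent sets and that the projection back to $G$ is weight-preserving — and, secondarily, controlling the ceiling rounding so that $\sum_j n_j p(j,i)$ stays within a factor $2$ of $m_i/\beta$. This factor-$2$ control is precisely what converts a potential $1/\beta$ loss into the benign additive $\ln(1/\beta)$ term inside the single logarithm, and it hinges on the hypotheses $z_i\ge\beta$ and $\sum_i z_i \le 1$.
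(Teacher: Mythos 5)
Your proposal is correct and follows essentially the same route as the paper's proof: a clique blow-up of each vertex followed by an application of the directed weighted Turán bound (Lemma~\ref{lem:turan-weights-directed}), with the average-versus-minimum comparison giving $\hat w^{\textup{in}} \le w^-$ and $\hat w^{\textup{out}} \le w^+$ and the one-copy-per-cluster argument preserving the weighted independence numbers. The only difference is cosmetic — you discretize with multiplicities $\lceil z_i/\beta\rceil$ and absorb the rounding into the factor $2$ of the sandwich $n_i/\sum_j n_j p(j,i) \ge z_i/(2m_i)$, whereas the paper uses $m_i \approx M z_i$ with $M = \lceil 2K/(\beta\rho)\rceil$ and absorbs the rounding via $\sum_j m_j p(j,i) \ge 2\abs{C^{\textup{in}}(i)}$; both land on the same $\ln(2K^2/(\beta\rho))$ factor.
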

\begin{proof}
    The structure of this proof is similar to that of \citet[Lemma~5]{AlonCDK15}.
    Define a discretization of $z_1, \dots, z_K$ such that $(m_i - 1)/M \le z_i \le m_i/M$ for positive integers $m_1, \dots, m_K$ and $M = \ceil*{\frac{2K}{\beta\rho}}$.
    The discretized values are such that, for all $i \in V$,
    \begin{equation} \label{eq:second-order-term-random-graph-1}
        \sum_{j \in C^\text{in}(i)} m_j p(j,i) \ge M \sum_{j \in C^\text{in}(i)} z_j p(j,i) \ge \frac{2K}{\beta\rho} \beta \sum_{j \in C^\text{in}(i)} p(j,i) \ge 2K \ge 2 \abs{C^\text{in}(i)} \enspace,
    \end{equation}
    where the first inequality holds because $z_j \le m_j/M$, the second follows by definition of $M$ and by the assumption on $z_j$, whereas the third is due to the definition of $\rho$.
    Then, the sum of interest becomes
    \begin{align}
        \sum_{i \in V} \frac{z_i}{\sum_{j \in C^\text{in}(i)} z_j p(j,i)}
        &\le \sum_{i \in V} \frac{m_i}{M \sum_{j \in C^\text{in}(i)} z_j p(j,i)} && \text{since $z_i \le m_i/M$} \nonumber\\
        &\le \sum_{i \in V} \frac{m_i}{\sum_{j \in C^\text{in}(i)} m_j p(j,i) - \abs{C^\text{in}(i)}} && \text{since $M z_j \ge m_j-1$} \nonumber\\
        &\le 2 \sum_{i \in V} \frac{m_i}{\sum_{j \in C^\text{in}(i)} m_j p(j,i)} && \text{by \Cref{eq:second-order-term-random-graph-1}}. \label{eq:second-order-term-random-graph-2}
    \end{align}
    
    Now build a new directed graph $G' = (V', E')$ derived (as in the proof of \citet[Lemma~5]{AlonCDK15}) from graph $G$ by replacing each node $i\in V$ with a clique $K_i$ of size $m_i$ and all its edges having probability $p(i,i)$.
    Additionally add an edge from any $i' \in K_i$ to any $j' \in K_j$ having edge probability $p(i,j)$ if and only if $(i,j) \in E$.
    As a consequence, the right-hand side of \Cref{eq:second-order-term-random-graph-2} is equal to
    \[
        2\sum_{i \in V'} \frac{1}{\sum_{j \in C^\text{in}_{G'}(i)} p(j,i)} \enspace.
    \]
    Observe that the independent sets in $G$ are preserved in $G'$: any independent set $S = \{i : i \in V'\} \in \mathcal{I}(G')$ in $G'$ has a corresponding one $\{i : i'\in S, i' \in K_i\}$ in $G$ with same cardinality and weight, assuming that the weight of $i' \in K_i$ in $G'$ is equal to the weight of $i \in V$ according to the weight assignment in $G$.
    We can reduce this latter sum to the same form as in \Cref{lem:turan-weights-directed} by assigning vertex weights
    \begin{align*}
        w^\text{in}_{i'} = \Biggl(\sum_{j \in C^\text{in}(i)} \frac{m_j}{\sum_{k \in C^\text{in}(i)} m_k} p(j,i)\Biggr)^{-1} ,
        &&
        w^\text{out}_{i'} = \Biggl(\sum_{j \in C^\text{out}(i)} \frac{m_j}{\sum_{k \in C^\text{out}(i)} m_k} p(i,j)\Biggr)^{-1} ,
    \end{align*}
    to each vertex $i' \in K_i$, for all $i \in V$.
    Indeed, the previous sum becomes
    \begin{align}
        \sum_{i \in V'} \frac{1}{\sum_{j \in C^\text{in}_{G'}(i)} p(j,i)}
        &= \sum_{i \in V'} \frac{w^\text{in}_{i}}{\abs{C^\text{in}_{G'}(i)}} \nonumber\\
        &\le 3(\walpha(G', w^\text{in}) + \walpha(G', w^\text{out})) \ln(\abs{V'}+1) && \text{by \Cref{lem:turan-weights-directed}} \nonumber\\
        &\le 3(\walpha(G, w^-) + \walpha(G, w^+)) \ln(\abs{V'}+1) \enspace, \nonumber
    \end{align}
    where the last inequality follows from the fact that $w^\text{in}_{i'} \le w^-_{i}$ and $w^\text{out}_{i'} \le w^+_{i}$ for all $i \in V$ and all $i' \in K_i$.

    We conclude the proof by observing that this newly constructed graph also has 
    \[
        1 + \abs{V'} = 1 + \sum_{i \in V} m_i \le 1 + \sum_{i \in V} (M z_i + 1) \le K + M + 1 \le 2K\left(1 + \frac1{\beta\rho}\right) \le \frac{2K^2}{\beta\rho}
    \]
    vertices, where the final inequality holds because $\beta\rho \le K/2$ by definition, and we used the fact that $K \ge 2$.
\end{proof}


\end{document}